\renewcommand*{\backref}[1]{}
\renewcommand*{\backrefalt}[4]{%
    \ifcase #1%
          \or [Cited on page~#2.]%
          \else [Cited on pages~#2.]%
    \fi%
    }
\let\label=\@gobble}
\let\label=\gobbled@cleveref@label}
  \newcommand\gobbled@cleveref@label[2][]{}
\definecolor{blue_cblind}{HTML}{1A85FF}
\definecolor{red_cblind}{HTML}{D41159}
\crefname{figure}{Fig.}{Figs.}
\crefname{definition}{Defn.}{Defns.}
\crefname{corollary}{Corollary}{Corollaries}
\crefname{lemma}{Lemma}{Lemmas}
\crefname{proposition}{Prop.}{Props.}
\crefname{theorem}{Thm.}{Thms.}
\crefname{remark}{Remark}{Remarks}
\crefname{principle}{Principle}{Principles}
\crefname{lemma}{Lemma}{Lemmas}
\crefname{table}{Tab.}{Tabs.}
\crefname{section}{\S}{\S\S}
\crefname{subsection}{\S}{\S\S}
\crefname{subsubsection}{\S}{\S\S}
\crefname{assumption}{Asm.}{Asms.}
\crefname{appendix}{App.}{Apps.}
\title{Causal Component Analysis}
\author{
Liang Wendong\, $^{1,2}$
\And Armin Keki\'c\, $^{1}$
\And Julius von K\"ugelgen\, $^{1,3}$
\And Simon Buchholz\, $^{1}$
\AND Michel Besserve $^{1}$ \quad\quad
Luigi Gresele\thanks{Shared last author. \quad Code available at \href{https://github.com/akekic/causal-component-analysis}{https://github.com/akekic/causal-component-analysis}.}\, $^{1}$ \quad\quad
Bernhard Sch\"olkopf\footnotemark[1]\, $^{1}$\\[.75em] %
$^1$ Max Planck Institute for Intelligent Systems, T\"ubingen, Germany\\
\quad 
$^2$ ENS Paris-Saclay, Gif-sur-Yvette, France \quad
$^3$ University of Cambridge, United Kingdom \\[.25em]
\texttt{\{wendong.liang,armin.kekic,jvk,simon.buchholz\}@tue.mpg.de}\\
\texttt{\{besserve,luigi.gresele,bs\}@tue.mpg.de}
}
\begin{document}

\maketitle
\begin{abstract}
Independent Component Analysis (ICA) aims to recover independent latent variables from observed mixtures thereof. 
Causal Representation Learning (CRL) aims instead to infer 
causally related (thus often statistically {\em dependent})
latent variables, together with the unknown graph encoding their causal relationships. 
We introduce an intermediate problem termed {\em Causal Component Analysis (CauCA)}.
CauCA can be viewed as a generalization of ICA, modelling the causal dependence among the latent components, and as a special case of CRL. 
In contrast to CRL, it presupposes knowledge of the causal graph, focusing solely on learning the unmixing function and the causal mechanisms.
Any 
impossibility results regarding the recovery of the ground truth
in CauCA %
also apply for CRL, while possibility results may serve as a stepping stone for extensions to CRL.
We characterize CauCA identifiability
from multiple datasets generated through different types of interventions on the latent causal variables.  %
As a corollary, 
 this interventional perspective also leads to new identifiability results 
for nonlinear ICA---a special case of CauCA with an empty graph---requiring strictly fewer datasets than previous results. %
We introduce a likelihood-based approach using normalizing flows 
to estimate both the unmixing function and the causal mechanisms, and demonstrate its effectiveness %
through extensive synthetic experiments in the CauCA and ICA setting.
\end{abstract}

\section{Introduction}
\label{sec:introduction}
Independent Component Analysis (ICA)~\citep{comon1994independent}
is a principled approach to representation learning%
, which aims to
recover independent latent variables, or sources, from observed mixtures thereof. %
Whether this is possible depends on the {\em identifiability} of the model~\citep{wasserman2004all, lehmann2006theory}: %
this characterizes assumptions under which a learned representation provably recovers (or {\em disentangles}) the latent variables, up to some well-specified ambiguities~\citep{hyvarinen2023identifiability, xi2023indeterminacy}. 
A key result shows that, when nonlinear mixtures of the latent components are observed, the model is non-identifiable based on independent and identically distributed (i.i.d.) samples from the generative process~\citep{hyvarinen1999nonlinear, darmois1951construction}. 
Consequently, a learned model may explain the data equally well as the ground truth, even if the corresponding representation is strongly entangled%
, rendering the recovery of the original latent variables fundamentally impossible.

\looseness-1 Identifiability can be recovered under
deviations from the i.i.d.\ assumption, e.g., in the form of temporal autocorrelation~\citep{hyvarinen2017nonlinear, halva2020hidden} or 
spatial dependence~\citep{halva2021disentangling} among the latent components; {\em auxiliary variables} which render the sources {\em conditionally} independent~\citep{hyvarinen2016unsupervised,hyvarinen2019nonlinear,khemakhem2020variational,khemakhem2020ice}; or additional, noisy {\em views}~\citep{gresele2019incomplete}.
An alternative path is to
restrict the class of mixing functions~\citep{zhang2009identifiability,buchholz2022function,gresele2021independent}.

\looseness-1 
Despite appealing identifiability guarantees for ICA, the 
independence assumption
can be limiting, since interesting factors of variation in real-world data are often statistically, or causally, dependent~\citep{trauble2021disentangled}.
This motivates Causal Representation Learning (CRL)~\citep{scholkopf2021toward}, which aims instead to infer 
causally related %
latent variables, together with 
a %
causal graph encoding their causal relationships.
This is challenging if both the graph and the unmixing %
are unknown. 
\looseness=-1 
Identifiability results in CRL therefore require strong assumptions such as counterfactual data~\citep{%
von2021self,brehmer2022weakly,ahuja2022weakly, lyu2022understanding,
daunhawer2022identifiability}, temporal structure~\citep{lippe2022citris, lachapelle2022disentanglement}, a parametric family of latent distributions~\citep{lachapelle2022disentanglement, %
buchholz2023learning, cai2019triad, silva2006learning, xie2020generalized, xie2022identification%
},
 graph sparsity~\citep{
 lachapelle2022disentanglement, kivva2022identifiability,
 lachapelle2022partial}%
, pairs of interventions and {\em genericity}~\cite{von2023nonparametric} or %
restrictions on the mixing function class~\citep{squires2023linear,ahuja2022interventional,varici2023score}. It has been argued that  knowing 
either the graph or the unmixing might help better recover the other, giving rise to a {\em chicken-and-egg} problem in CRL~\citep{brehmer2022weakly}.

\looseness-1 
We introduce an intermediate problem between ICA and CRL which we call 
{\em Causal Component Analysis (CauCA)}, see~\cref{fig:intuition} for an overview.
CauCA can be viewed as a generalization of ICA that 
models causal connections (and thus statistical dependence) among the latent components through a causal Bayesian network~\citep{pearl2009causality}.  
It can also be viewed as a special case of CRL that
presupposes knowledge of the causal graph,
and focuses on learning the unmixing function and causal mechanisms. 

\looseness-1 Since CauCA is solving the CRL problem with partial ground truth information, it is strictly easier than CRL. This implies that impossibility results for CauCA also apply for CRL. 
{\em Possibility} results for CauCA, on the other hand, while not automatically generalizing to CRL, can nevertheless serve as stepping stones, highlighting potential avenues for achieving corresponding results in CRL.
Note also that 
there are only finitely
many possible directed acyclic graphs for a fixed number of nodes, but the space of spurious solutions in representation learning (e.g., in nonlinear ICA) is typically infinite. 
By solving CauCA problems, we can therefore gain insights into
the minimal assumptions required for addressing CRL problems.
CauCA may be applicable to scenarios in which domain knowledge can be used to specify a causal graph for the latent components. 
For instance, in computer vision applications, the image generation process can often be modelled based on a fixed graph~\citep{sauer2021counterfactual,2110.06562}.

\newcommand{\hmdim}{\LARGE}
\newcommand{\cithick}{0.45 mm}
\newcommand{\rlthick}{0.5 mm}
\newcommand{\blthick}{0.4 mm}
\begin{figure}
\centering
    \vspace{-.5em}
    \begin{subfigure}[b]{0.22\textwidth}%
        \centering
        \begin{tcolorbox}[boxrule=0.6pt]
        \adjustbox{width=\linewidth}{%
        \begin{tikzpicture}[node distance=1cm, inner sep=2pt]
        \tikzset{
        latent/.style={circle, draw=black, fill=white, minimum width=33pt, minimum height=20pt, font=\fontsize{18}{14}\selectfont},
        edge/.style={->, line width=1.5pt}
        }
            \centering
            \node (z1) [latent, line width=\cithick] {$Z_1$};
            \node (z2) [latent, below=0.6cm of z1, line width=\cithick] {$Z_2$};
            \node (z3) [latent, below=0.6cm of z2, line width=\cithick] {$Z_3$};
            \node (x1) [obs, draw=red_cblind, fill=red_cblind!20, right=of z1, line width=\cithick] {$X_1$};
            \node (x2) [obs, draw=red_cblind, fill=red_cblind!20, right=of z2, line width=\cithick] {$X_2$};
            \node (x3) [obs, draw=red_cblind, fill=red_cblind!20, right=of z3, line width=\cithick] {$X_3$};
            \edge [line width=\blthick]{z1,z2,z3}{x1,x2,x3};
            \edge [line width=\rlthick,color=red_cblind]{z1}{z2};
            \edge [line width=\rlthick, color=red_cblind]{z2}{z3};
            \node [left = 0.2cm of z1] {  \hspace{0.5cm} };
            \node [below=0.6cm of $(z3.south)!0.5!(x3.south)$, font=\fontsize{18}{14}\selectfont] {$\fb_*\mathbb{P}^0(\Zb)= \mathbb{P}^0(\Xb)$};
            \node [above right=0.3cm and 0.3cm of x1, font=\fontsize{18}{14}\selectfont] {$\Dcal_0$};
        \end{tikzpicture}
        }
        \end{tcolorbox}
    \end{subfigure}%
    \hspace{0.4cm}
    \begin{subfigure}[b]{0.22\textwidth}
        \centering
        
        \begin{tcolorbox}[boxrule=0.6pt]
        \adjustbox{width=\linewidth}{%
        \begin{tikzpicture}[node distance=1cm, inner sep=2pt]
        \tikzset{
        latent/.style={circle, draw=black, fill=white, minimum width=33pt, minimum height=20pt, font=\fontsize{18}{14}\selectfont},
        }
            \centering
            \node (z1) [latent, line width=\cithick] {$Z_1$};
            \node (z2) [latent, below=0.6cm of z1, line width=\cithick] {$Z_2$};
            \node (z3) [latent, below=0.6cm of z2, line width=\cithick] {$Z_3$};
            \node (x1) [obs, draw=red_cblind, fill=red_cblind!20, right=of z1, line width=\cithick] {$X_1$};
            \node (x2) [obs, draw=red_cblind, fill=red_cblind!20, right=of z2, line width=\cithick] {$X_2$};
            \node (x3) [obs, draw=red_cblind, fill=red_cblind!20, right=of z3, line width=\cithick] {$X_3$};
            \edge [line width=\blthick]{z1,z2,z3}{x1,x2,x3};
            \edge [line width=\rlthick,color=red_cblind]{z1}{z2};
            \edge [line width=\rlthick,color=red_cblind]{z2}{z3};
            \node [left = 0.03cm of z1] {\hmdim \faHammer};
            \node [below=0.6cm of $(z3.south)!0.5!(x3.south)$, font=\fontsize{18}{14}\selectfont] {$\fb_*\mathbb{P}^1(\Zb)= \mathbb{P}^1(\Xb)$};
            \node [above right=0.3cm and 0.3cm of x1, font=\fontsize{18}{14}\selectfont] {$\Dcal_1$};
        \end{tikzpicture}
        }
        \end{tcolorbox}
    \end{subfigure}%
    \hspace{0.4cm}
    \begin{subfigure}[b]{0.22\textwidth}
        \centering
        
        \begin{tcolorbox}[boxrule=0.6pt]
        \adjustbox{width=\linewidth}{%
        \begin{tikzpicture}[node distance=1cm, inner sep=2pt]
        \tikzset{
        latent/.style={circle, draw=black, fill=white, minimum width=33pt, minimum height=20pt, font=\fontsize{18}{14}\selectfont},
        }
            \centering
            \node (z1) [latent, line width=\cithick] {$Z_1$};
            \node (z2) [latent, below=0.6cm of z1, line width=\cithick] {$Z_2$};
            \node (z3) [latent, below=0.6cm of z2, line width=\cithick] {$Z_3$};
            \node (x1) [obs, draw=red_cblind, fill=red_cblind!20, right=of z1, line width=\cithick] {$X_1$};
            \node (x2) [obs, draw=red_cblind, fill=red_cblind!20, right=of z2, line width=\cithick] {$X_2$};
            \node (x3) [obs, draw=red_cblind, fill=red_cblind!20, right=of z3, line width=\cithick] {$X_3$};
            \edge [line width=\blthick]{z1,z2,z3}{x1,x2,x3};
            \edge [line width=\rlthick,color=red_cblind]{z2}{z3};
            \node [left = 0.03cm of z2] {\hmdim \faHammer};
            \node [below=0.6cm of $(z3.south)!0.5!(x3.south)$, font=\fontsize{18}{14}\selectfont] {$\fb_*\mathbb{P}^2(\Zb)= \mathbb{P}^2(\Xb)$};
            \node [above right=0.3cm and 0.3cm of x1, font=\fontsize{18}{14}\selectfont] {$\Dcal_2$};
        \end{tikzpicture}
        }
        \end{tcolorbox}
    \end{subfigure}%
    \hspace{0.4cm}
    \begin{subfigure}[b]{0.22\textwidth}
        \centering
        
        \begin{tcolorbox}[boxrule=0.6pt]
        \adjustbox{width=\linewidth}{%
        \begin{tikzpicture}[node distance=1cm, inner sep=2pt]
        \tikzset{
        latent/.style={circle, draw=black, fill=white, minimum width=33pt, minimum height=20pt, font=\fontsize{18}{14}\selectfont},
        }
            \centering
            \node (z1) [latent, line width=\cithick] {$Z_1$};
            \node (z2) [latent, below=0.6cm of z1, line width=\cithick] {$Z_2$};
            \node (z3) [latent, below=0.6cm of z2, line width=\cithick] {$Z_3$};
            \node (x1) [obs, draw=red_cblind, fill=red_cblind!20, right=of z1, line width=\cithick] {$X_1$};
            \node (x2) [obs, draw=red_cblind, fill=red_cblind!20, right=of z2, line width=\cithick] {$X_2$};
            \node (x3) [obs, draw=red_cblind, fill=red_cblind!20, right=of z3, line width=\cithick] {$X_3$};
            \edge [line width=\blthick]{z1,z2,z3}{x1,x2,x3};
            \edge [line width=\rlthick,color=red_cblind]{z1}{z2};
            \node [left = 0.03cm of z3] {\hmdim \faHammer};
            \node [below=0.6cm of $(z3.south)!0.5!(x3.south)$, font=\fontsize{18}{14}\selectfont] {$\fb_*\mathbb{P}^3(\Zb)= \mathbb{P}^3(\Xb)$};
            \node [above right=0.3cm and 0.3cm of x1, font=\fontsize{18}{14}\selectfont] {$\Dcal_3$};
        \end{tikzpicture}
        }
        \end{tcolorbox}
    \end{subfigure}%
    \caption{%
    \textbf{Causal Component Analysis (CauCA).} 
    \looseness-1 
    We posit that observed variables~$\Xb$ are generated through a nonlinear mapping~$\fb$, applied to unobserved latent variables~$\Zb$ which are causally related. 
    The causal structure~$G$ of the latent variables is assumed to be known, while the 
    causal mechanisms~$\mathbb{P}_i(Z_i~|~\Zb_{\text{pa}(i)})$ and the nonlinear mixing function are unknown and to be estimated.
    (Known or observed quantities
    are highlighted in red.) 
    CauCA assumes access to 
    multiple datasets~$\Dcal_k$ that result from stochastic interventions on the latent variables. 
    }
    \label{fig:intuition}
    \vspace{-.5em}
\end{figure}
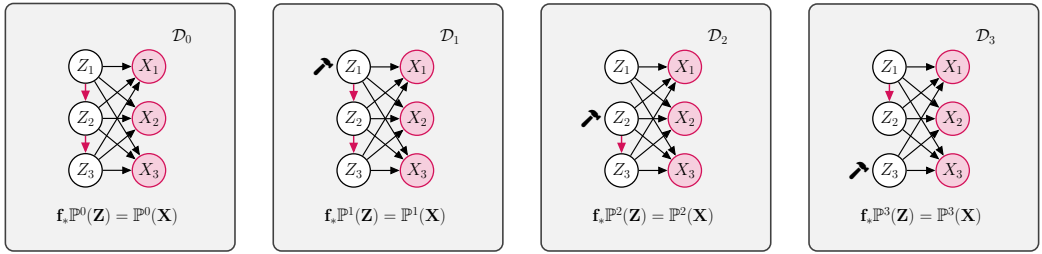
\textbf{Structure and Contributions.}
\looseness-1 
We start by recapitulating preliminaries on causal Bayesian networks and interventions in~\cref{sec:background}.
Next, we introduce  Causal Component Analysis (CauCA) in~\cref{sec:problem_setting}.
Our primary focus lies in characterizing the identifiability of CauCA
from multiple datasets generated through various types of interventions on the latent causal variables~(\cref{sec:theory}).  
Importantly, all our results are applicable to the \textit{nonlinear} and \textit{nonparametric} case. 
    The interventional perspective we take exploits the {\em modularity} of the  
    causal relationships (i.e., the possibility to change one of them without affecting the others)---a concept that was not previously  leveraged in works on nonlinear ICA. This leads to extensions of existing results that require strictly fewer datasets to achieve the same level of identifiability.
We introduce and investigate an estimation procedure for CauCA in
~\cref{sec:experiments},
and conclude with a summary of related work~(\cref{sec:related_work}) and a discussion~(\cref{sec:discussion}).
We highlight the following \textit{main contributions}:
\begin{itemize}[itemsep=0em,topsep=0em,leftmargin=0.75em]
    \item We derive sufficient and necessary conditions for identifiability of CauCA from different types of interventions%
~(\cref{thm1}, \cref{thm:cca_necessary}, \cref{thm:CauCA_multi}).
    \item We prove additional results for the special case with an empty graph, which corresponds to a novel ICA model 
    with interventions on the latent variables~(\cref{thm:ica_single},~\cref{thm:ica_single_necessary},~\cref{cor:ica_multi},~\cref{prop:blocktoreparam}).
    \item We show in synthetic experiments in both the CauCA and ICA settings that our normalizing flow-based estimation procedure effectively recovers the latent causal components~(\cref{sec:experiments}). %
\end{itemize}

\section{Preliminaries}
\label{sec:background}
\textbf{Notation.} 
\looseness-1 We use $\mathbb{P}$ to denote a probability distribution, with density function $p$. 
Uppercase letters $X, Y, Z$ denote unidimensional 
and bold uppercase $\Xb, \Yb, \Zb$ denote multidimensional random variables. Lowercase letters $x, y, z$ denote scalars in $\RR$ and $\xb, \yb, \zb$ denote vectors in $\RR^d$. We use $\llbracket i,j \rrbracket$ to denote the integers from $i
$ to $j$, and $[d]$ denotes the natural numbers from $1$ to $d$.
We use common graphical notation, see~\cref{app:notations} for details. The {\em ancestors} of $i$ in a graph are the nodes $j$ in $G$ such that there is a directed path from $j$ to $i$, and they are denoted by 
 $\anc(i)$. The {\em closure} of the parents (resp. ancestors) of $i$ is defined as $\overline{\pa}(i):= \pa(i)\cup \{i\}$ (resp. $\overline{\anc}(i):=\anc(i)\cup \{i\}$).

A key definition connecting directed acyclic graphs (DAGs) and probabilistic models is the following.%
\begin{definition}[Distribution Markov relative to a DAG~\citep{pearl2009causality}]
\label{def:markov_property} 
A joint probability distribution %
$\mathbb{P}$ is {\em Markov relative to a DAG $G$} %
if it admits the factorization %
${\mathbb{P}(Z_1, \ldots, Z_d)= \prod_{i=1}^d \mathbb{P}_i(Z_i|\Zb_{\text{pa}(i)})}$.
\end{definition}
\cref{def:markov_property} is a key assumption in directed graphical models, where a distribution being Markovian relative to a graph implies that the graph encodes specific independences within the distribution, which can be exploited for efficient computation or data storage~\citep[][\S 6.5]{peters2017elements}.

\textbf{Causal Bayesian networks and interventions.}
\looseness-1
Causal systems induce multiple distributions corresponding to different interventions.
Causal Bayesian networks~\citep[CBNs;][]{pearl2009causality} can be used to represent how these interventional distributions 
are related. %
In a CBN with associated graph~$G$, 
arrows 
signify causal links among 
variables, and the conditional probabilities 
$\mathbb{P}_i\left(Z_{i} \mid \Zb_{\text{pa}(i)}\right)$ 
in the corresponding Markov factorization 
are called {\em causal mechanisms}.\footnote{The term can also be used in structural causal models to denote deterministic functions of endogenous and exogenous variables in {\em assignments}, see~\cite[][Def.~3.1]{peters2017elements}. A central idea in causality~\citep{pearl2009causality, peters2017elements} is that 
causal mechanisms are {\em modular} or {\em independent}, i.e., it is possible to modify some without affecting the others: after an intervention, typically only a subset of the causal mechanisms change.}

\looseness-1 Interventions are modelled in CBNs by replacing a subset $\tau_k\subseteq V(G)$ of the causal mechanisms by new, intervened mechanisms $\{\widetilde{\mathbb{P}}_{j}\left(Z_{j} \mid \Zb_{\text{pa}^{k}(j)}\right)\}_{j\in\tau_k}$, while all other causal mechanisms are left unchanged. Here, $\text{pa}^{k}(j)$ denotes the parents of $Z_j$ in the post-intervention graph in the interventional regime $k$ and $\tau_k$ the intervention targets.
 We will omit the superscript $k$ when the 
 parent set is unchanged 
 and 
assume that interventions do not add new parents, $\text{pa}^{k}(j)\subseteq \text{pa}(j)$.
Unless $\widetilde\PP_j$ is a point mass, we call the intervention \textit{stochastic} or soft.
Further, we say that $\widetilde{\mathbb{P}}_{j}$ is a {\em perfect} intervention if the dependence of the $j$-th variable from its parents is removed ($\text{pa}^{k}(j)=\emptyset$), corresponding to deleting all arrows pointing to $i$%
, sometimes also referred to as {\em graph surgery}~\citep{spirtes2000causation}.%
\footnote{A special case of perfect
interventions are \textit{hard} interventions, 
where $\widetilde{\mathbb{P}}_{j}$ corresponds to a Dirac distribution: ${\mathbb{P}(\Zb \mid \text{do}(Z_j=z_j))=\delta_{Z_j=z_j} \prod_{i\neq j} \mathbb{P}_i\left(Z_{i} \mid \Zb_{\text{pa}(i)}\right)}$.}
An {\em imperfect} intervention is one for which $\text{pa}^{k}(j)\neq \emptyset$. %
We summarise this in the following definition.%
\begin{definition}[CBN
]\label{def:cbn}
\looseness-1
A causal Bayesian network (CBN) consists of a graph $G$, a collection of causal mechanisms $\{\PP_i(Z_i ~|~ \Zb_{\pa(i)}) \}_{i\in [d]}$, and a collection of interventions $\{\{\widetilde{\mathbb{P}}_{j}^k\left(Z_{j} \mid \Zb_{\text{pa}^{k}(j)}\right)\}_{j\in\tau_k}\}_{k\in [K]}$
across $K$ interventional regimes.
The joint probability for interventional regime $k$ is  given by:
\begin{equation} \label{eq:interv_distr}
\mathbb{P}^{k}(\Zb):=
\begin{cases}
    \prod_{i=1}^d \mathbb{P}_i(Z_i~|~\Zb_{\text{pa}(i)}) & k=0\\
    \prod_{j \in \tau_k} \widetilde{\mathbb{P}}^{k}_j\left(Z_{j} \mid \Zb_{\text{pa}^{k}(j)}\right) \prod_{i \notin \tau_k} \mathbb{P}_i\left(Z_{i} \mid \Zb_{\text{pa}(i)}\right) \qquad & \forall k \in [K]
\end{cases}
\end{equation}
where $\PP^0$ is the {\em unintervened}, or observational, distribution, and $\PP^k$ are {\em interventional} distributions.
\end{definition}
\begin{remark}
\looseness-1
The joint probabilities $\PP^k$ in~\eqref{eq:interv_distr} are uniquely factorized into causal mechanisms according to $G$.
    We therefore use the equivalent notation 
    $(G,(\mathbb{P}^k, \tau_k)_{k\in \llbracket 0,K \rrbracket})$, where $\PP^k$ is defined as in \eqref{eq:interv_distr}.
\end{remark}

\section{Problem Setting}
\label{sec:problem_setting}
The main object of our study is a latent variable model termed {\em latent causal Bayesian network (CBN)}.%
\begin{definition}[Latent CBN]\label{definition:latent_cbn}
    A {\em latent CBN} is a tuple $(G,\fb,(\mathbb{P}^k, \tau_k)_{k\in \llbracket 0,K \rrbracket})$, where    %
    ${\fb: \RR^d \rightarrow \RR^d}$ is a diffeomorphism (i.e. invertible with both $\fb$ and $\fb^{-1}$ differentiable).
\end{definition}
\textbf{Data-generating process for Causal Component Analysis (CauCA).} %
In CauCA, we assume that we are given multiple datasets $\{\Dcal_k\}_{k\in \llbracket 0,K \rrbracket}$ generated by a latent CBN $(G,\fb,(\mathbb{P}^k, \tau_k)_{k\in \llbracket 0,K \rrbracket})$: %
\begin{equation}\label{eq:dataset}
    \Dcal_k := \left( %
    \tau_k, 
    \left\{ \xb^{(n,k)} \right \}_{n=1}^{N_k}
    \right)\,, \quad \text{ with } \quad  \xb^{(n,k)} = \fb\left(\zb^{(n,k)}\right)
    \quad \text{ and } \quad \zb^{(n,k)} \overset{\text{i.i.d.}}{\sim} \mathbb{P}^k
     ,
\end{equation}
\looseness-1 where 
$N_k$ denotes the sample size for interventional regime $k$, see~\cref{fig:intuition} for an illustration.
 The graph $G$ is assumed to be known.
 Further, we assume that the intervention targets $\tau_k$ are observed, see~\cref{sec:discussion} for further discussion.
Both the mixing function $\fb$ and the latent distributions $\mathbb{P}^k$ in~\eqref{eq:dataset} are unknown.

The problem we aim to address is the following: given only the graph $G$ and the datasets $\Dcal_k$ in~\eqref{eq:dataset}, can we learn to {\em invert} the mixing function $\fb$ and thus recover the latent variables~$\zb$?
Whether this is possible, and up to what ambiguities, depends on the identifiability of CauCA.
\begin{definition}[Identifiability of CauCA]\label{def:identifiability_cauca}
\looseness-1
\it A model class for CauCA is 
a tuple $(G, \Fcal, \Pcal_G)$, where $\Fcal$ is a class of functions and $\Pcal_G$ is a class of joint distributions Markov relative to $G$. 
A latent CBN $(G,\fb,(\mathbb{P}^k, \tau_k)_{k\in \llbracket 0,K \rrbracket})$ is said to be in $(G, \Fcal, \Pcal_G)$ if $\fb\in \Fcal$ and $\PP^k \in \Pcal_G$ for all $k\in \llbracket 0,K \rrbracket$.
We say $(G, \mathcal{F}, \mathcal{P}_{G})$ has {\em known intervention targets} if all 
its elements share the same $G$ and $(\tau_k)_{k\in \llbracket 0,K \rrbracket}$.

We say that CauCA in $(G, \mathcal{F}, \mathcal{P}_{G})$  is {\em identifiable} up to $\mathcal{S}$ (a set of functions called ``indeterminacy set'') if for any two latent CBNs $(G,\fb,(\mathbb{P}^k, \tau_k)_{k\in \llbracket 0,K \rrbracket})$ and $(G',\fb',(\mathbb{Q}^k, \tau_k)_{k\in \llbracket 0,K \rrbracket})$, the equality of pushforward
$\fb_*\mathbb{P}^k=\fb'_*\mathbb{Q}^k$
$\forall k \in \llbracket 0,K \rrbracket$
implies that $\exists \hb \in \mathcal{S}$ s.t. $\hb=\fb^{\prime-1} \circ \fb$ on the support of $\mathbb{P}$.
\end{definition}
We justify the definition of known intervention targets and generalize them to a more flexible scenario in \cref{app:known_unknown}. \cref{def:identifiability_cauca} is inspired by the identifiability definition of ICA in~\cite[][Def.~1]{buchholz2022function}.
Intuitively, it
states that, if two models in $(G, \Fcal, \Pcal_G)$ give rise to the same distribution, then they are equal up to ambiguities specified by $\Scal$.
Consequently, when attempting to invert $\mathbf{f}$ based on the data in~\eqref{eq:dataset}, the inversion can only be achieved up to those ambiguities.

In the following, we choose $\mathcal{F}$ to be the class of all $\mc{C}^1$-diffeomorphisms $\RR^d \rightarrow \RR^d$, denoted $\Ccal^1(\RR^d)$, %
and suppose the distributions in $\Pcal_G$ are absolutely continuous with full support in $\RR^d$, with the density $p^{k}$ differentiable.

A first question %
is what %
ambiguities are unavoidable by construction in CauCA, similar to scaling and permutation in ICA~\citep[][\S~3.1]{hyvarinen2023identifiability}. The following Lemma characterizes this.
\begin{restatable}{lemma}{lemmaunavoidable}\label{lem:unavoidable}
    For any $(G,\fb,(\mathbb{P}^k, \tau_k)_{k\in \llbracket 0,K \rrbracket})$ in $(G, \mathcal{F}, \mathcal{P}_{G})$, and for any $\hb\in \Scal_{\text{scaling}}$ 
    with 
    \begin{equation}
    \label{eq:s_scaling}
    \begin{aligned}
    \Scal_{\text {scaling}}: &= \left\{ \hb:\mathbb{R}^{d} \rightarrow \mathbb{R}^{d} ~|~ \hb(\zb)=(h_1(z_1), \dots, h_d(z_d)) \text{, } 
    h_i  \text{ is a diffeomorphism in $\RR$} \right\},
    \end{aligned}
    \end{equation}
    \looseness-1 there %
    exists a $(G,\fb \circ \hb,(\mathbb{Q}^k, \tau_k)_{k\in \llbracket 0,K \rrbracket})$ in $(G, \mathcal{F}, \mathcal{P}_{G})$
    s.t.\
    $\fb_*\mathbb{P}^k=(\fb \circ \hb)_*\mathbb{Q}^k$ for all $k\in\llbracket 0,K \rrbracket$.
\end{restatable}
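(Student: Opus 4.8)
The plan is to exhibit the witness model explicitly. Given $\hb\in\Scal_{\text{scaling}}$, set $\mathbb{Q}^k:=(\hb^{-1})_*\mathbb{P}^k$ for every $k\in\llbracket 0,K\rrbracket$. With this choice the required equality of pushforwards is immediate, since $(\fb\circ\hb)_*\mathbb{Q}^k=\fb_*\hb_*(\hb^{-1})_*\mathbb{P}^k=\fb_*\mathbb{P}^k$. Hence the whole content of the lemma is to verify that $(G,\fb\circ\hb,(\mathbb{Q}^k,\tau_k)_{k})$ again lies in $(G,\Fcal,\Pcal_G)$: that $\fb\circ\hb\in\Fcal$, that each $\mathbb{Q}^k\in\Pcal_G$ (in particular that $\mathbb{Q}^k$ is still Markov relative to $G$), and that the collection $(\mathbb{Q}^k)_k$ is realised by a CBN with the \emph{same} graph $G$ and the \emph{same} intervention targets $\tau_k$.

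First I would observe that $\hb$ is itself a $\Ccal^1$-diffeomorphism of $\RR^d$: its Jacobian is diagonal with nonvanishing entries $h_i'(z_i)$, and each coordinate map $h_i$ is a diffeomorphism of $\RR$, hence onto $\RR$. Therefore $\fb\circ\hb\in\Ccal^1(\RR^d)=\Fcal$, and $\hb$, being a bijection of $\RR^d$, carries the full-support distribution $\mathbb{P}^k$ to a full-support distribution $\mathbb{Q}^k$. The density of $\mathbb{Q}^k$ is given by change of variables, $q^k(\zb)=p^k(\hb(\zb))\,|\det D\hb(\zb)|=p^k\bigl(h_1(z_1),\dots,h_d(z_d)\bigr)\prod_{i=1}^d|h_i'(z_i)|$, which is again strictly positive and retains the differentiability of $p^k$ (granting, as is standard, enough smoothness of the $h_i$ for the product of the $|h_i'|$ to be differentiable).

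The heart of the argument is preservation of the Markov factorization, and this is exactly where the coordinate-wise structure of $\hb$ is used. Writing the factorization of $\mathbb{P}^k$ relative to $G$ as $p^k(\zb')=\prod_{i=1}^d p_i^k(z_i'\mid\zb'_{\pa(i)})$ and substituting $\zb'=\hb(\zb)$, I use that the $i$-th entry of $\hb(\zb)$ is $h_i(z_i)$ and that its parent block is $(h_j(z_j))_{j\in\pa(i)}$, together with the fact that $|\det D\hb(\zb)|$ splits as $\prod_i|h_i'(z_i)|$, to obtain $q^k(\zb)=\prod_{i=1}^d q_i^k(z_i\mid\zb_{\pa(i)})$ with $q_i^k(z_i\mid\zb_{\pa(i)}):=p_i^k\bigl(h_i(z_i)\mid (h_j(z_j))_{j\in\pa(i)}\bigr)\,|h_i'(z_i)|$. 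A one-line substitution $u=h_i(z_i)$ shows each $q_i^k(\,\cdot\mid\zb_{\pa(i)})$ integrates to $1$, so these are genuine causal mechanisms, $\mathbb{Q}^k$ is Markov relative to $G$, and by the Remark following \cref{def:cbn} this factorization is the unique one.

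Finally I would read off the interventional structure. For a non-target $i\notin\tau_k$ one has $p_i^k=p_i^0$, hence $q_i^k=q_i^0$, so that mechanism is unchanged across regimes; for a target $j\in\tau_k$ the same construction applied to the intervened mechanism $\widetilde{\mathbb{P}}_j^k(\,\cdot\mid\Zb_{\pa^k(j)})$ yields a valid intervened mechanism $\widetilde{q}_j^k$ (again via $u=h_j(z_j)$), and no parent set changes since $\hb$ does not mix coordinates. Thus $(G,(\mathbb{Q}^k,\tau_k)_k)$ is a bona fide CBN realising the joints $\mathbb{Q}^k$ through \eqref{eq:interv_distr}, with the same $G$ and the same $\tau_k$, so $(G,\fb\circ\hb,(\mathbb{Q}^k,\tau_k)_k)\in(G,\Fcal,\Pcal_G)$, completing the proof. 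The only genuinely load-bearing step is the factorization check: a general, non-coordinatewise diffeomorphism would produce a Jacobian that does not split along the vertices of $G$ and would in general destroy the Markov property — which is precisely why the unavoidable indeterminacy is the coordinatewise class $\Scal_{\text{scaling}}$.
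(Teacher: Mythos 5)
Your proposal is correct and follows essentially the same route as the paper: the paper also takes $\mathbb{Q}^k:=(\hb^{-1})_*\mathbb{P}^k$ (via $\gb=\hb^{-1}\in\Scal_{\text{scaling}}$) and shows that each causal mechanism and each intervened mechanism transforms coordinate-wise, so the same graph $G$ and targets $\tau_k$ realise the $\mathbb{Q}^k$; the only difference is that the paper delegates the factorization-preservation step to its Lemma~\ref{lem:preserves_mecha}, whereas you re-derive that computation inline and then read off the interventional structure, which is a cosmetic reordering of the same argument.
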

\cref{lem:unavoidable} states that, 
as in nonlinear ICA, the ambiguity up to element-wise nonlinear scaling is also unresolvable in CauCA.
However, unlike in nonlinear ICA, there is no permutation ambiguity in CauCA: this is a consequence of %
the assumption of known intervention targets.
The next question is under which conditions we can achieve identifiability up to~\eqref{eq:s_scaling}, and when  the ambiguity set is %
larger.

\section{Theory}
\label{sec:theory}
\looseness-1 
In this section, we investigate the identifiability of CauCA. We first study the general case~(\cref{sec:single_node}), and then consider the special case of ICA in which the graph is empty~%
(\cref{sec:ica}).
\subsection{Identifiability of CauCA}
\label{sec:single_node}
\textbf{Single-node interventions.} We start by characterizing the identifiability of CauCA based on {\em single-node} interventions.
    For datasets $\Dcal_k$ defined as in~\eqref{eq:dataset}, every $k>0$ corresponds to interventions on a single variable: i.e., $\forall k>0, |\tau_k|=1$.
\looseness-1 This is the setting depicted in~\cref{fig:intuition}, where each interventional dataset is generated by intervening on a single latent variable. 
The following assumption will play a key role in our proofs. \looseness=-1
 \begin{assumption}[%
 Interventional discrepancy%
 ] \label{assum:basic}
Given $k\in[K]$, let $p_{\tau_k}$ denote the causal mechanism of $z_{\tau_k}$. We say that a stochastic intervention $\Tilde{p}_{\tau_k}$ satisfies interventional discrepancy if
    \begin{equation}\label{Thm1_basic_assump_m}
        \frac{\partial (\ln p_{\tau_k})}{\partial z_{\tau_k}}\left(z_{\tau_k} \mid \zb_{\text{pa}\left(\tau_k\right)}\right) \neq \frac{\partial (\ln \widetilde{p}_{\tau_k})}{\partial z_{\tau_k}} \left(z_{\tau_k} \mid \zb_{\pa^{k}(\tau_k)}\right)
        \quad \text{almost everywhere (a.e.).}%
    \end{equation}
 \end{assumption}
 \vspace{-0.5em}
\Cref{assum:basic} can be applied to imperfect and perfect interventions alike (in the latter case the conditioning on the RHS disappears).
Intuitively,~\cref{assum:basic} requires that the stochastic intervention is sufficiently different from the causal mechanism, %
formally expressed as the requirement that the partial derivative over $z_{i}$ of the ratio between $p_i$ and $\Tilde{p}_i$ is nonzero a.e. %
One case in which~\cref{assum:basic} is violated is when $\nicefrac{\partial p_i}{\partial z_i}$ and $\nicefrac{\partial \Tilde{p}_i}{\partial z_i}$ are both zero on the same 
open
subset of their support. %
In~\cref{fig:assumption}~{\em (Left)}, we provide an example of such a violation (see~\cref{app:countereg} for its construction), %
and apply a measure-preserving automorphism within the area where the two distributions agree (see~\cref{fig:assumption}~{\em (Right)}).

We can now state our main result for CauCA with single-node interventions.
\begin{figure}
    \begin{subfigure}[t]{0.4\textwidth}
    \centering
    \includegraphics[width= \textwidth]{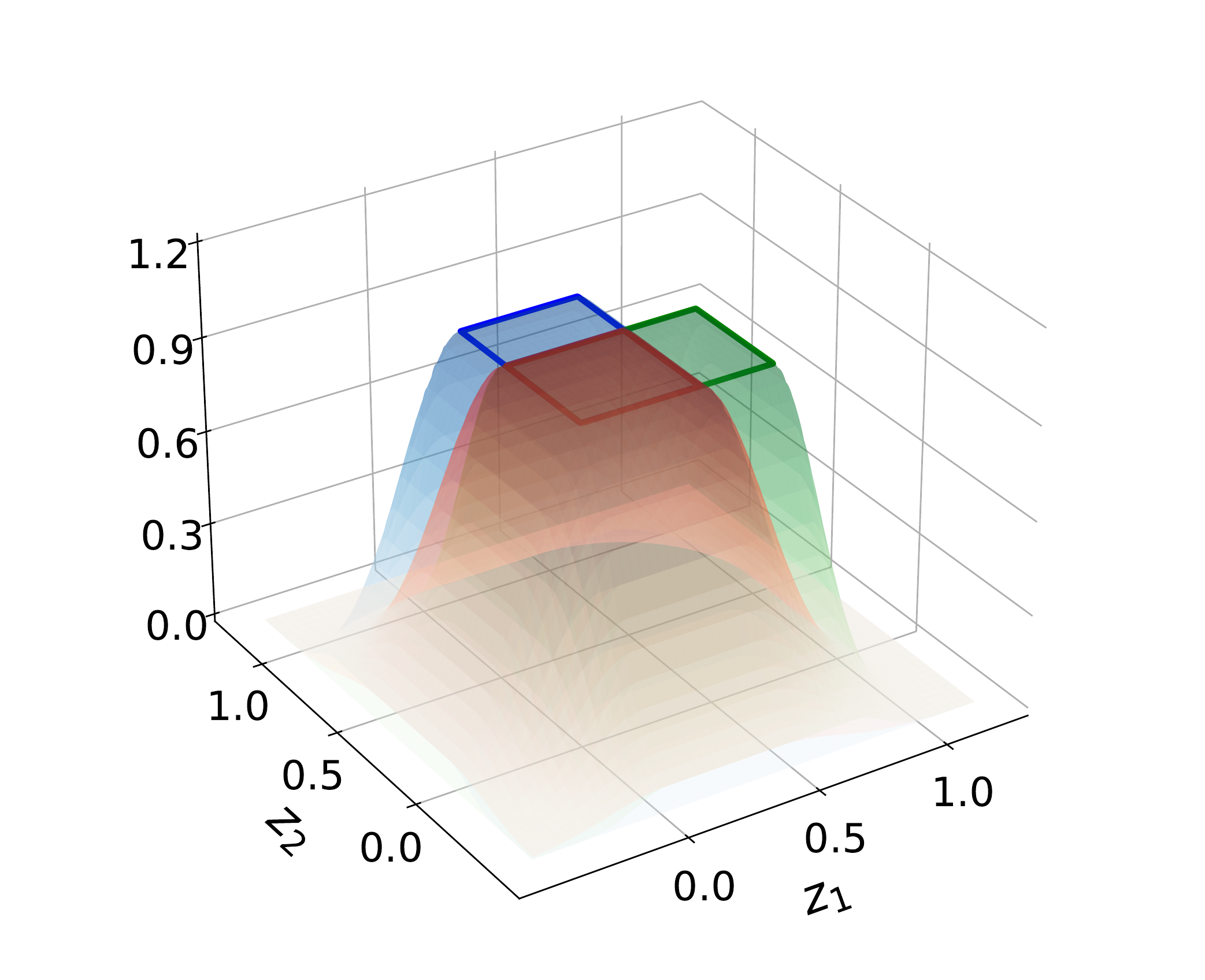}
      \vspace{-2.0em}
    \end{subfigure}%
    \begin{subfigure}[t]{0.3\textwidth}
        \centering
        \includegraphics[width=.9\textwidth,
      ]{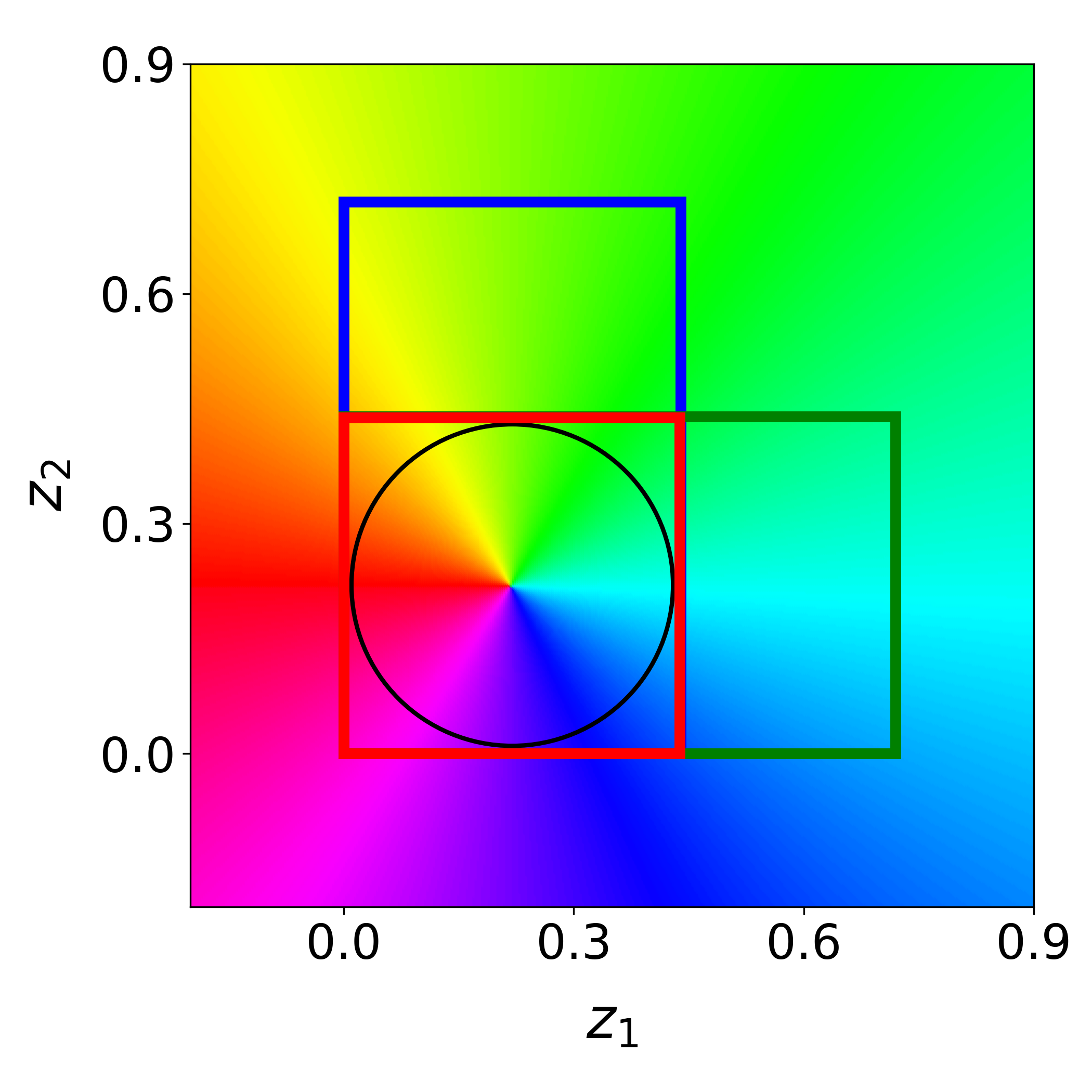}
    \end{subfigure}%
    \begin{subfigure}[t]{0.3\textwidth}
        \centering
        \includegraphics[width=.9\textwidth,
      ]{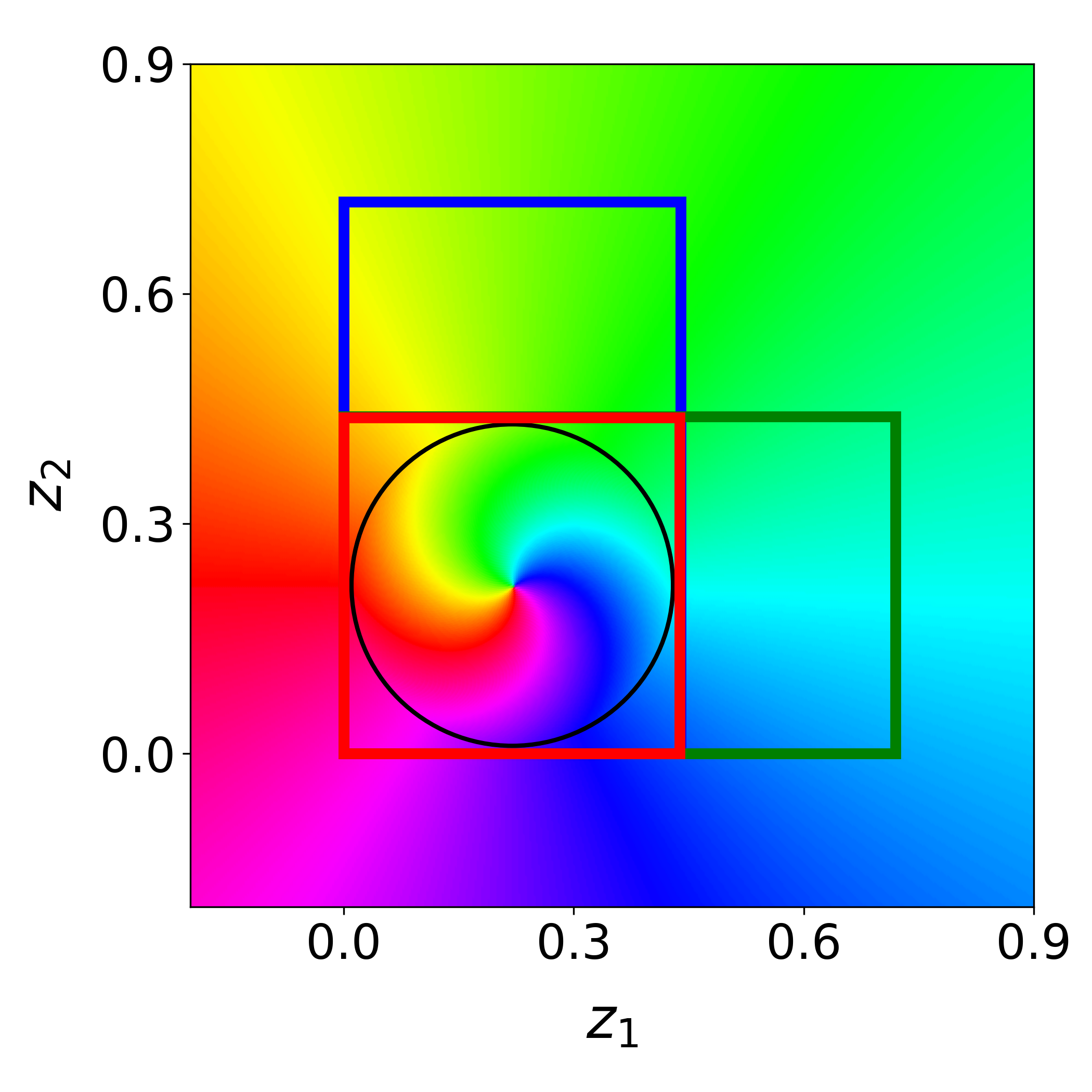}
    \end{subfigure}%
    \caption{%
    \looseness-1  \textbf{Violation of the Interventional Discrepancy Assumption.} The shown   distributions constitute a counterexample to identifiability that violates~\cref{assum:basic} and thus allows for spurious solutions, 
    see~\cref{app:countereg} for technical details. 
    \textit{(Left)} 
    Visualisation of the joint distributions of two independent latent components $z_1$ and $z_2$
    after no intervention (red), and interventions on $z_1$ (green) and $z_2$ (blue). As can be seen, each distribution reaches the same plateau on some rectangular interval of the domain, coinciding within the red square.
    \textit{(Center/Right)} Within the red square where all distributions agree, it is possible to apply a measure preserving automorphism which leaves all distributions unchanged, but non-trivially mixes the latents.
    The right plot shows a distance-dependent rotation around the centre of the black circle, whereas the middle plot show a reference identity transformation.
    }
    \label{fig:assumption}
\end{figure}

\begin{restatable}{theorem}{singleCauCA}
\label{thm1}
    For CauCA in $(G, \mathcal{F}, \mathcal{P}_{G})$, %
\setlist[itemize]{leftmargin=20.0pt}
\begin{itemize}[topsep=0pt,itemsep=0.0pt,nolistsep]
    \item[(i)] \looseness-1 Suppose for each node in $[d]$, there is one (perfect or imperfect) stochastic intervention that satisfies~\cref{assum:basic}. Then CauCA in $(G, \mathcal{F}, \mathcal{P}_{G})$ is identifiable up to
\end{itemize}    
    \begin{equation}
    \mc{S}_{\overline{G}}=\left\{ \hb: \mathbb{R}^{d} \rightarrow \mathbb{R}^{d} | \hb(\zb)= \left( h_i(\zb_{\overline{\text{anc}}(i)}) \right)_{i \in [d]}, \hb \text{ is } \Ccal^1\text{-diffeomorphism}\right\}.
    \end{equation}
\begin{itemize}[topsep=0pt,itemsep=0.0pt,nolistsep]
    \item[(ii)] \looseness-1  Suppose for each node $i$ in $[d]$, there is one perfect stochastic intervention that satisfies~\cref{assum:basic}. Then CauCA in $(G, \mathcal{F}, \mathcal{P}_{G})$ is identifiable up to $\Scal_{\text {scaling}}$.
\end{itemize}
\end{restatable}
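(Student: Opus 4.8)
Let $(G,\fb,(\mathbb{P}^k,\tau_k)_k)$ and $(G,\fb',(\mathbb{Q}^k,\tau_k)_k)$ be two latent CBNs in $(G,\Fcal,\Pcal_G)$ with $\fb_*\mathbb{P}^k=\fb'_*\mathbb{Q}^k$ for all $k$; set $\hb:=\fb'^{-1}\circ\fb$, so $\hb$ is a $\Ccal^1$-diffeomorphism with $\hb_*\mathbb{P}^k=\mathbb{Q}^k$. For each $i\in[d]$ fix a single-node regime $k_i$ ($\tau_{k_i}=\{i\}$) whose intervened mechanism $\widetilde p_i$ (resp.\ $\widetilde q_i$) satisfies \cref{assum:basic}. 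The plan is to constrain $\hb$ by comparing, for each $i$, two readings of the log-density ratio $g_i(\zb):=\ln p^{k_i}(\zb)-\ln p^0(\zb)$: (a) from the factorizations in \eqref{eq:interv_distr}, since a single-node intervention changes only mechanism $i$, $g_i(\zb)=\ln\widetilde p_i(z_i\mid\zb_{\pa^{k_i}(i)})-\ln p_i(z_i\mid\zb_{\pa(i)})$ depends on $\zb_{\overline{\pa}(i)}$ only, and \cref{assum:basic} gives $\partial_{z_i}g_i\neq 0$ a.e.; (b) writing the change-of-variables formula for $\hb_*\mathbb{P}^{k_i}=\mathbb{Q}^{k_i}$ and for $\hb_*\mathbb{P}^0=\mathbb{Q}^0$ and subtracting, the common Jacobian term $\ln|\det J_\hb(\zb)|$ cancels, so $g_i(\zb)=\ln q^{k_i}(\hb(\zb))-\ln q^0(\hb(\zb))=\Delta q_i\big((h_j(\zb))_{j\in\overline{\pa}(i)}\big)$ with $\Delta q_i(\zb'_{\overline{\pa}(i)}):=\ln\widetilde q_i(z'_i\mid\zb'_{\pa^{k_i}(i)})-\ln q_i(z'_i\mid\zb'_{\pa(i)})$.

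\textbf{Part (i).} I would prove by induction on topological depth in $G$ that $h_i$ depends only on $\zb_{\overline{\anc}(i)}$; together with $\hb$ being a $\Ccal^1$-diffeomorphism this yields $\hb\in\mc{S}_{\overline{G}}$. Fix $i$, assume the claim for all $j$ of smaller depth (in particular $j\in\pa(i)$), and note that acyclicity gives $i\notin\overline{\anc}(j)$ and $\overline{\anc}(j)\subseteq\overline{\anc}(i)$ for such $j$. In the chain rule $\partial_{z_l}g_i=\sum_{j\in\overline{\pa}(i)}(\partial_{z'_j}\Delta q_i)\,\partial_{z_l}h_j$, every term with $j\in\pa(i)$ drops out when $l=i$ (then $\partial_{z_i}h_j=0$) and when $l\notin\overline{\anc}(i)$ (then $l\notin\overline{\anc}(j)$, so $\partial_{z_l}h_j=0$). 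Taking $l=i$ gives $\partial_{z_i}g_i=(\partial_{z'_i}\Delta q_i)\,\partial_{z_i}h_i$, which is nonzero a.e.\ by reading (a), forcing $\partial_{z'_i}\Delta q_i\neq 0$ a.e.; taking $l\notin\overline{\anc}(i)$, where $\partial_{z_l}g_i=0$ by reading (a) since $l\notin\overline{\pa}(i)$, gives $0=(\partial_{z'_i}\Delta q_i)\,\partial_{z_l}h_i$, so $\partial_{z_l}h_i=0$ a.e.\ and hence identically by continuity of $\nabla\hb$. This closes the induction (the case $\pa(i)=\emptyset$ being the same argument).

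\textbf{Part (ii).} By part (i), $h_i$ depends only on $\zb_{\overline{\anc}(i)}$, so in a topological order $\nabla\hb$ is lower triangular with diagonal $(\partial_{z_m}h_m)_m$; since $\det\nabla\hb=\prod_m\partial_{z_m}h_m$ never vanishes, each $\partial_{z_m}h_m$ is nowhere zero. As $\overline{\anc}(i)$ is an ancestral set, marginalizing \eqref{eq:interv_distr} with $k=k_i$ perfect (so the $i$-th factor is $\widetilde p_i(z_i)$) onto $\zb_{\overline{\anc}(i)}$ gives the product density $\mu(\zb_{\anc(i)})\widetilde p_i(z_i)$, and similarly $\nu(\zb'_{\anc(i)})\widetilde q_i(z'_i)$ on the $\mathbb{Q}$-side; moreover, using its triangular structure, $\hb$ restricts to a $\Ccal^1$-diffeomorphism $R(\zb_{\anc(i)},z_i)=\big((h_a(\zb_{\anc(i)}))_{a\in\anc(i)},\,h_i(\zb_{\anc(i)},z_i)\big)$ sending $\mu\otimes\widetilde p_i$ to $\nu\otimes\widetilde q_i$. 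Taking $\log$ of the change-of-variables identity for $R$ (whose Jacobian determinant is $|\det J_\rho(\zb_{\anc(i)})|\cdot|\partial_{z_i}h_i|$) and applying $\partial_{z_a}\partial_{z_i}$ for $a\in\anc(i)$ kills every term independent of $z_i$, leaving $\partial_{z_a}\partial_{z_i}\ln\big[\widetilde q_i(h_i)\,|\partial_{z_i}h_i|\big]=0$ for all $a\in\anc(i)$; hence $\widetilde q_i(h_i(\zb_{\anc(i)},z_i))\,|\partial_{z_i}h_i(\zb_{\anc(i)},z_i)|=a(\zb_{\anc(i)})\,b(z_i)$. Integrating in $z_i$ over $\RR$ (using that $z_i\mapsto h_i(\zb_{\anc(i)},z_i)$ is a monotone bijection of $\RR$) forces $a$ constant, and integrating once more gives $\widetilde Q_i(h_i(\zb_{\anc(i)},z_i))=\Phi(z_i)$ with $\Phi$ a function of $z_i$ alone, so $h_i=\widetilde Q_i^{-1}\circ\Phi$ depends only on $z_i$. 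Applying this for every $i$ yields $\hb(\zb)=(h_1(z_1),\dots,h_d(z_d))$ with each $h_i$ a diffeomorphism of $\RR$, i.e.\ $\hb\in\Scal_{\text{scaling}}$.

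\textbf{Expected main obstacle.} The technical heart is the $\mathbb{Q}$-side bookkeeping of reading (b): verifying that the Jacobian determinant cancels between the interventional and observational densities (so that $g_i$ is sensitive only to the single changed mechanism on each side), and then tracking exactly which $h_j$ appear in $\Delta q_i$ so that the scalar factor $\partial_{z'_i}\Delta q_i$ can be isolated and shown nonzero via \cref{assum:basic}. For part (ii), the additional delicate points are (1) that $\hb$ genuinely restricts to a diffeomorphism of the ancestral block $\zb_{\overline{\anc}(i)}$---which relies on the triangular form of $\hb$ in a topological order and on $\det\nabla\hb\neq 0$ forcing nowhere-vanishing diagonal derivatives---and (2) that marginalizing a Markov factorization onto an ancestral set preserves the factorization; once these are settled, the cross-derivative/separability computation and the two integrations are routine.
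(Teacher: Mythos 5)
Your Part~(i) is essentially the paper's proof: the same subtraction of the observational log-density from the single-node interventional one (so the Jacobian term cancels), the same chain-rule bookkeeping over $\overline{\pa}(i)$, and the same use of \cref{assum:basic} followed by an a.e.-to-everywhere step via continuity of the $\Ccal^1$ Jacobian. Your only variations are harmless: you induct on topological depth rather than on a topological index order, and you obtain nonvanishing of the $\QQ$-side score difference from the $l=i$ equation and the $\PP$-side assumption instead of invoking \cref{assum:basic} directly for the second model (the paper does the latter).

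Part~(ii) takes a genuinely different route from the paper, which runs a downward induction along a topological order, marginalizes out the later coordinates (\cref{lem:n-1preserve_measure}), and concludes with the measure-preservation lemma of Brehmer et al.\ (\cref{lem:brehmer}). Your per-node ancestral-block reduction is sound in outline, but the step where you apply $\partial_{z_a}\partial_{z_i}$ to $\ln\bigl[\widetilde q_i(h_i)\,|\partial_{z_i}h_i|\bigr]$ is a genuine gap relative to the stated hypotheses: $\Fcal$ consists of $\Ccal^1$-diffeomorphisms and the densities are only assumed differentiable, so the mixed second derivatives of $h_i$ (and the second derivative of $\ln\widetilde q_i$) that this step requires need not exist; the paper deliberately reserves $\Ccal^2$ assumptions for \cref{prop:blocktoreparam} and proves Theorem~1(ii) without second derivatives. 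Fortunately the detour is also unnecessary: the block change-of-variables identity $\mu(\zb_{\anc(i)})\,\widetilde p_i(z_i)=\nu(\rho(\zb_{\anc(i)}))\,|\det J_\rho(\zb_{\anc(i)})|\,\widetilde q_i(h_i)\,|\partial_{z_i}h_i|$ is already in separated form, so integrating over $z_i$ gives $\mu=\nu(\rho)\,|\det J_\rho|$ and dividing yields $\widetilde p_i(z_i)=\widetilde q_i(h_i(\zb_{\anc(i)},z_i))\,|\partial_{z_i}h_i|$ for every fixed $\zb_{\anc(i)}$, i.e.\ $h_i(\zb_{\anc(i)},\cdot)_*\widetilde{\PP}_i=\widetilde{\QQ}_i$ slice-wise; from there either your CDF integration (valid at $\Ccal^1$, with the sign of $\partial_{z_i}h_i$ constant by continuity, essentially \cref{lemma:1d_MPA}) or the paper's \cref{lem:brehmer} finishes. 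Finally, the two facts you defer to your "obstacle" paragraph do need to be written out—injectivity of the triangular block map (monotone-slice argument) and surjectivity of the slice $z_i\mapsto h_i(\zb_{\anc(i)},z_i)$ onto $\RR$ (which follows from slice-wise measure preservation together with full support, or from surjectivity of $\hb$ plus injectivity of the ancestor block)—but these are minor compared to the regularity issue.
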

\cref{thm1}~{\em (i)} states that for single-node stochastic interventions, perfect or imperfect, we can achieve identifiability up to an indeterminacy set %
where %
each reconstructed variable can at most be a mixture of ground truth variables corresponding to nodes in the closure of the ancestors of $i$.
While this ambiguity set is larger than the one in~\cref{eq:s_scaling}, it is still a non-trivial reduction in ambiguity with respect to the spurious solutions which could be generated without~\cref{assum:basic}. %
A related result in~\cite[][Thm.~1]{squires2023linear} shows that for {\em linear mixing, linear latent SCM and unknown graph}, $d$ interventions are sufficient and necessary for recovering $\overline{G}$ (the transitive closure of the ground truth graph $G$) and the latent variables up to elementwise reparametrizations. \cref{thm1}~{\em (i)} instead proves that $d$ interventions are sufficient for
identifiability up to mixing of variables corresponding to the coordinates in $\overline{G}$
 for {\em arbitrary $\Ccal^1$-diffeomorphisms $\Fcal$, non-parametric $\Pcal_G$ and known graph}.

\cref{thm1}~{\em (ii)} shows that if we further constrain the set of interventions to {\em perfect} single-node, stochastic interventions only, then we can achieve a much stronger identifiability---i.e., identifiability up to scaling, which as discussed in~\cref{sec:problem_setting} is the best one we can hope to achieve in our problem setting without further assumptions. 
In short, the unintervened distribution together with one single-node, stochastic perfect intervention per node is sufficient to give us the strongest achievable identifiability in our considered setting. 
In \cref{app:structure-preserving}, we also discuss identifiability when only imperfect stochastic interventions are available. In short, with a higher number of imperfect interventions, the ambiguity in \cref{thm1}~{\em (i)} can be further constrained to the closure of parents, instead of the closure of ancestors.

\Cref{thm1}~{\em (ii)} shows that $d$ datasets generated by single-node interventions on the latent causal variables are
sufficient for identifiability up to $\Scal_{\text{scaling}}$. We additionally prove below that $d$ interventional datasets %
are {\em necessary}---i.e., 
for CauCA, and for any nonlinear causal representation learning problem,
$d-1$ single-node interventions are not sufficient for identifiability up to $\Scal_{\text{scaling}}$.
\begin{restatable}{proposition}{ccanecessary}
\label{thm:cca_necessary}
Given a DAG $G$, with $d-1$  perfect stochastic single node interventions on distinct targets, if the remaining unintervened node has any parent in $G$, $(G, \mathcal{F}, P_{G} )$ is not identifiable up to
\begin{equation}
\Scal_{\text{reparam}}:= \biggl\{\gb:\mathbb{R}^{d}\to \mathbb{R}^{d}~|~ \gb=\Pb \circ \hb \text{, } \mathbf{P} \text{ is a permutation matrix, } \hb \in \Scal_{\text{scaling}} \biggr\}.
\end{equation}
\end{restatable}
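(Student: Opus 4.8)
The plan is to prove non-identifiability by exhibiting, inside the model class $(G,\mathcal{F},\mathcal{P}_G)$, two latent CBNs that share the graph $G$ and the intervention targets $(\tau_k)_{k\in\llbracket 0,d-1\rrbracket}$, induce \emph{identical} observed distributions in every regime, yet differ by a map $\mathbf{h}=\mathbf{f}'^{-1}\circ\mathbf{f}$ lying outside $\mathcal{S}_{\text{reparam}}$. Name $i^{*}$ the unique node that is never a target (it exists and is unique since there are $d$ nodes and $d-1$ distinct single-node targets), and fix $p\in\text{pa}(i^{*})$, which exists by hypothesis; note $p\neq i^{*}$ since $G$ is a DAG. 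For the first (``ground truth'') latent CBN I would take the latent distributions as simple as possible: let all causal mechanisms $\mathbb{P}_i(Z_i\mid\mathbf{Z}_{\text{pa}(i)})$ and all perfect intervention mechanisms $\widetilde{\mathbb{P}}^{k}_{\tau_k}(Z_{\tau_k})$ be non-degenerate one-dimensional Gaussians, chosen pairwise distinct so the interventions are genuine and stochastic, that do not depend on their conditioning variables, so that each $\mathbb{P}^{k}$ is a product of one-dimensional Gaussians --- hence absolutely continuous with full support, differentiable, trivially Markov relative to $G$, with perfect single-node interventions on $\tau_k$ --- and let $\mathbf{f}\in\mathcal{C}^1(\mathbb{R}^d)$ be arbitrary.

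The candidate reparametrization is the linear shear $\mathbf{h}(\mathbf{z})=\mathbf{z}+z_p\,\mathbf{e}_{i^{*}}$, i.e.\ $\mathbf{h}$ is the identity on every coordinate except the $i^{*}$-th, which it sends to $z_{i^{*}}+z_p$. This is a $\mathcal{C}^1$-diffeomorphism of $\mathbb{R}^d$ with unit Jacobian, so $\mathbf{f}':=\mathbf{f}\circ\mathbf{h}^{-1}\in\mathcal{C}^1(\mathbb{R}^d)$; and $\mathbf{h}\notin\mathcal{S}_{\text{reparam}}$ because its $i^{*}$-th output depends non-trivially on the two distinct inputs $z_{i^{*}}$ and $z_p$, whereas every element of $\mathcal{S}_{\text{reparam}}$, being a permutation composed with an element-wise map, has each output coordinate a function of a single input coordinate. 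I would then define the second latent CBN as $(G,\mathbf{f}',(\mathbf{h}_{*}\mathbb{P}^{k},\tau_k)_{k\in\llbracket 0,d-1\rrbracket})$. By change of variables (substituting $z_{i^{*}}=z'_{i^{*}}-z'_{p}$, with unit Jacobian hence no density correction), the density of $\mathbf{h}_{*}\mathbb{P}^{k}$ at $\mathbf{z}'$ is $p^{k}$ evaluated at that substitution; since no mechanism of the first CBN depends on $z_{i^{*}}$, the only altered factor is the one for $i^{*}$, which becomes a conditional density of $z'_{i^{*}}$ depending only on $z'_{p}\in\text{pa}(i^{*})$. Hence $\mathbf{h}_{*}\mathbb{P}^{k}$ is again Markov relative to $G$, factorizes as in~\eqref{eq:interv_distr} with the $\tau_k$-mechanisms unchanged (so still perfect and single-node), and its density is still absolutely continuous, full-support and differentiable; thus the second CBN lies in $(G,\mathcal{F},\mathcal{P}_G)$ with the same targets. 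Finally $\mathbf{f}'_{*}\mathbb{Q}^{k}=(\mathbf{f}\circ\mathbf{h}^{-1})_{*}\mathbf{h}_{*}\mathbb{P}^{k}=\mathbf{f}_{*}\mathbb{P}^{k}$ for all $k$, and $\mathbf{f}'^{-1}\circ\mathbf{f}=\mathbf{h}$ everywhere, in particular on the support $\mathbb{R}^d$ of $\mathbb{P}$; since $\mathbf{h}\notin\mathcal{S}_{\text{reparam}}$, identifiability up to $\mathcal{S}_{\text{reparam}}$ fails.

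The single delicate point --- and the reason the parent hypothesis is indispensable --- is the verification that $\mathbf{h}_{*}\mathbb{P}^{k}$ remains Markov relative to $G$ with perfect single-node interventions on the same targets: mixing $z_p$ into the $i^{*}$-th coordinate is admissible exactly because $p\to i^{*}$ is an edge of $G$, so $z'_{p}$ is permitted to enter the causal mechanism of $i^{*}$; had $i^{*}$ no parent, any genuine shear into its coordinate would create a conditional dependence forbidden by $G$, and no counterexample of this form could be built. Choosing the ground-truth latent distributions to be products removes a secondary nuisance, namely having to re-express the (potentially $z_{i^{*}}$-dependent) mechanisms of the children of $i^{*}$ under the substitution; with products these factors are untouched. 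I expect everything else --- that $\mathbf{h}$ and $\mathbf{f}'$ are $\mathcal{C}^1$-diffeomorphisms, the pushforward identity, the preservation of absolute continuity / full support / differentiability, and $\mathbf{h}\notin\mathcal{S}_{\text{reparam}}$ --- to be routine.
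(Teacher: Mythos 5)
Your proof is correct and takes essentially the same approach as the paper: both construct an explicit counterexample in which a linear shear mixes the unintervened node's coordinate with that of one of its parents, and this shear can be absorbed into a redefined causal mechanism for the unintervened node precisely because the edge from that parent exists, so the spurious model stays Markov relative to $G$ with the same perfect single-node interventions. The only difference is the direction of the instantiation — the paper fixes a ground truth with a Gaussian mechanism $Z_d\sim\mathcal{N}(Z_{d-1},1)$ and flips the dependence in the spurious model, whereas you start from an independent product ground truth and introduce the dependence only in the spurious model, which also conveniently avoids having to adjust the mechanisms of any children of the unintervened node.
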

A similar result in \cite{squires2023linear} shows that one intervention per node is necessary when the underlying graph is unknown. \Cref{thm:cca_necessary} shows that this is still the case, \textit{even when the graph is known}.

\textbf{Fat-hand interventions.}
A generalization of single-node interventions are {\em fat-hand interventions}---i.e., interventions where $|\tau_k|>1$.
In this section, we study this more general setting and focus on a weaker form of identification than for single-node intervention. %
\begin{assumption}[%
 Block-interventional discrepancy%
 ] \label{assum:block-ID}
We denote $\QQ^0_{\tau_{k}}$ as the causal mechanism of $\Zb_{\tau_k}$ in the unintervened regime.
For each $k \in [K]$, we denote $\QQ^s_{\tau_{k}}$ as the intervention mechanism in the $s$-th interventional regime that has $\tau_k \subseteq [d]$ as targets of intervention, i.e., $\QQ^s_{\tau_{k}}$ is a (conditional) joint distribution over $\Zb_{\tau_k}$. Then the {\em Block-interventional discrepancy} for $\tau_k$ is defined as follows:
    \begin{itemize}[itemsep=0em,topsep=0em,leftmargin=0.75em]
        \item if there is no arrow into $\tau_k$ (i.e., $\tau_k$ has no parents in $[d]\setminus\tau_k$), suppose that there are $n_k$ interventions with target $\tau_k$ such that the following $n_k \times n_k$ matrix
        \begin{equation}\label{thm:CauCA_multi_matrix}
        \Mb_{\tau_k} := \begin{pmatrix}
            \frac{\partial}{\partial z_1}(\ln q^1_{\tau_k} - \ln q^0_{\tau_k})(\zb_{\tau_k}) & \ldots & \frac{\partial}{\partial z_{n_k}}(\ln q^1_{\tau_k} - \ln q^0_{\tau_k})(\zb_{\tau_k})\\
            \vdots & & \vdots \\
            \frac{\partial}{\partial z_1}(\ln q^{n_k}_{\tau_k} - \ln q^0_{\tau_k})(\zb_{\tau_k}) & \ldots & \frac{\partial}{\partial z_{n_k}}(\ln q^{n_k}_{\tau_k} - \ln q^0_{\tau_k})(\zb_{\tau_k})
        \end{pmatrix}
        \end{equation}
    is invertible for $\zb_{\tau_k} \in \RR^{n_k}$ almost everywhere, where $q_{\tau_{k},j}^{s}$ denotes the $j$-th marginal of $q_{\tau_{k}}^{s}$, $z_{\tau_{k}, j}$ denotes the $j$-th dimension of $\zb_{\tau_{k}}$, and 
    $s=0$ denotes the unintervened (observational) regime;
    \item otherwise, suppose that there are $n_k+1$ interventions with target $\tau_k$ such that
    the matrix (\ref{thm:CauCA_multi_matrix}) is invertible for $\zb_{\tau_k} \in \RR^{n_k}$ almost everywhere, where $q_{\tau_{k},j}^{s}$ and $z_{\tau_{k}, j}$ are defined as before, but
    $s=0, \ldots, n_k$ now indexes the $n_k+1$ interventions---i.e.,  \emph{without an unintervened regime}.
    \end{itemize}
\end{assumption}
\cref{assum:block-ID} is tightly connected to~\cref{assum:basic}: if $G$ has no arrow, if $\forall k: n_k=1$,
then \cref{assum:block-ID} is reduced to~\cref{assum:basic}. However, for any $G$ that has arrows, the number of interventional regimes required by~\cref{assum:block-ID} is strictly larger than~\cref{assum:basic}.

\begin{restatable}{theorem}{CauCAmulti}
\label{thm:CauCA_multi}
    Given any DAG $G$. Suppose that our datasets encompass interventions over all variables in the latent graph, i.e., $\bigcup_{k \in [K]} \tau_k=[d]$. For all $k\in[K]$, suppose the targets of interventions are strict subsets of all variables, i.e., $|\tau_k|=n_k$, $n_k \in [d-1]$. Suppose the interventions over $\tau_k$ are perfect, i.e. the intervention mechanisms $\QQ^s_{\tau_k}$ are joint distributions over $\Zb_{\tau_k}$ without conditioning on other variables. Suppose \cref{assum:block-ID} is satisfied for $\tau_k$.

    Then CauCA in $\left(G, \mc{F}, \mathcal{P}_G \right)$ is {\em block-identifiable} (following \cite{von2021self}): namely, if for all $k\in[K]$, $\fb_*\mathbb{P}^k=\fb'_*\mathbb{Q}^k$, then for $\bm{\varphi}:= \fb'^{-1}\circ \fb$, for all $k\in[K]$,
    \begin{equation}
    [\bm{\varphi}(\zb)]_{\tau_k}=\bm{\varphi}_{\tau_k}\left(\zb_{\tau_k}\right).
    \end{equation}
\end{restatable}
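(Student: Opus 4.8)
The plan is to reduce block-identifiability to a linear-algebra statement about the Jacobian of $\bm{\varphi} := \fb'^{-1}\circ\fb$, exploiting the modularity of the interventions together with the change-of-variables formula. First I would note that $\fb_*\mathbb{P}^k = \fb'_*\mathbb{Q}^k$ for all $k$ is equivalent to $\bm{\varphi}_*\mathbb{P}^k = \mathbb{Q}^k$, and that since all distributions in $\mathcal{P}_G$ are absolutely continuous with positive differentiable densities and $\bm{\varphi}$ is a $\mathcal{C}^1$-diffeomorphism, change of variables gives the pointwise identity $\ln p^k(\zb) = \ln q^k(\bm{\varphi}(\zb)) + \ln\lvert\det \mathbf{J}_{\bm{\varphi}}(\zb)\rvert$ for every regime $k$.

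Next, fix a target set $\tau := \tau_k$ and select two regimes with this same target: the observational regime together with the $n_k$ interventions when $\tau$ has no parents outside $\tau$, or $n_k+1$ of the interventions when $\tau$ has parents (the observational regime is then unusable, since the mechanism of $\Zb_\tau$ still depends on its parents there --- which is exactly why \cref{assum:block-ID} asks for one extra regime in that case). Subtracting the change-of-variables identity for regime $s$ from that of the reference regime cancels the shared $\ln\lvert\det\mathbf{J}_{\bm{\varphi}}\rvert$ term. On the left, modularity and the Markov factorization relative to $G$ imply that the product of the mechanisms indexed by $i\notin\tau$ is identical in both regimes, so the difference of log-densities collapses to $\Delta^s(\zb_\tau):=\ln q^s_\tau(\zb_\tau)-\ln q^0_\tau(\zb_\tau)$, a function of $\zb_\tau$ alone --- here perfectness of the intervention is essential, since it removes any conditioning on the parents of $\tau$. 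The identical argument applied to the primed model (which shares $G$, shares the targets $\tau_k$, and has perfect interventions) shows the right-hand side equals $\widetilde\Delta^s\big([\bm{\varphi}(\zb)]_\tau\big)$, a function of $[\bm{\varphi}(\zb)]_\tau$ alone. Hence $\Delta^s(\zb_\tau)=\widetilde\Delta^s\big([\bm{\varphi}(\zb)]_\tau\big)$ for all admissible $s$ and all $\zb$.

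I would then differentiate this identity in two ways. Differentiating with respect to $z_m$ for $m\in\tau$ and assembling over $s$ and $m$ yields, by the chain rule, the matrix identity $\mathbf{M}_\tau(\zb_\tau)=\widetilde{\mathbf{M}}\big([\bm{\varphi}(\zb)]_\tau\big)\,\big[\mathbf{J}_{\bm{\varphi}}(\zb)\big]_{\tau,\tau}$, where $\mathbf{M}_\tau$ is precisely the block-interventional-discrepancy matrix of~\eqref{thm:CauCA_multi_matrix}, $\widetilde{\mathbf{M}}$ is its primed-model analogue, and $\big[\mathbf{J}_{\bm{\varphi}}(\zb)\big]_{\tau,\tau}$ is the submatrix of the Jacobian with rows and columns in $\tau$. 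Since \cref{assum:block-ID} makes $\mathbf{M}_\tau$ invertible a.e.\ and a product of square matrices is invertible iff both factors are, $\widetilde{\mathbf{M}}\big([\bm{\varphi}(\zb)]_\tau\big)$ is invertible at the relevant points. Differentiating the same identity with respect to $z_j$ for $j\notin\tau$ gives $\widetilde{\mathbf{M}}\big([\bm{\varphi}(\zb)]_\tau\big)\cdot\big(\partial\varphi_l/\partial z_j\big)_{l\in\tau}=\mathbf{0}$, so invertibility of $\widetilde{\mathbf{M}}$ forces $\partial\varphi_l/\partial z_j=0$ a.e.\ for all $l\in\tau$ and $j\notin\tau$; since $\bm{\varphi}\in\mathcal{C}^1$ these derivatives are continuous, hence vanish identically, i.e.\ $[\bm{\varphi}(\zb)]_\tau$ depends only on $\zb_\tau$. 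Running this for every $k$ gives the claim; the hypotheses $\bigcup_k\tau_k=[d]$ and $|\tau_k|\le d-1$ ensure the blocks jointly constrain all coordinates without the argument becoming vacuous.

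The step I expect to be the main (though largely bookkeeping) obstacle is the handling of the ``almost everywhere'' qualifiers: \cref{assum:block-ID} gives invertibility of $\mathbf{M}_\tau$ for a.e.\ $\zb_\tau\in\RR^{n_k}$, whereas we need it at the moving point $[\bm{\varphi}(\zb)]_\tau$ as $\zb$ ranges over $\RR^d$ --- a Fubini argument transfers a null set in $\RR^{n_k}$ to a null set in $\RR^d$, and continuity of the derivatives of $\bm{\varphi}$ then upgrades ``vanishing a.e.'' to ``vanishing everywhere''. A secondary point to get right is that the collapse of the right-hand side to a function of $[\bm{\varphi}(\zb)]_\tau$ relies on the primed model inheriting the same graph, the same targets, and perfectness of the interventions; without perfect interventions on the primed side, the difference of its log-densities would still carry a dependence on the parents of $\tau$ and the argument would not close.
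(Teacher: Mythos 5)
Your proposal is correct and follows essentially the same route as the paper's proof: write the change-of-variables identity in each regime, subtract two regimes sharing the target $\tau_k$ so that the log-Jacobian term and the unintervened mechanisms cancel (using perfectness and the case split in \cref{assum:block-ID} over whether $\tau_k$ has parents outside the block), differentiate in $z_j$ for $j\notin\tau_k$, invert the discrepancy matrix almost everywhere, and upgrade to everywhere by continuity of the $\Ccal^1$ derivatives. The only difference is bookkeeping: the paper invokes invertibility of $\Mb_{\tau_k}$ directly for the alternative model's mechanisms evaluated at $[\bm{\varphi}(\zb)]_{\tau_k}$, whereas you impose it on the ground-truth side and transfer it via the chain-rule identity $\Mb_{\tau_k}(\zb_{\tau_k})=\widetilde{\Mb}_{\tau_k}\bigl([\bm{\varphi}(\zb)]_{\tau_k}\bigr)\,[D\bm{\varphi}(\zb)]_{\tau_k,\tau_k}$ together with a Fubini argument — a harmless, slightly more careful variant of the same argument.
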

We illustrate the above identifiability results through an example in \cref{table:CauCA_eg}.
\begin{table}[htbp]
\centering
\caption{
\textbf{Overview of identifiability results.} For the DAG $Z_1 \xrightarrow{} Z_2 \xrightarrow{} Z_3$ from~\cref{fig:intuition}, we summarise the guarantees provided by our theoretical analysis in~\cref{sec:single_node} for representations learnt by maximizing the likelihoods $\PP^k_{\theta}(X)$ for different sets of interventional regimes.
}
\vspace{0.25em}\resizebox{\textwidth}{!}{
\begin{tabular}{m{6.1cm} m{5.3cm} m{1.5cm}}
\toprule
\textbf{Requirement of interventions} & \textbf{Learned representation} $\hat{\mathbf{z}}=\hat{\mathbf{f}}^{-1}(\mathbf{x})$ & \textbf{Reference} \\
\midrule
$1$ intervention per node & $\left[h_1(z_1), h_2(z_1,z_2), h_3(z_1,z_2,z_3)\right]$ & \cref{thm1}~{\em(i)}\\
\addlinespace
$1$ perfect intervention per node & $\left[h_1(z_1), h_2(z_2), h_3(z_3)\right]$ & \cref{thm1}~{\em(ii)}\\
\addlinespace
$1$ intervention per node for $z_1$ and $z_2$, plus $|\overline{\pa}(3)|(|\overline{\pa}(3)|{+}1)=2{\times}3$ imperfect interventions on $z_3$ with ``variability'' assumption & $\left[h_1(z_1), h_2(z_2), h_3(z_2,z_3)\right]$ & \cref{prop:app_soft} \\
\addlinespace
$1$ perfect intervention on $z_1$ and $2{+}1{=}3$ perfect fat-hand interventions on $(z_2, z_3)$ & 
$\left[h_1(z_1), h_2(z_2, z_3), h_3(z_2,z_3)\right]$ & \cref{thm:CauCA_multi}\\
\bottomrule
\end{tabular}
}
\label{table:CauCA_eg}
\end{table}

\subsection{Special Case: ICA with stochastic interventions on the latent components}
\label{sec:ica}

An important special case of CauCA occurs when the graph $G$ is empty, corresponding to independent latent components.
This defines a nonlinear ICA generative model where,
in addition to the mixtures, 
we observe a variable $\tau_k$ which indicates {\em which latent distributions} change in the interventional regime $k$, while {\em every other distribution is unchanged}.\footnote{The relationship between CauCA and nonlinear ICA is discussed in more detail in~\cref{app:relationship_to_ICA}.} %
This nonlinear ICA generative model is closely related to similar models with observed {\em auxiliary variables}~\citep{hyvarinen2019nonlinear, khemakhem2020variational}: it is natural to interpret $\tau_k$ itself as an auxiliary variable. As we will see, our interventional interpretation allows us to derive novel results and re-interpret existing ones.
Below, we characterize identifiability for this setting.

\textbf{Single-node interventions.}
We first focus on {\em single-node} stochastic interventions,  where the following result
proves that we can achieve the same level of identifiability as in~\cref{thm1}~{\em (ii)}, with one less intervention than in the case where the graph is non-trivial.

\begin{restatable}{proposition}{icasingle}
\label{thm:ica_single}
Suppose that $G$ is the empty graph, %
and that there are $d-1$ variables intervened on, with one single target per dataset, such that \cref{assum:basic} is satisfied.
Then CauCA (in this case, ICA) in $(G, \mc{F},\mc{P}_G)$ %
is identifiable up to
${\Scal_{\text {scaling}}}$ defined as in~\cref{eq:s_scaling}.
    
\end{restatable}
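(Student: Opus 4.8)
The plan is to fix two latent CBNs $(G,\fb,(\PP^k,\tau_k)_{k})$ and $(G,\fb',(\mathbb{Q}^k,\tau_k)_{k})$ in $(G,\mathcal{F},\mathcal{P}_G)$ with $\fb_*\PP^k=\fb'_*\mathbb{Q}^k$ for all $k\in\llbracket 0,d-1\rrbracket$, to set $\bm{\varphi}:=\fb'^{-1}\circ\fb$ (a $\Ccal^1$-diffeomorphism of $\RR^d$, since $\fb,\fb'$ are), and to show $\bm{\varphi}\in\Scal_{\text{scaling}}$. Pushing forward gives $\bm{\varphi}_*\PP^k=\mathbb{Q}^k$, equivalently the density identities $p^k(\zb)=q^k(\bm{\varphi}(\zb))\,\lvert\det D\bm{\varphi}(\zb)\rvert$ for every $k$. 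After relabelling nodes we may assume $\tau_k=\{k\}$ for $k\in[d-1]$ while node $d$ is never intervened on. Because $G$ is empty, each $\PP^k,\mathbb{Q}^k$ factorises into univariate marginals and every intervention is trivially perfect ($\mathrm{pa}^k(k)=\emptyset$), so \cref{assum:basic} becomes $(\ln\widetilde{p}_k-\ln p_k)'\neq 0$ a.e.\ on $\RR$ for each $k\in[d-1]$.

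\textbf{Step 1: the $d-1$ intervened coordinates.} For each $k\in[d-1]$, divide the regime-$k$ density identity by the observational one. The Jacobian and all unchanged marginals cancel, leaving $\widetilde{p}_k(z_k)/p_k(z_k)=\widetilde{q}_k(\varphi_k(\zb))/q_k(\varphi_k(\zb))$, where $\varphi_k$ is the $k$-th component of $\bm{\varphi}$. Writing $R_k:=\ln\widetilde{p}_k-\ln p_k$ and $S_k:=\ln\widetilde{q}_k-\ln q_k$, this reads $R_k(z_k)=S_k(\varphi_k(\zb))$. Differentiating in $z_k$ yields $R_k'(z_k)=S_k'(\varphi_k(\zb))\,\partial_k\varphi_k(\zb)$, so \cref{assum:basic} ($R_k'\neq 0$ a.e.) forces $S_k'(\varphi_k(\zb))\neq 0$ for a.e.\ $\zb$; differentiating instead in any $z_j$ with $j\neq k$ gives $0=S_k'(\varphi_k(\zb))\,\partial_j\varphi_k(\zb)$, whence $\partial_j\varphi_k\equiv 0$ (a.e., and then everywhere by continuity of $D\bm{\varphi}$). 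Thus $\varphi_k(\zb)=\varphi_k(z_k)$ for $k\in[d-1]$; this is the argument underlying \cref{thm1} specialised to the empty graph, where $\overline{\anc}(k)=\{k\}$.

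\textbf{Step 2: the remaining coordinate (the crux).} Knowing $\varphi_1,\dots,\varphi_{d-1}$ are univariate, $D\bm{\varphi}$ is diagonal except in its last row, so $\lvert\det D\bm{\varphi}(\zb)\rvert=\lvert\partial_d\varphi_d(\zb)\rvert\prod_{i<d}\lvert\varphi_i'(z_i)\rvert$; invertibility forces $\varphi_i'\neq 0$ ($i<d$) and $\partial_d\varphi_d\neq 0$ everywhere, and each $\varphi_i$ ($i<d$) is a diffeomorphism of $\RR$ (strictly monotone, and onto because $\bm{\varphi}$ is). Marginalising $\bm{\varphi}_*\PP^0=\mathbb{Q}^0$ to coordinate $i<d$ (valid since $\varphi_i$ depends only on $z_i$ and $Z_i$ is independent of the other components under $\PP^0$) gives $p_i(z_i)=q_i(\varphi_i(z_i))\lvert\varphi_i'(z_i)\rvert$. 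Plugging this, together with the product forms of $p^0,q^0$, into the full observational identity and cancelling the $i<d$ factors leaves the scalar equation
\[
q_d\bigl(\varphi_d(\zb)\bigr)\,\bigl\lvert\partial_d\varphi_d(\zb)\bigr\rvert \;=\; p_d(z_d).
\]
Since $\partial_d\varphi_d$ is continuous and nonvanishing on the connected set $\RR^d$ it has constant sign; taking it positive (the negative case is identical with the survival function $1-Q_d$ in place of $Q_d$) and letting $P_d,Q_d$ be the (strictly increasing, $\Ccal^1$) CDFs of $p_d,q_d$, the equation says $\partial_d[Q_d(\varphi_d(\zb))]=\partial_d[P_d(z_d)]$, hence $Q_d(\varphi_d(\zb))=P_d(z_d)+c(\zb_{[d-1]})$ for some function $c$ of $z_1,\dots,z_{d-1}$ only. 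As $z_d$ sweeps $\RR$ the right-hand side sweeps $(c,1+c)$, while the left-hand side lies in $Q_d(\RR)=(0,1)$; therefore $c\equiv 0$, so $\varphi_d(\zb)=Q_d^{-1}(P_d(z_d))$ depends on $z_d$ alone and is a diffeomorphism of $\RR$. Hence $\bm{\varphi}(\zb)=(\varphi_1(z_1),\dots,\varphi_d(z_d))\in\Scal_{\text{scaling}}$, which is the claim.

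\textbf{Main obstacle.} Step 1 is the familiar ``subtract the observational regime, then exploit a nondegenerate derivative'' device; the genuinely new ingredient is Step 2, pinning down the \emph{un-intervened} coordinate $\varphi_d$ with no interventional leverage on it --- by recognising the residual scalar identity as a one-dimensional change-of-variables equation and using that $Q_d$ is valued in $(0,1)$ to eliminate the integration ``constant'' $c(\zb_{[d-1]})$. This is exactly where the empty graph (so that $Z_d$'s mechanism is a stand-alone marginal of $\PP^0$) is essential, consistent with \cref{thm:cca_necessary}, which shows the conclusion fails once the un-intervened node has a parent.
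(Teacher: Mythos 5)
Your proof is correct and follows essentially the same route as the paper's: Step 1 is the subtract-and-differentiate argument from \cref{thm1}~{\em (i)} specialised to the empty graph, and Step 2 arrives at the same scalar identity $p_d(z_d)=q_d(\varphi_d(\zb))\,\lvert\partial_d\varphi_d(\zb)\rvert$ that the paper obtains via its marginalisation lemma (\cref{lem:n-1preserve_measure}). The only difference is cosmetic: where the paper then invokes \cref{lem:brehmer} (Lemma~2 of \citet{brehmer2022weakly}) to conclude that $\varphi_d$ is constant in $\zb_{[d-1]}$, you give a self-contained one-dimensional CDF/range argument, which is a valid elementary substitute for that lemma.
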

The result above shows that identifiability can be achieved through single-node interventions on the latent variables using strictly fewer datasets (i.e., auxiliary variables) than previous results in the auxiliary variables setting ($d$ in our case, $2d+1$ in~\cite[Thm. 1]{hyvarinen2019nonlinear}).
One potentially confusing aspect of~\cref{thm:ica_single} is that the ambiguity set does not contain permutations---which is usually an unavoidable ambiguity in ICA. 
This is due to our considered setting with known targets, where a total ordering of the variables is assumed to be known. The result above can also be extended to the case of {\em unknown intervention targets}, where we only know that, in each dataset, a distinct variable is intervened on, but we do not know {\em which one} (see \cref{app:known_unknown}).
For that case, we prove (\cref{thm:ica_single_automorphism}) that
ICA in $(G, \mc{F},\mc{P}_G)$ is in fact identifiable up to scaling and {\em permutation}.
Note that~\cref{thm:ica_single} is not a special case of~\cref{thm:CauCA_multi} in which $n_k=1\,\, \forall k$, since it %
only requires $d-1$ interventions instead of $d$.

We can additionally show that for nonlinear ICA, $d-1$ interventions are {\em necessary} for identifiability.
\begin{restatable}{proposition}{icasinglenecessary}
\label{thm:ica_single_necessary}
Given an empty graph $G$, with $d-2$ single-node interventions on distinct targets, with one single target per dataset, such that \cref{assum:basic} is satisfied. Then CauCA (in this case, ICA) in $(G, \Fcal, \Pcal_G)$ is not identifiable up to ${\Scal_{\text {reparam}}}$.
\end{restatable}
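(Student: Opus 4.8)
The plan is to prove non-identifiability by exhibiting an explicit pair of latent CBNs in $(G,\Fcal,\Pcal_G)$ that induce the same pushforward on all $d-1$ observed datasets (the observational regime together with the $d-2$ single-node interventional ones) but whose transition map is a rigid rotation mixing two coordinates, hence outside $\Scal_{\text{reparam}}$. The starting point is that $d-2$ single-node interventions on distinct targets leave at least two latent components, say $Z_{d-1}$ and $Z_d$, un-intervened in \emph{every} regime; since $G$ is empty, each $\mathbb{P}^k$ is a product measure whose $(Z_{d-1},Z_d)$-marginal is the \emph{same} fixed product of two marginals in all regimes. Choosing both of those marginals to be standard Gaussian makes this $2$-dimensional marginal rotation-invariant, and that symmetry is what I would exploit.

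Concretely, assuming $d\ge 2$ (which also covers the boundary case $d=2$ with no interventions) and taking the intervened targets to be $\tau_1,\dots,\tau_{d-2}\in[d-2]$ without loss of generality, I would build the ground-truth latent CBN with $\fb=\mathrm{id}_{\RR^d}$, observational mechanisms $p^0_i=\varphi$ (the standard normal density) for all $i$, and, for each $k\in[d-2]$, an intervention on $\tau_k$ replacing $\varphi$ by $\mathcal{N}(0,\sigma^2)$ for a fixed $\sigma\neq 1$; this satisfies \cref{assum:basic} since $\partial_z\ln\varphi(z)=-z$ and $\partial_z\ln\mathcal{N}(0,\sigma^2)(z)=-z/\sigma^2$ agree only at $z=0$. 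Each regime distribution then factorizes as $\mathbb{P}^k=\big(\bigotimes_{i\notin\{d-1,d\}}p_i^k\big)\otimes(\varphi\otimes\varphi)$. Letting $\mathbf{R}_\theta\in\Ccal^1(\RR^d)$ be the linear isometry that rotates the $(z_{d-1},z_d)$-plane by a fixed angle $\theta\in(0,\pi/2)$ and fixes all other coordinates, I would take the second latent CBN to be $\fb':=\mathbf{R}_\theta^{-1}$, $\mathbb{Q}^k:=\mathbb{P}^k$, with the same targets $\tau_k$.

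It then remains to verify three things. First, both CBNs lie in $(G,\Fcal,\Pcal_G)$: $\fb,\fb'$ are $\Ccal^1$-diffeomorphisms, and $\mathbb{P}^k=\mathbb{Q}^k$ has independent, differentiable, full-support components. Second, the observed pushforwards coincide: since $\varphi\otimes\varphi$ is invariant under planar rotations and $\mathbf{R}_\theta^{-1}$ acts as the identity on the remaining coordinates, $(\mathbf{R}_\theta^{-1})_*\mathbb{P}^k=\mathbb{P}^k$ for every $k$, so $\fb'_*\mathbb{Q}^k=(\mathbf{R}_\theta^{-1})_*\mathbb{P}^k=\mathbb{P}^k=\fb_*\mathbb{P}^k$. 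Third, the transition map $\fb'^{-1}\circ\fb=\mathbf{R}_\theta$ is, on the (full) support $\RR^d$, equal to no element of $\Scal_{\text{reparam}}$: any $\gb=\mathbf{P}\circ\hb$ with $\mathbf{P}$ a permutation matrix and $\hb\in\Scal_{\text{scaling}}$ has each output coordinate depending on a single input coordinate, whereas the $(d-1)$-th coordinate of $\mathbf{R}_\theta\zb$ equals $\cos\theta\,z_{d-1}-\sin\theta\,z_d$ with both coefficients nonzero. This yields the claim. The one genuinely delicate point is the first verification combined with the symmetry: independence of the latents (Markovianity w.r.t.\ the empty graph) must be preserved by the rotation, which is precisely why the two un-intervened marginals are chosen so that their joint carries the extra rotational symmetry; the remaining steps are routine bookkeeping.
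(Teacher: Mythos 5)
Your construction is correct and coincides with Case 1 of the paper's own proof: both exploit that the two never-intervened coordinates can be given an isotropic Gaussian joint, so a planar rotation acting only on those coordinates preserves every $\PP^k$ while the transition map $\fb'^{-1}\circ\fb$ mixes two coordinates and hence lies outside $\Scal_{\text{reparam}}$; your verification of \cref{assum:basic} (via $\sigma\neq 1$) and of the pushforward equalities is sound. The only difference is that the paper additionally treats a second case, in which independent Gaussians are excluded from $\Pcal_G$, by giving the two unintervened coordinates plateau densities and applying a localized, Hyv\"arinen--Pajunen-style measure-preserving rotation inside a disk; your argument therefore proves the proposition for the default nonparametric class (which contains Gaussians), but not the class-robust variant the paper's two-case proof is designed to cover.
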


\textbf{Fat-hand interventions.} 
For the special case with independent components, the following corollary characterises identifiability under fat-hand interventions.
\begin{figure}[b]
\begin{minipage}{0.49\textwidth}
    \begin{subfigure}{0.45\textwidth}
\begin{tcolorbox}[boxrule=0.6pt, size=title]
    \adjustbox{width=1.1\linewidth}{%
    \begin{tikzpicture}[node distance=1cm, inner sep=2pt]
    \tikzset{
    latent/.style={circle, draw=black, fill=white, minimum width=33pt, minimum height=20pt, font=\fontsize{18}{14}\selectfont},
    }
    \centering
    \node (z1) [latent, line width=\cithick] {$Z_1$};
    \node (z2) [latent, below=0.6cm of z1, line width=\cithick] {$Z_2$};
    \node (z3) [latent, below=0.6cm of z2, line width=\cithick] {$Z_3$};
    \node (z4) [latent, below=0.6cm of z3, line width=\cithick] {$Z_4$};
    \node (z1') [latent, right= of z1, line width=\cithick] {$Z'_1$};
    \node (z2') [latent, below=0.6cm of z1', line width=\cithick] {$Z'_2$};
    \node (z3') [latent, below=0.6cm of z2', line width=\cithick] {$Z'_3$};
    \node (z4') [latent, below=0.6cm of z3', line width=\cithick] {$Z'_4$};
    \node [fit=(z1)(z2), draw, inner sep=4pt] (box1) {};
    \node [fit=(z1')(z2'), draw, inner sep=4pt] (box2) {};
    \draw [->, line width=1pt] (box1) -- (box2);
    \node [fit=(z3)(z4), draw, inner sep=4pt] (box3) {};
    \node [fit=(z3')(z4'), draw, inner sep=4pt] (box4) {};
    \node [left = 0.03cm of box1] {\hmdim $2 \times$\faHammer};
    \node [left = 0.03cm of box3] {\hmdim $2 \times$\faHammer};
    \node [right = 0.03cm of box2] {\hmdim $2 \times$\reflectbox{\faHammer}};
    \node [right = 0.03cm of box4] {\hmdim $2 \times$\reflectbox{\faHammer}};
    \draw [->, line width=1pt] (box3) -- (box4);
    \draw [->, line width=1pt, color=red_cblind] (z1) -- (z2);
    \draw [->, line width=1pt, color=red_cblind] (z1') -- (z2');
    \end{tikzpicture}
    }
    \end{tcolorbox}
\end{subfigure}%
\hspace{0.25em}
\begin{subfigure}{0.45\textwidth}
\begin{tcolorbox}[boxrule=0.6pt, size=title]
    \adjustbox{width=1.1\linewidth}{%
    \begin{tikzpicture}[node distance=1cm, inner sep=2pt]
    \tikzset{
    latent/.style={circle, draw=black, fill=white, minimum width=33pt, minimum height=20pt, font=\fontsize{18}{14}\selectfont},
    }
    \centering
    \node (z1) [latent, line width=\cithick] {$Z_1$};
    \node (z2) [latent, below=0.6cm of z1, line width=\cithick] {$Z_2$};
    \node (z3) [latent, below=0.6cm of z2, line width=\cithick] {$Z_3$};
    \node (z4) [latent, below=0.6cm of z3, line width=\cithick] {$Z_4$};
    \node (z1') [latent, right= of z1, line width=\cithick] {$Z'_1$};
    \node (z2') [latent, below=0.6cm of z1', line width=\cithick] {$Z'_2$};
    \node (z3') [latent, below=0.6cm of z2', line width=\cithick] {$Z'_3$};
    \node (z4') [latent, below=0.6cm of z3', line width=\cithick] {$Z'_4$};
    \node [fit=(z1)(z2), draw, inner sep=4pt] (box1) {};
    \node [fit=(z1')(z2'), draw, inner sep=4pt] (box2) {};
    \node [left = 0.03cm of box1] {\hmdim $2 \times$\faHammer};
    \node [fit=(z3)(z4), draw, inner sep=4pt] (box3) {};
    \node [fit=(z3')(z4'), draw, inner sep=4pt] (box4) {};
    \node [left = 0.03cm of box3] {\hmdim $4 \times$\faHammer};
    \draw [->, line width=1pt] (z4) -- (z3');
    \draw [->, line width=1pt] (z3) -- (z4');
    \draw [->, line width=1pt] (box1) -- (box2);
    \node [right = 0.03cm of box2] { \hmdim $2 \times$\reflectbox{\faHammer}};
    \node [right = 0.03cm of box4] {\hmdim $4 \times$\reflectbox{\faHammer}};
    \draw [->, line width=1pt, color=red_cblind] (z1) -- (z2);
    \draw [->, line width=1pt, color=red_cblind] (z1') -- (z2');
    \end{tikzpicture}
    }
    \end{tcolorbox}
\end{subfigure}%
\end{minipage}%
\begin{minipage}{0.50\textwidth}
    \caption{%
    \looseness-1 
    \label{figure:variability_discrepanct}
We use the ``\faHammer'' symbol together with a ``times'' symbol to represent how many interventions are required by the two assumptions. \textit{(Left)} (\cref{thm:CauCA_multi})  For~\cref{assum:block-ID}%
, we need $n_k$ interventions to get block-identification of $\mathbf{z}_{\tau_k}$. \textit{(Right)} (\cref{prop:blocktoreparam}) For the {\em block-variability} assumption%
, we need $2n_k$ to get to elementwise identification up to scaling and permutation.
} 
\end{minipage}
\end{figure}
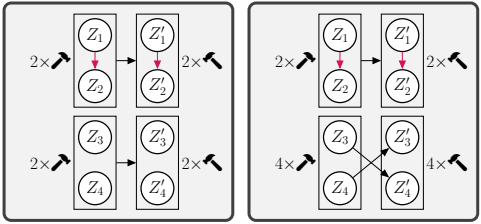%
\begin{restatable}{corollary}{icamulti}
\label{cor:ica_multi}[Corollary of \cref{thm:CauCA_multi}]
    Suppose $G$ is the empty graph. Suppose that our datasets encompass interventions over all variables in the latent graph, i.e., $\bigcup_{k \in [K]} \tau_k=[d]$. Suppose for every $k$, the targets of interventions are a strict subset of all variables, i.e., $|\tau_k|=n_k$, $n_k \in [d-1]$. 
    
    Suppose \cref{assum:block-ID} is verified, which has a simpler form in this case: there are $n_k$ interventions with target $\tau_k$ such that
    $
    \vb_k(\zb_{\tau_{k}}, 1)-\vb_k(\zb_{\tau_{k}}, 0), \cdots, \vb_k(\zb_{\tau_{k}}, n_k)-\vb_k(\zb_{\tau_{k}}, 0) 
    $
    are linearly independent, where
    \begin{equation}\label{eq:multi_assum}
    \vb_k(\zb_{\tau_{k}}, s):=\left((\ln q^s_{\tau_{k},1})^{\prime}\left(z_{\tau_{k},1} \right), \cdots, (\ln q^s_{\tau_{k}, n_k})^{\prime}\left(z_{\tau_{k}, n_k} \right)\right),
    \end{equation}
    where $q_{\tau_{k}}^{s}$ is the intervention of the $s$-th interventional regime that has the target $\tau_k$, and $q_{\tau_{k},j}^{s}$ is the $j$-th marginal of it. $z_{\tau_{k}, j}$ is the $j$-th dimension of $\zb_{\tau_{k}}$. 
    $s=0$ denotes the unintervened regime.
    
    Then CauCA in $\left(G, \mc{F}, \mathcal{P}_G \right)$ is {\em block-identifiable}, in the same sense as \cref{thm:CauCA_multi}.
\end{restatable}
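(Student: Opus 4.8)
The plan is to derive this directly from \cref{thm:CauCA_multi}: since the Corollary merely restates \cref{assum:block-ID} for the special case of an empty graph, it suffices to verify that all hypotheses of \cref{thm:CauCA_multi} are met and that the invertibility condition on the matrix \eqref{thm:CauCA_multi_matrix} reduces to the stated linear-independence condition. First I would dispose of the structural hypotheses: when $G$ is empty, every node is parentless, so for every target $\tau_k$ there is no arrow into $\tau_k$, and only the first bullet of \cref{assum:block-ID} is ever active (requiring $n_k$, not $n_k+1$, interventions per target). Moreover, since $\pa(j)=\emptyset$ for all $j$, each block-intervention mechanism for $\tau_k$ is automatically a joint distribution over $\Zb_{\tau_k}$ with no conditioning, so the perfectness requirement of \cref{thm:CauCA_multi} holds vacuously. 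The remaining hypotheses $\bigcup_k \tau_k=[d]$ and $|\tau_k|=n_k\in[d-1]$ are assumed verbatim.

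The key step is to rewrite the matrix \eqref{thm:CauCA_multi_matrix}. Because every $\PP^k$ lies in $\Pcal_G$ with $G$ empty, each $\PP^k$ has independent components, and hence the mechanism of the block $\Zb_{\tau_k}$ in any regime $s$ factorizes as $q^s_{\tau_k}(\zb_{\tau_k})=\prod_{j=1}^{n_k} q^s_{\tau_k,j}(z_{\tau_k,j})$. Taking logarithms and differentiating gives $\frac{\partial}{\partial z_{\tau_k,j}}\ln q^s_{\tau_k}(\zb_{\tau_k})=(\ln q^s_{\tau_k,j})'(z_{\tau_k,j})$, so the $(s,j)$ entry of $\Mb_{\tau_k}$ equals $(\ln q^s_{\tau_k,j})'(z_{\tau_k,j})-(\ln q^0_{\tau_k,j})'(z_{\tau_k,j})$; reading off its rows, the $s$-th row of $\Mb_{\tau_k}$ is precisely $\vb_k(\zb_{\tau_k},s)-\vb_k(\zb_{\tau_k},0)$ with $\vb_k$ as in \eqref{eq:multi_assum}. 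Hence $\Mb_{\tau_k}$ is invertible at a given $\zb_{\tau_k}$ if and only if the $n_k$ difference vectors $\vb_k(\zb_{\tau_k},s)-\vb_k(\zb_{\tau_k},0)$, $s=1,\dots,n_k$, are linearly independent, and the ``almost everywhere'' qualifiers agree; this is exactly the simplified condition in the Corollary, so \cref{assum:block-ID} is satisfied.

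With all hypotheses checked, I would invoke \cref{thm:CauCA_multi} to conclude block-identifiability, i.e.\ $[\bm{\varphi}(\zb)]_{\tau_k}=\bm{\varphi}_{\tau_k}(\zb_{\tau_k})$ for every $k$, in the sense of \cref{thm:CauCA_multi}. I do not expect a genuine obstacle here: the only step requiring care is the factorization $q^s_{\tau_k}=\prod_j q^s_{\tau_k,j}$, which relies on the block-intervention mechanism inheriting the empty-graph Markov property from $\PP^k\in\Pcal_G$; everything else is a term-by-term match between \eqref{thm:CauCA_multi_matrix} and \eqref{eq:multi_assum}.
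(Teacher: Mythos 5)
Your proposal is correct and follows essentially the same route as the paper: reduce to \cref{thm:CauCA_multi} by noting that the empty graph puts every target block in the first (parentless) case of \cref{assum:block-ID}, and then use the product factorization $q^s_{\tau_k}=\prod_j q^s_{\tau_k,j}$ to identify the rows of $\Mb_{\tau_k}$ with the difference vectors $\vb_k(\zb_{\tau_k},s)-\vb_k(\zb_{\tau_k},0)$, so the stated linear-independence condition is exactly invertibility of $\Mb_{\tau_k}$. No gaps; your additional remarks on vacuous perfectness and the a.e. qualifier are consistent with the paper's argument.
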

Our interventional perspective also allows us to re-interpret and extend a key result in the theory of nonlinear ICA with auxiliary variables,~\citep[Thm.1]{hyvarinen2019nonlinear}.
In particular, the following Proposition holds.
\begin{restatable}{proposition}{blocktoreparam}
\label{prop:blocktoreparam}
Under the assumptions of \cref{thm:CauCA_multi}, suppose furthermore that all density functions in $\Pcal_G$ and all mixing functions in $\Fcal$ are $\Ccal^2$, and suppose there exist $k\in [K]$ and there are $2n_k$ interventions with targets $\tau_k$ such that for any $\zb_{\tau_k}\in\mathbb{R}^{n_k}$,  $ {\wb_k\left(\zb_{\tau_k}, 1\right)-\wb_k\left(\zb_{\tau_k}, 0\right)}, \ldots, {\wb_k\left(\zb_{\tau_k}, 2 n_k\right)-\wb_k\left(\zb_{\tau_k}, 0\right)}$ are linearly independent, where 
\begin{equation*}
\label{eq:variability}
\wb_k\left(\zb_{\tau_k}, s\right):= \left(\left(\frac{q_{\tau_{k},1}^{s\prime}}{q_{\tau_{k},1}^s}\right)^{\prime}\left(z_{\tau_k,1}\right), \ldots, \left(\frac{q_{\tau_{k},n_k}^{s\prime}}{q_{\tau_{k},n_k}^s}
\right)^{\prime}\left(z_{\tau_k, n_k}\right), \frac{q_{\tau_{k},1}^{s\prime}}{q_{\tau_{k},1}^s}\left(z_{\tau_k,1}\right), \ldots, \frac{q_{\tau_{k}, n_k}^{s\prime}}{q_{\tau_{k}, n_k }^s}\left(z_{\tau_k ,n_k}\right) \right),
\end{equation*}
where $q_{\tau_{k}}^{s}$ is the intervention of the $s$-th interventional regime that has the target $\tau_k$, and $q_{\tau_{k},j}^{s}$ is the $j$-th marginal of it. $z_{\tau_{k}, j}$ is the $j$-th dimension of $\zb_{\tau_{k}}$.
$s=0$ denotes the unintervened regime.
Then ${\bm{\varphi}_{\tau_k}\in \Scal_{\text{reparam}}:= \biggl\{\gb:\mathbb{R}^{n_k}\to \mathbb{R}^{n_k}| \gb=\Pb \circ \hb \text{ where } \Pb \text{ is a permutation matrix and } \hb \in \Scal_{\text{scaling}} \biggr\}}$.
\end{restatable}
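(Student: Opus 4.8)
The plan is to use the block-identifiability already supplied by \cref{thm:CauCA_multi} to reduce the statement to a Hyv\"arinen-style second-derivative argument \emph{inside} the single block $\tau_k$ named in the hypothesis. First, \cref{thm:CauCA_multi} gives $[\bm{\varphi}(\zb)]_{\tau_k}=\bm{\varphi}_{\tau_k}(\zb_{\tau_k})$; since $\fb,\fb'$ are now $\Ccal^2$, the block map $\bm{\varphi}_{\tau_k}$ is readily seen to be a $\Ccal^2$-diffeomorphism of $\RR^{n_k}$, so I may set $\bm{\psi}:=\bm{\varphi}_{\tau_k}^{-1}=[\fb^{-1}\circ\fb']_{\tau_k}$, also a $\Ccal^2$-diffeomorphism. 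Because the $\tau_k$-interventions are perfect for both latent CBNs under comparison (hypothesis of \cref{thm:CauCA_multi}), the $\tau_k$-marginal of $\PP^s$ factorizes as $\prod_j q^s_{\tau_k,j}$ and that of $\QQ^s$ as $\prod_j r^s_{\tau_k,j}$ for univariate densities $r^s_{\tau_k,j}$ (marginalizing out $\Zb_{[d]\setminus\tau_k}$ is legitimate because the remaining mechanisms form a conditional density over those coordinates). Using $\Zb'_{\tau_k}=\bm{\varphi}_{\tau_k}(\Zb_{\tau_k})$, I apply the change-of-variables formula for $\bm{\varphi}_{\tau_k}$, subtract the reference regime $s=0$ to cancel the log-Jacobian, and substitute $\zb_{\tau_k}=\bm{\psi}(\yb)$, obtaining for every relevant $s$
\begin{equation}\label{eq:plan_block_id}
\sum_{j=1}^{n_k}\left[\ln q^s_{\tau_k,j}(\psi_j(\yb))-\ln q^0_{\tau_k,j}(\psi_j(\yb))\right]=\sum_{j=1}^{n_k}\left[\ln r^s_{\tau_k,j}(y_j)-\ln r^0_{\tau_k,j}(y_j)\right].
\end{equation}

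Next, for each pair $a\neq b$ I would apply $\partial_a\partial_b$ (derivatives in the $\yb$-coordinates) to \eqref{eq:plan_block_id}. The right-hand side is a sum of univariate functions and is annihilated; expanding the left-hand side by the chain rule, with $g^s_j:=\ln q^s_{\tau_k,j}-\ln q^0_{\tau_k,j}$, yields
\begin{equation*}
\sum_{j=1}^{n_k}\left[(g^s_j)''(\psi_j(\yb))\,\partial_a\psi_j\,\partial_b\psi_j+(g^s_j)'(\psi_j(\yb))\,\partial_a\partial_b\psi_j\right]=0.
\end{equation*}
Reading off the definition of $\wb_k$, the coefficient vector here is exactly $\wb_k(\bm{\psi}(\yb),s)-\wb_k(\bm{\psi}(\yb),0)\in\RR^{2n_k}$, contracted with the unknown vector $\bm{u}_{ab}:=\bigl((\partial_a\psi_j\,\partial_b\psi_j)_{j\in[n_k]},(\partial_a\partial_b\psi_j)_{j\in[n_k]}\bigr)$; that is, $\langle\wb_k(\bm{\psi}(\yb),s)-\wb_k(\bm{\psi}(\yb),0),\bm{u}_{ab}\rangle=0$ for every $s$. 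Since $\bm{\psi}$ is a diffeomorphism, $\bm{\psi}(\yb)$ ranges over all of $\RR^{n_k}$, so the block-variability hypothesis makes $\{\wb_k(\bm{\psi}(\yb),s)-\wb_k(\bm{\psi}(\yb),0)\}_{s=1}^{2n_k}$ a basis of $\RR^{2n_k}$; hence $\bm{u}_{ab}=\bm{0}$ for all $a\neq b$, i.e.\ $\partial_a\psi_j\,\partial_b\psi_j\equiv 0$ and $\partial_a\partial_b\psi_j\equiv 0$.

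It then remains to turn this into the $\Scal_{\text{reparam}}$ conclusion. The relations $\partial_a\psi_j\,\partial_b\psi_j\equiv 0$ for $a\neq b$ say that each row of the Jacobian $D\bm{\psi}$ has at most one nonzero entry; invertibility of $D\bm{\psi}$ upgrades this to exactly one nonzero entry per row and per column. The position of the nonzero entry in row $j$ is continuous in $\yb$, hence locally constant, hence equal to a fixed index $\pi(j)$ on the connected set $\RR^{n_k}$; so $\psi_j$ depends only on $y_{\pi(j)}$, and as a coordinate of a global diffeomorphism of $\RR^{n_k}$ it is a diffeomorphism of $\RR$ in that argument. Thus $\bm{\psi}\in\Scal_{\text{reparam}}$, and since $\Scal_{\text{reparam}}$ is closed under inversion, $\bm{\varphi}_{\tau_k}=\bm{\psi}^{-1}\in\Scal_{\text{reparam}}$, which is the claim.

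The main obstacle will be justifying that \eqref{eq:plan_block_id} really is an identity between sums of \emph{univariate} functions of the respective coordinates. This requires the $\tau_k$-interventions of the matched model $(\fb',\QQ^k)$ to be perfect as well (so that its $\tau_k$-marginals factorize), and the reference regime $s=0$ to itself have a factorizing $\tau_k$-marginal---so when $\tau_k$ has parents, or edges internal to $\tau_k$, one must take $s=0$ to be one of the perfect-intervention regimes rather than the observational one, mirroring the two cases of \cref{assum:block-ID}. A secondary point is upgrading the pointwise ``at most one nonzero per row'' structure of $D\bm{\psi}$ to a global permutation, which uses connectedness of $\RR^{n_k}$; the $\Ccal^2$ hypotheses on densities and mixing functions are precisely what makes the second derivatives above well defined.
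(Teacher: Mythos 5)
Your proposal is correct and follows essentially the same route as the paper's proof: reduce to the block $\tau_k$ via \cref{thm:CauCA_multi}, cancel the log-Jacobian by subtracting the reference regime, annihilate the separable side with mixed second partial derivatives, and collect the resulting equations for $s=1,\ldots,2n_k$ into a linear system whose coefficient vectors are $\wb_k(\cdot,s)-\wb_k(\cdot,0)$, forcing the cross-products of first derivatives and the mixed second derivatives of the block map to vanish; the permutation-plus-scaling conclusion then follows exactly as in the paper (one nonzero entry per row of the Jacobian, made global by continuity and invertibility). The only real difference is that you run the computation on $\bm{\psi}=\bm{\varphi}_{\tau_k}^{-1}$ in the target coordinates, so the side that gets annihilated belongs to the second model and the block-variability assumption is invoked on the other model's marginals evaluated along $\bm{\psi}$, whereas the paper differentiates in the source coordinates and evaluates the $\wb$-vectors at $\bm{\varphi}_{\tau_k}(\zb_{\tau_k})$; since linear independence is assumed at \emph{every} point of $\mathbb{R}^{n_k}$ and both maps are diffeomorphisms, the two versions are interchangeable, modulo the statement's ambiguity about which of the two CBNs the $q$'s refer to---an ambiguity the paper's own proof shares. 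Your closing caveat about the reference regime (that the $\tau_k$-marginal at $s=0$ must itself factorize, so for blocks with parents or internal edges one should take $s=0$ to be a perfect-intervention regime, mirroring \cref{assum:block-ID}) addresses a point the paper glosses over by writing $\sum_l \ln p^0_l(z_l)$ directly, and the factorization of both models' block interventions that you and the paper both use is indeed guaranteed for perfect per-node interventions as in \cref{def:cbn}; so if anything your treatment is slightly more careful than the published argument.
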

\begin{remark} \looseness-1
The assumption of linear independence ${\wb_k\left(\zb_{\tau_k}, s\right), s\in [2n_k]}$ %
precisely corresponds to the assumption of {\em variability} in~\citep[Thm. 1]{hyvarinen2019nonlinear}; however, we only assume it within a $n_k$-dimensional block (not over all $d$ variables). We refer to it as {\em block-variability}. 
\end{remark}
Note that the block-variability assumption {\em implies} %
block-interventional discrepancy (\cref{assum:block-ID}): i.e., %
it is a strictly stronger assumption, which, correspondingly, leads to a stronger identification.
In fact, block-interventional discrepancy only allows {\em block-wise identifiability} within the $n_k$-dimensional intervened blocks based on $n_k$ interventions. 
In contrast, the variability assumption can be interpreted as a sufficient assumption to achieve identification {\em up to permutation and scaling} within a {$n_k\text{-dimensional}$ block, based on $2n_k$ fat-hand interventions (in both cases one unintervened dataset is required), 
see~\cref{figure:variability_discrepanct} for a visualization. 
We summarise our results for nonlinear ICA in~\cref{table:ICA_eg},~\cref{app:relationship_to_ICA}.

In~\cite{hyvarinen2019nonlinear}, the variability assumption is assumed to hold over {\em all} variables, which 
in our setting 
can be interpreted as a requirement over $2d$ fat-hand interventions over all latent variables simultaneously (plus one unintervened distribution).
In this sense,~%
\cref{prop:blocktoreparam} and {\em block-variability} %
extend the result of~\citep[Thm. 1]{hyvarinen2019nonlinear}, which only considers the case where {\em all} variables are intervened, by exploiting variability to achieve a strong identification only {\em within} a subset of the variables.

\section{Experiments}
\label{sec:experiments}

\begin{figure}[t]
    \centering
      \includegraphics[width=\textwidth,
      ]{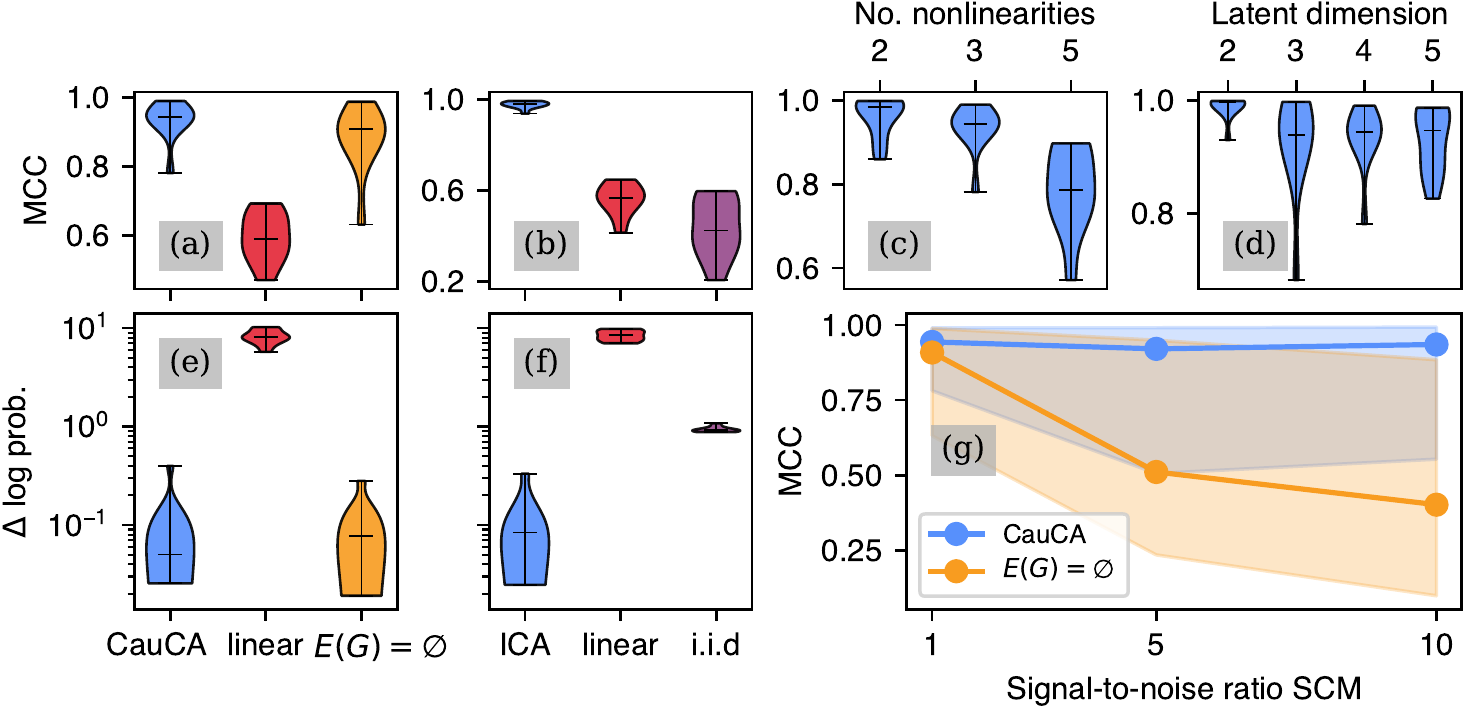}
    \caption{%
    \looseness-1 \textbf{Experimental results.}
    Figures (a) and (e) present the mean correlation coefficients (MCC) between true and learned latents and log-probability differences between the model and ground truth ($\Delta$ log prob.) for CauCA experiments.
    Misspecified models assuming a trivial graph ($E(G){=}\emptyset$) and a linear encoder function class are compared.
    All violin plots show the distribution of outcomes for 10 pairs of CBNs and mixing functions.
    Figures (c) and (d) display CauCA results with varying numbers of nonlinearities in the mixing function and latent dimension.
    For the ICA setting, MCC values and log probability differences are illustrated in (b) and (f).
    Baselines include a misspecified model (linear mixing) and a naive (single-environment) unidentifiable normalizing flow with an independent Gaussian base distribution (labelled \textit{i.i.d.}). 
    The naive baseline is trained on pooled data without using information about interventions and their targets.
    Figure (g) shows the median MCC for CauCA and the misspecified baseline ($E(G){=}\emptyset$) as the strength of the linear parameters relative to the exogenous noise in the structural causal model generating the CBN increases.
    The shaded areas show the range between minimum and maximum values.
    }
    \label{fig:experiments}
\end{figure}

Our experiments aim to estimate a CauCA model based on a known graph and a collection of interventional datasets with known targets.
We focus on the scenarios with single-node, perfect interventions described in~\cref{sec:theory}. 
For additional technical details, see~\cref{app:experiments}.

\textbf{Synthetic data-generating process.}
We first sample DAGs $G$ with an edge density of $0.5$. 
To model the causal dependence among the latent variables, we use the family of CBNs induced by linear Gaussian structural causal model (SCM) consistent with $G$.\footnote{Additional experiments with nonlinear, non-Gaussian CBNs %
can be found
in \cref{app:rebuttal_experiments}.}
For the ground-truth mixing function, we use $M$-layer multilayer perceptrons $\fb = \sigma \circ \Ab_M \circ \ldots \circ \sigma \circ \Ab_1$, where $\Ab_m \in \mathbb{R}^{d \times d}$ for $m \in \llbracket 1, M\rrbracket$ denote invertible linear maps (sampled from a multivariate uniform distribution), and $\sigma$ is an element-wise invertible nonlinear function.
We then sample observed mixtures from these latent CBNs, as described by~\cref{eq:dataset}.

\textbf{Likelihood-based estimation procedure.}
Our objective is to learn an encoder $\gb_{\bm{\theta}}:\RR^d \rightarrow \RR^d$ that approximates the inverse function $\mathbf{f}^{-1}$ up to tolerable ambiguities, together with latent densities $(p^k)_{k \in \llbracket 0, d \rrbracket}$ reproducing the ground truth up to corresponding ambiguities (cf.~\cref{lem:unavoidable}). 
We estimate the encoder parameters by maximizing the likelihood, which can be derived through a change of variables from~\cref{eq:interv_distr}:  for an observation in dataset $k>0$ taking a value $\mathbf{x}$, it  is given by
\begin{equation}\label{eq:likelihood_k}
    \log p^k_{\bm{\theta}}(\xb)
        =
        \log |\det \Jb\gb_{\bm{\theta}}(\xb)| 
        + \log \widetilde{p}_{\tau_k}((\gb_{\bm{\theta}})_{\tau_k}(\xb)) 
        + \sum_{i\neq {\tau_k}} \log p_i\left((\gb_{\bm{\theta}})_i(\xb) \mid (\gb_{\bm{\theta}})_{\pa(i)}(\xb)\right)
        ,
\end{equation}
where $\Jb\gb_{\bm{\theta}}(\xb)$ denotes the Jacobian of $\gb_{\bm{\theta}}$ evaluated at $\xb$.
The learning objective can be expressed as ${\theta^* = \arg  \max_{\bm{\theta}} \sum_{k=0}^K \left( \frac{1}{N_k}\sum_{\substack{n=1}}^{N_k} \log p^k_{\bm{\theta}}(\mathbf{x}^{(n, k)}) \right)}$, with $N_k$ representing the size of dataset $k$.

\textbf{Model architecture.}
We employ normalizing flows~\citep{papamakarios2021normalizing} to parameterize the encoder. 
Instead of the typically deployed base distribution with independent components, we use the collection of densities (one per interventional regime) induced by the CBN over the latents. 
Following the CauCA setting, the parameters of the causal mechanisms are learned while the causal graph is assumed known.
For details on the model and training parameters, see~\cref{app:experiments}.

\textbf{Settings.}
We investigate two learning problems:
\begin{inparaenum}[\em (i)]
    \item CauCA, corresponding to \cref{sec:single_node}, and
    \item ICA, where the sampled graphs in the true latent CBN contain no arrows, as discussed in~\cref{sec:ica}.
\end{inparaenum}

\textbf{Results.} 
{\em (i)}
For a data-generating process with non-empty graph, experimental outcomes are depicted in~\cref{fig:experiments}~(a, e). 
We compare a well-specified CauCA model ({\em blue}) to misspecified baselines, including a model with correctly specified latent model but employing a linear encoder ({\em red}), and a model with a nonlinear encoder but assuming a causal graph with no arrows ({\em orange}).
See caption of~\cref{fig:experiments} for details on the metrics. 
The results demonstrate that the CauCA model accurately identifies the latent variables, benefiting from both the nonlinear encoder and the explicit modelling of causal dependence.
We additionally test the effect of increasing a parameter influencing the magnitude of the sampled linear parameters in the SCM (we refer to this as {\em signal-to-noise ratio}, see~\cref{app:synth_data} for details)---which increases the statistical {\em dependence} among the true latent components. 
The gap between the CauCA model and the baseline assuming a trivial graph widens (\cref{fig:experiments}~(g)), indicating that {\em correctly modelling the causal relationships becomes increasingly important the more (statistically) dependent the true latent variables are}. 
Finally, we verify that the model performs well for different number of layers $M$ in the ground-truth nonlinear mixing (c) (performance degrades slightly for higher $M$), and across various latent dimensionalities for the latent variable (d).

{\em (ii)} For data-generating processes where the graph contains no arrows (ICA), results are presented in \cref{fig:experiments}~(b, f). Well-specified, nonlinear models {\em (blue)} are compared to misspecified linear baselines {\em (red)} and a naive normalizing flow baseline trained on pooled data {\em (purple)}.
The findings confirm that {\em interventional information provides useful learning signal even in the context of nonlinear ICA}.
\section{Related Work}
\label{sec:related_work}

\textbf{Causal representation learning.} 
In the present work, we focus on identifiability of {\em latent CBNs} with a {\em known graph}, based on {\em interventional data}, and investigate the {\em nonlinear and nonparametric} case.
In CRL ({\em unknown graph}), many studies focus on identifiability of {latent \em SCMs} instead, which requires strong assumptions such as weak supervision (i.e., {\em counterfactual data})~\citep{locatello2020weakly,von2021self,ahuja2022weakly,brehmer2022weakly}.
Alternatively, the setting where {\em temporal information} is available, i.e., dynamic Bayesian networks, has been studied extensively~\citep{lachapelle2022disentanglement,lachapelle2022partial,yao2021learning, lippe2022citris,lippe2023causal}. 
In a non-temporal setting, other works assume interventional data and {\em linear mixing functions}~\cite{squires2023linear, varici2023score}; or that the {\em latent distributions are linear Gaussian}~\cite{liu2022weight}. \citet{ahuja2022interventional}
identify latent representations by {\em deterministic hard} interventions, together with {\em parametric} assumptions on the mixing,
and an {\em independent support} assumption~\cite{wang2021desiderata}.
Concurrent work studies the cases with
non-parametric mixing and linear Gaussian latent causal mode~\cite{buchholz2023learning} or non-parametric latent causal model under faithfulness, {\em genericity} and~\cref{assum:basic}%
~\cite{von2023nonparametric}.

\textbf{Prior knowledge on the latent SCM.} Other prior works also leverage prior knowledge on the causal structure for representation learning. %
\citet{yang2020causalvae} introduce the
CausalVAE model, which aims to disentangle the endogenous and exogenous variables of an SCM, and prove identifiability up to affine transformations based on known intervention targets.
\citet{shen2022weakly} also consider the setting in which the graph is (partially) known, but their approach requires additional supervision in the form of annotations of the ground truth latent.
\citet{Leeb} embed an SCM into the latent space of an autoencoder, provided with a topological ordering allowing it to learn latent DAGs.

\looseness=-1\textbf{Statistically dependent components.} 
Models with causal dependences among the latent variables are a special case of models where the latent variables are statistically dependent%
~\citep{hyvarinen2023identifiability}.
Various extensions of the ICA setting allow for dependent variables: e.g., independent subspace analysis~\citep{hyvarinen2000emergence};
topographic ICA%
~\citep{hyvarinen2001topographic}  %
(see also \citep{keller2021topographic}); %
independently modulated component analysis~\citep{khemakhem2020ice}%
.
\citet{morioka2023connectivity} introduce
a multi-modal model where {\em within-modality dependence} is described by a %
Bayesian network, with {\em joint independence across the modalities}, and a mixing function for same-index variables across these networks%
. Unlike our work, it encodes no explicit notions of interventions.
\section{Discussion}
\label{sec:discussion}
\textbf{Limitations.}  \textbf{(i) Known intervention targets:} We proved that with {\em fully unknown targets}, there are fundamental and strong limits to identifiability (see \cref{cor:unknown_targets_nonident}).
We also studied some relaxations of this assumption (\cref{app:known_unknown}), 
and
generalized our results to {\em known targets up to graph automorphisms} and {\em matched intervention targets} (see \cref{thm:ica_single_automorphism} and \cref{thm:ica_single_automorphism}).
Other relaxations are left for future work;
e.g., the case with a non-trivial graph and matched intervention targets is studied in~\citep{von2023nonparametric}, under {\em faithfulness} and {\em genericity} assumptions. \textbf{(ii) Estimation:} %
More scalable estimation procedures 
than
our likelihood-based 
approach (\cref{sec:experiments})
may be developed, e.g., based on variational inference.

\textbf{CauCA as a causal generalization of ICA.}
\iftrue
\looseness-1 
As pointed out in~\cref{sec:ica},
the special case of CauCA with a trivial graph %
corresponds to a novel ICA model.
Beyond the fact that CauCA allows statistical dependence described by general DAGs among the components, we argue that it can %
be viewed as a {\em causal} generalization of ICA. 
Firstly, we exploit the assumption of {\em localized and sparse} changes in the latent mechanisms~\citep{scholkopf2021toward, perrycausal}, in contrast to previous ICA works which exploit {\em non-stationarity} at the level of the entire joint distribution of the latent components~\citep{hyvarinen2016unsupervised, hyvarinen2019nonlinear, monti2020causal},
leading to strong identifiability results (e.g., in~\cref{thm1}~{\em (ii)}).
Secondly, we exploit the modularity of causal mechanisms: i.e., it is possible to intervene on some of the mechanisms while leaving the others \textit{invariant}~\citep{pearl2009causality, peters2017elements}.
To the best of our knowledge, our work is the first ICA extension where
latent dependence can actually be interpreted in a causal sense. %
\fi

%
%
%
%
%
%
%
%
%
%
%

\section*{Acknowledgements}
The authors thank Vincent Stimper, Weiyang Liu, Siyuan Guo, Junhyung Park, Jinglin Wang, Corentin Correia, Cian Eastwood, Adri\'an Javaloy and the anonymous reviewers
for helpful comments and discussions.

\section*{Funding Transparency Statement}
\looseness-1 
 This work was supported by the Tübingen AI Center. L.G.\ was supported by the VideoPredict project, FKZ: 01IS21088.
\medskip

\bibliographystyle{abbrvnat} 
\bibliography{ref}

\clearpage
\appendix
\begin{center}
{\centering \LARGE APPENDIX}
\vspace{0.8cm}
\sloppy
\end{center}

\section*{Overview}
\begin{itemize}
    \item \cref{app:notations} recapitulates the notation used in this paper.
    \item \Cref{app:proofs} 
    contains the proofs of all theoretical statements presented in the paper.
    \item \cref{app:countereg} contains a nontrivial counterexample of \cref{thm1}~{\em (ii)} when~\cref{assum:basic} is violated.
    \item \cref{app:structure-preserving} contains an additional result of identifiability based on \cref{thm1}~{\em (i)}, when more imperfect stochastic interventions are available.
    \item \cref{app:known_unknown} contains a general discussion on CauCA with unknown intervention targets, as well as a generalization of some of the identifiability results.
    \item \cref{app:relationship_to_ICA} contains a technical details on the
    relationship between CauCA and nonlinear ICA.
    \item \cref{app:pooled_objective} contains some theoretical results which were useful in the design of the experiments.
    \item \cref{app:experiments} contains the details of the experiments.
\end{itemize}

\clearpage

\section{Notations}\label{app:notations}

\begin{table}[h]
\centering
\begin{tabular}{ll}
\hline
Symbol & Description \\
\hline
$G$ & A directed acyclic graph
with nodes $V(G)=[d]$ and arrows $E(G)$ \\
$(i,j)$ & An ordered tuple representing an arrow in $E(G)$, with $i,j \in V(G)$ \\
$\llbracket i,j \rrbracket$ & The integers $i, \ldots, j$ \\
$[d]$ & The natural numbers $1, \ldots, d$ \\
$\pa(i)$ & Parents of $i$, defined as $\{j\in V(G)\mid (j,i)\in E(G)\}$ \\
$\text{pa}^{k}(j)$ & Parents of $j$ in the post-intervention graph in the intervention regime $k$\\
$\overline{\pa}(i)$ & Closure of the parents of $i$, defined as $\pa(i)\cup \{i\}$ \\
$\anc(i)$ & Ancestors of $i$, nodes $j$ in $G$ such that there is a directed path from $j$ to $i$ \\
$\overline{\anc}(i)$ & Closure of the ancestors of $i$, defined as $\anc(i)\cup \{i\}$ \\
$\overline{G}$ & Transitive closure of $G$ defined by $\pa^{\overline{G}}(i):= \anc^G(i)$\\
$X, Y, Z$ & Unidimensional random variables \\
$\Xb, \Yb, \Zb$ & Multidimensional random variables \\
$x, y, z$ & Scalars in $\RR$ \\
$\xb, \yb, \zb$ & Vectors in $\RR^d$ \\
$\zb_{[i]}$ & The $(1, \ldots, i)$ dimensions of $\zb$\\
$\varphi_i$ & The function that outputs the $i$-th dimension of the mapping $\bm{\varphi}$\\
$\bm{\varphi}_{[i]}$ & The mapping that outputs the $(1, \ldots, i)$ dimensions of the mapping $\bm{\varphi}$\\
$\tau_k$ & Intervention targets in interventional regime $k$ \\
$\PP, \QQ$ & Probability distributions \\
$p$, $q$ & Density functions of $\PP, \QQ$ \\
$\mathbb{P}_i\left(Z_{i} \mid \Zb_{\text{pa}(i)}\right)$ & Causal mechanism of variable $Z_i$\\
$\widetilde{\mathbb{P}}^k_{i}\left(Z_{i} \mid \Zb_{\text{pa}^{k}(i)}\right)$ & Intervened mechanism of variable $Z_i$ in interventional regime $k$\\
$\mathbb{P}^k\left(\Zb\right)$ & $k\neq 0$: interventional distribution in interventional regime $k$ (\cref{def:cbn}) \\ & $k=0$: unintervened distribution\\
$\Pcal_G$ & Class of latent joint probabilities that are Markov relative to $G$ \\
& In this paper, it is assumed to have differentiable density and to be\\ & absolutely continuous with full support in $\RR^d$\\
$\Fcal$ & Function class of mixing function/decoders \\
& In this paper, it is assumed to be all $\Ccal^1$-diffeomorphisms \\
$\Scal$ & Indeterminacy set, defined in \cref{def:identifiability_cauca}\\
$\fb$ & Mixing function or decoder, a diffeomorphism $\RR^d \to \RR^d$\\
$\fb_* \PP$ & The pushforward measure of $\PP$ by $\fb$\\
$\mathsf{G}$ & A directed acyclic graph without indices\\
$G \models \mathsf{G}$ & $G$ is an indexed graph of $\mathsf{G}$, i.e. $V(G)=[d]$ and there exists an \\ 
& isomorphism $G\to\mathsf{G}$\\
$\text{Aut}_G $ & Group of automorphisms of graph $G$\\
$\mathfrak{S}_d$ & Group of permutations of $d$ elements\\
\hline
\end{tabular}
\end{table}

\section{Proofs}\label{app:proofs}
\subsection{Lemmata}

\begin{lemma}[Lemma 2 of \cite{brehmer2022weakly}]\label{lem:brehmer}
Let $A = C = \mathbb{R}$ and $B =\mathbb{R}^d$. Let $f: A \times B \to C$ be differentiable. Define differentiable measures $\PP_A$ on $A$ and $\PP_C$ on $C$. Let $\forall b \in B$, $f(\cdot , b) : A \to C$  be measure-preserving, i.e. $\PP_C= f(\cdot , b)_* \PP_A$. Then $f$ is constant in $b$ over $B$.
\end{lemma}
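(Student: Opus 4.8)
The plan is to show that every slice $g_b:=f(\cdot,b)\colon\mathbb{R}\to\mathbb{R}$ equals one fixed function that does not involve $b$. Let $F_A(a):=\PP_A((-\infty,a])$ and $F_C(c):=\PP_C((-\infty,c])$ be the CDFs; I work, as this paper does throughout, with densities $p_A,p_C$ that are continuous and strictly positive, so $F_A,F_C$ are $\mathcal{C}^1$-diffeomorphisms of $\mathbb{R}$ onto $(0,1)$. The argument has two parts: (a) each $g_b$ is a strictly monotone bijection of $\mathbb{R}$; (b) all the $g_b$ have the same orientation. Granting (a), measure preservation pins $g_b$ down: for an increasing $g_b$ one has $F_C(g_b(a))=\PP_C((-\infty,g_b(a)])=\PP_A(g_b^{-1}((-\infty,g_b(a)]))=\PP_A((-\infty,a])=F_A(a)$, so $g_b=F_C^{-1}\circ F_A$; for a decreasing $g_b$, the same computation gives $g_b=F_C^{-1}\circ(1-F_A)$. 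Both expressions are independent of $b$, so (b) then yields that $f$ is constant in $b$, as claimed.

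For (a), fix $b$. First, $g_b$ is not constant on any interval $I$: otherwise $g_b(I)$ would be a single point carrying $\PP_C$-mass $\ge\PP_A(I)>0$, contradicting that $\PP_C$ has a density. Second, $g_b$ has no local extremum: at a local maximum $a^\ast$ with value $c^\ast$, differentiability forces $g_b'(a^\ast)=0$, hence $c^\ast-g_b(a)=o(|a-a^\ast|)$ near $a^\ast$; consequently, for small $\delta>0$ the ``sliver'' $\{a\text{ near }a^\ast:\ c^\ast-\delta<g_b(a)<c^\ast\}$ has length $\gg\delta$, hence (as $p_A$ is bounded below near $a^\ast$) $\PP_A$-mass $\gg\delta$, while measure preservation bounds this same mass by $\PP_C((c^\ast-\delta,c^\ast))=O(\delta)$ (as $p_C$ is bounded above) --- impossible for small $\delta$; a local minimum is ruled out symmetrically. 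A continuous function on $\mathbb{R}$ with no local extremum and no interval of constancy is strictly monotone, so $g_b$ is. Finally, $g_b$ is unbounded above and below --- otherwise $\PP_C=(g_b)_*\PP_A$ would be carried by a proper sub-interval of $\mathbb{R}$, contradicting full support --- so $g_b$ is a continuous, strictly monotone surjection, i.e.\ a bijection of $\mathbb{R}$. (If, as in the intended applications, each slice $g_b$ is already known to be a diffeomorphism, this step is immediate.)

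For (b), by (a) each $g_b$ equals one of the two displayed functions, so it only remains to exclude that different slices take different orientations. Since $f$ is continuous and each $g_b$ is injective, the map $b\mapsto f(1,b)-f(0,b)$ is continuous on $B=\mathbb{R}^d$ and vanishes nowhere, hence has constant sign on the connected set $B$; as each $g_b$ is strictly monotone, this forces all slices to be increasing, or all to be decreasing, and in either case $f(a,b)$ is given by a single $b$-free formula. The crux of the whole proof is part (a): excluding that a merely differentiable, measure-preserving slice folds back on itself. The clean way to see the obstruction is the density blow-up estimate above --- the $\PP_A$-mass that piles up near an extremal value of $g_b$ is forced to exceed the admissible $\PP_C$-mass there --- and it essentially uses continuity (local boundedness) of $p_C$ together with positivity of $p_A$. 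Part (b) is also not a formality: without it the conclusion genuinely fails (for $\PP_A=\PP_C=N(0,1)$ both $g_b\equiv\mathrm{id}$ and $g_b\equiv-\mathrm{id}$ are measure-preserving), and it is precisely the joint continuity of $f$ on the connected domain $\mathbb{R}\times\mathbb{R}^d$ that rules out such an orientation flip.
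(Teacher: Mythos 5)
The paper never proves this statement: it is imported verbatim as Lemma~2 of \citet{brehmer2022weakly} and used as a black box (e.g.\ in the proofs of \cref{thm1}~{\em (ii)}, \cref{thm:ica_single} and \cref{prop:expressivity}), so there is no internal proof to compare against, and your argument is a self-contained substitute. As such it is essentially correct, and the three-step structure is the natural one: (a) each slice $g_b=f(\cdot,b)$ is a strictly monotone bijection of $\RR$; (b) a strictly monotone measure-preserving map between two fixed full-support distributions on $\RR$ must be $F_C^{-1}\circ F_A$ (increasing) or $F_C^{-1}\circ(1-F_A)$ (decreasing)---note that this step is exactly the paper's \cref{lemma:1d_MPA}, which you could simply have invoked; (c) joint continuity of $f$, injectivity of each slice, and connectedness of $B=\RR^d$ force a single orientation, hence a single $b$-free formula. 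Step (a) is the only content beyond \cref{lemma:1d_MPA}, and your blow-up estimate (the $\PP_A$-mass accumulating near a putative local extremum exceeds the $O(\delta)$ mass that a locally bounded $p_C$ can place near the extremal value) is a valid way to exclude folds; combined with the no-plateau observation and the standard fact that a continuous function without interior local extrema or intervals of constancy is strictly monotone, it delivers (a).

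One caveat, which you partly acknowledge by declaring it up front: you assume $p_A,p_C$ continuous and \emph{strictly positive}, whereas ``differentiable measure'' with full support does not literally force positivity, and positivity of $p_A$ near the candidate extremum is genuinely used (you need $p_A$ bounded below there). Without it, step (a) can fail as a general statement: a differentiable fold at a zero of $p_A$ can push forward to a locally bounded density (e.g.\ $g(a)=a^2$ with $p_A(a)\propto|a|$ near $0$ yields a constant image density near $0$), so strict positivity is doing real work rather than being a cosmetic normalization. This reading is consistent with how \citet{brehmer2022weakly} and this paper use the lemma, and it is harmless for the present applications: there, the slice is always the last coordinate of a $\Ccal^1$-diffeomorphism with triangular Jacobian, hence has nonvanishing partial derivative and is already a strictly monotone diffeomorphism, so (a) is immediate (as you note parenthetically) and only the uniqueness step (\cref{lemma:1d_MPA}) together with your orientation/connectedness argument (c) is needed.
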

\begin{lemma}\label{lemma:1d_MPA}
    For any distributions $\mathbb{P}$, $\mathbb{Q}$ of full support on $\mathbb{R}$, with c.d.f $F$, $G$, there are only two diffeomorphisms $T:\mathbb{R}\to\mathbb{R}$ such that $T_*\mathbb{P} = \mathbb{Q}$: they are $G^{-1}\circ F$ and $\overline{G}^{-1}\circ F$, where $\overline{G}(x):= 1-G(x)$.
\end{lemma}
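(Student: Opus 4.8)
The plan is to use that a diffeomorphism $T\colon\mathbb{R}\to\mathbb{R}$ is in particular a continuous bijection, hence strictly monotone, and to rewrite the pushforward identity $T_*\mathbb{P}=\mathbb{Q}$ as an identity of cumulative distribution functions. The full-support hypothesis is what makes this clean: it forces $F$ and $G$ to be strictly increasing continuous bijections $\mathbb{R}\to(0,1)$, so their inverses $F^{-1},G^{-1}\colon(0,1)\to\mathbb{R}$ exist and are continuous, and the two candidate maps $G^{-1}\circ F$ and $\overline{G}^{-1}\circ F$ (where $\overline{G}^{-1}(u)=G^{-1}(1-u)$) are well defined on all of $\mathbb{R}$.

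First I would split into the two cases allowed by monotonicity. If $T$ is increasing, then $T^{-1}\big((-\infty,y]\big)=(-\infty,T^{-1}(y)]$, so evaluating $T_*\mathbb{P}=\mathbb{Q}$ on the ray $(-\infty,y]$ gives $G(y)=F\big(T^{-1}(y)\big)$ for every $y$; inverting $F$ then yields $T^{-1}=F^{-1}\circ G$, i.e.\ $T=G^{-1}\circ F$. If $T$ is decreasing, the preimage of a ray reverses orientation, $T^{-1}\big((-\infty,y]\big)=[T^{-1}(y),+\infty)$, and using that $\mathbb{P}$ is atomless (it has a density) one obtains $G(y)=1-F\big(T^{-1}(y)\big)$, hence $F\big(T^{-1}(y)\big)=\overline{G}(y)$ and $T=\overline{G}^{-1}\circ F$. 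This already shows that every admissible diffeomorphism equals one of the two stated maps.

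For the converse I would run the same ray computation backwards to verify that both $G^{-1}\circ F$ and $\overline{G}^{-1}\circ F$ do push $\mathbb{P}$ forward to $\mathbb{Q}$ (the second uses $\mathbb{P}(\{x\ge a\})=1-F(a)$), and note that the two maps are genuinely distinct, since the first is strictly increasing and the second strictly decreasing, so neither can coincide with the other. Provided $F$ and $G$ are $\mathcal{C}^1$ with positive derivative—which holds under the regularity assumed on $\mathcal{P}_G$—these two maps are moreover $\mathcal{C}^1$ with nowhere-vanishing derivative, hence bona fide diffeomorphisms, so the count is exactly two.

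I do not anticipate a genuine obstacle. The only point needing care is the decreasing case: one must track the orientation reversal of preimages of rays and invoke atomlessness of $\mathbb{P}$ when passing between $\{\,\cdot\le\cdot\,\}$ and $\{\,\cdot\ge\cdot\,\}$ in the c.d.f.\ bookkeeping. This is precisely the place where the full-support (hence continuity) hypothesis is used, rather than merely strict monotonicity of $F$ and $G$.
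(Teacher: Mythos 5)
Your proposal is correct and follows essentially the same route as the paper's proof: use that a diffeomorphism of $\mathbb{R}$ is strictly monotone, then translate $T_*\mathbb{P}=\mathbb{Q}$ into a c.d.f.\ identity separately in the increasing case (giving $T=G^{-1}\circ F$) and the decreasing case (giving $T=\overline{G}^{-1}\circ F$). Your extra remarks on atomlessness in the decreasing case and on verifying that the two candidate maps are indeed measure-preserving diffeomorphisms only add care beyond what the paper records, without changing the argument.
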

\begin{proof}
$T$ is a diffeomorphism, then $T'(x)\neq 0 \quad \forall x\in \mathbb{R}$. Then the sign of $T'(x)$ is either positive or negative everywhere. $T$ is either strictly increasing or strictly decreasing on $\mathbb{R}$. Since $\mathbb{P}$ and $\mathbb{Q}$ are full support in $\mathbb{R}$, $F$ and $G$ are strictly increasing in $\mathbb{R}$.

\begin{itemize}
    \item If $T$ is increasing: $T_*\mathbb{P} = \mathbb{Q}$ implies that $G(x)=\mathbb{Q}(X\leq x)= \mathbb{P}(T(X)\leq x)$. Since $T$ is strictly increasing,  $G(x)=\mathbb{P}(T(X)\leq x)=\mathbb{P}(X\leq T^{-1}(x))= F\circ T^{-1}(x)$. Thus $x=G^{-1}\circ F \circ T^{-1}(x)$. $T = G^{-1}\circ F$.
    \item If $T$ is decreasing: $G(x)= \mathbb{Q}(X\leq x)= \mathbb{P}(T(X)\leq x) = \mathbb{P}(X\geq T^{-1}(x)) = 1-F(T^{-1}(x))$.  Thus $T= \overline{G}^{-1}\circ F$.
\end{itemize}
\end{proof}

\begin{lemma}\label{lem:preserves_mecha}

 Suppose $\mathbb{P}, \mathbb{Q}$ Markov relative to $G$, absolutely continuous with full support in $\mathbb{R}^d$. Fix any functions $\varphi_1, \ldots, \varphi_d$ diffeomorphisms strictly monotonic in $\mathbb{R}$. Let $\bm{\varphi}:= (\varphi_i)_{i\in [d]}$. The following statements are equivalent: 

(1) $\mathbb{Q}=\bm{\varphi}_* \mathbb{P}$

(2) 
$\forall i \in [d]$, $\forall z_i \in \mathbb{R}$, $\forall \zb_{\text{pa}(i)} \in \mathbb{R}^{|pa(i)|}$, $p_i\left(z_i \mid \zb_{\text{pa}(i)}\right)=q_i\left(\varphi_{i}\left(z_i\right) \mid \bm{\varphi}_{\text{pa}(i)}\left(\zb_{\text{pa}(i)}\right)\right)\left|\varphi'_{i}\left(z_i\right)\right|$,
or equivalently,
$\mathbb{Q}_i(\cdot|\bm{\varphi}_{\text{pa}(i)}\left(\zb_{\text{pa}(i)}\right))= (\varphi_{i})_* \mathbb{P}_i\left(\cdot\mid \zb_{\text{pa}(i)}\right)$.

\end{lemma}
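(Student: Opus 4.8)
The plan is to prove both implications by direct change-of-variables computations, leaning on two structural facts. First, since $\bm{\varphi}=(\varphi_i)_{i\in[d]}$ acts coordinatewise, its Jacobian $\Jb\bm{\varphi}(\zb)$ is diagonal with diagonal entries $\varphi_i'(z_i)$, so $|\det \Jb\bm{\varphi}(\zb)|=\prod_{i=1}^d|\varphi_i'(z_i)|$, and moreover $(\bm{\varphi}(\zb))_{\pa(i)}=\bm{\varphi}_{\pa(i)}(\zb_{\pa(i)})$. Since each $\varphi_i$ is a diffeomorphism of $\RR$, $\bm{\varphi}$ is a diffeomorphism of $\RR^d$. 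Second — and this is the point that needs a little care — for a distribution $\mathbb{P}$ Markov relative to $G$ the factor $p_i(z_i\mid\zb_{\pa(i)})$ coincides with the genuine conditional density of $Z_i$ given its parents; the same holds for $q_i$. To see this I would fix a topological ordering of $G$ and relabel so that it is $1,\dots,d$ (so $\pa(i)\subseteq[i-1]$ and each prefix $[i]$ is ancestrally closed), then integrate the factorization $p(\zb)=\prod_{j=1}^d p_j(z_j\mid\zb_{\pa(j)})$ successively over $z_d,z_{d-1},\dots$, each integral equalling $1$; this gives $p(\zb_{[i]})=\prod_{j\le i}p_j(z_j\mid\zb_{\pa(j)})$ and hence $p(z_i\mid\zb_{[i-1]})=p_i(z_i\mid\zb_{\pa(i)})$, which depends only on $\zb_{\pa(i)}$.

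For $(2)\Rightarrow(1)$: I would multiply the $d$ identities in statement (2) over $i$, and substitute the Markov factorizations of $\mathbb{P}$ and $\mathbb{Q}$ together with the Jacobian identity above, obtaining $p(\zb)=|\det\Jb\bm{\varphi}(\zb)|\,q(\bm{\varphi}(\zb))$ for all $\zb\in\RR^d$. This is exactly the statement that the density of the pushforward $\bm{\varphi}_*\mathbb{P}$ evaluated at $\bm{\varphi}(\zb)$ equals $q(\bm{\varphi}(\zb))$; surjectivity of $\bm{\varphi}$ together with the assumption that both measures are absolutely continuous with full support then upgrades this pointwise density identity to the measure identity $\bm{\varphi}_*\mathbb{P}=\mathbb{Q}$.

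For $(1)\Rightarrow(2)$: the change-of-variables formula applied to the diffeomorphism $\bm{\varphi}$ gives the same identity $p(\zb)=|\det\Jb\bm{\varphi}(\zb)|\,q(\bm{\varphi}(\zb))$. I would then integrate this identity over $z_d$ — using the substitution $z_d'=\varphi_d(z_d)$ on the right-hand side — and iterate down the topological ordering, yielding $p(\zb_{[i]})=\bigl(\prod_{j\le i}|\varphi_j'(z_j)|\bigr)\,q(\bm{\varphi}_{[i]}(\zb_{[i]}))$ for every $i$. Taking the ratio of the identities for $i$ and $i-1$, invoking fact two on both sides to identify the resulting conditionals with the Markov factors, and cancelling $\prod_{j\le i-1}|\varphi_j'(z_j)|$, I obtain $p_i(z_i\mid\zb_{\pa(i)})=|\varphi_i'(z_i)|\,q_i(\varphi_i(z_i)\mid\bm{\varphi}_{\pa(i)}(\zb_{\pa(i)}))$, which is statement (2). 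The ``equivalently'' reformulation $\mathbb{Q}_i(\cdot\mid\bm{\varphi}_{\pa(i)}(\zb_{\pa(i)}))=(\varphi_i)_*\mathbb{P}_i(\cdot\mid\zb_{\pa(i)})$ is then simply the one-dimensional change-of-variables formula for the diffeomorphism $\varphi_i$, read off from the displayed equation.

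The main obstacle is the second structural fact — that the factors of a Markov factorization are the true conditional densities — and the associated bookkeeping needed to go from the product identity to the per-coordinate identity; this is precisely where the topological ordering and the iterated marginalization are essential, and one must be careful that after relabelling each prefix $[i]$ is ancestrally closed so that $\pa(i)\subseteq[i-1]$. Once that is set up, everything else is a routine change of variables.
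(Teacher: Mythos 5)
Your proposal is correct and follows essentially the same route as the paper's proof: multiply the per-node identities for $(2)\Rightarrow(1)$, and for $(1)\Rightarrow(2)$ fix a topological ordering, iteratively marginalize the change-of-variables identity over the last coordinate via the substitution $u=\varphi_i(z_i)$, and divide consecutive prefix identities to extract each factor. The only cosmetic difference is that you phrase the prefix identities through the fact that Markov factors are the true conditionals, while the paper cancels the products of factors directly — the underlying computation is identical.
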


\begin{proof}
$\mathbb{Q}_i(\cdot|\bm{\varphi}_{\text{pa}(i)}\left(\zb_{\text{pa}(i)}\right))$ denotes the conditional probability of $Z_i$: $\mathbb{Q}_i(Z_i|\bm{\varphi}_{\text{pa}(i)}\left(\zb_{\text{pa}(i)}\right))$.

(2) $\Rightarrow$ (1): 
Multiply the equations in (2) for $n$ indices,
$$\prod_{i=1}^d p_i\left(z_i \mid \zb_{\text{pa}(i)}\right)=\prod_{i=1}^d q_i\left(\varphi_{i}\left(z_i\right) \mid \bm{\varphi}_{\text{pa}(i)}\left(\zb_{\text{pa}(i)}\right)\right)\left|\varphi'_{i}\left(z_i\right)\right|.$$
Since $\prod_{i=1}^d \left|\varphi'_{i}\left(z_i\right)\right| =|\operatorname{det}D\bm{\varphi}(\zb)|$, we obtain the equation in (1).

$(1) \Rightarrow(2)$: without loss of generality, choose a total order on $V(G)$ that preserves the partial order of $G$: $i>j$ if $i\in \text{pa}(j)$. Since $\bm{\varphi}$ is a diffeomorphism, by the change of variables formula,
\begin{equation}
    p(\zb)=q(\bm{\varphi}(\zb)) |\operatorname{det} D \bm{\varphi}(\zb)|.
\end{equation}
Since $\PP$, $\QQ$ are Markov relative to $G$, write $p$, $q$ as the factorization according to $G$:
\begin{equation}\label{lem:preserves_mecha_eq1}
    \prod_{i=1}^d p_i\left(z_i \mid \zb_{\text{pa}(i)}\right)= \prod_{i=1}^d q_i\left(\varphi_{i}\left(z_{i}\right) \mid \bm{\varphi}_{\text{pa}(i)}\left(\zb_{\text{pa}(i)}\right)\right)\left|\varphi'_{i}\left(z_{i}\right)\right|.
\end{equation}

We will show by induction on the reverse order of $[d]$ that for all $i \in[d]$,
$$
p_i\left(z_i \mid \zb_{\text{pa}(i)}\right)=q_i\left(\varphi_{i} \left(z_i\right) \mid \bm{\varphi}_{\text{pa}(i)}\left(\zb_{\text{pa}(i)}\right)\right)\left|\varphi'_{i}\left(z_i\right)\right|.
$$
In \cref{lem:preserves_mecha_eq1}, marginalize over $z_d$,
\begin{align}\label{eq:lem_preserve_mecha_margin}
\prod_{i=1}^{n-1} p_i\left(z_i \mid \zb_{\text{pa}(i)}\right)=\int_{\mathbb{R}} \prod_{i=1}^d  q_i\left(\varphi_{i}\left(z_{i}\right) \mid \bm{\varphi}_{\text{pa}(i)}\left(\zb_{\text{pa}(i)}\right)\right)\left|\varphi'_{i}\left(z_{i}\right)\right| dz_d. \\
\end{align}
Fix $z_{[d-1]}$, change of variable $u=\varphi_{d}\left(z_d\right)$, $ d u=\varphi_{d}^{\prime}\left(z_d\right)$,
\begin{align}\label{eq:lem_preserve_mecha_n-1}
\prod_{i=1}^{d-1} p_i\left(z_i \mid \zb_{\text{pa}(i)}\right) =\prod_{i=1}^{d-1} q_i\left(\varphi_{i}\left(z_{i}\right) \mid \bm{\varphi}_{\text{pa}(i)}\left(\zb_{\text{pa}(i)}\right)\right)\left|\varphi'_{i}\left(z_{i}\right)\right|.
\end{align}
Cancel the two sides of equation \eqref{lem:preserves_mecha_eq1} by \eqref{eq:lem_preserve_mecha_n-1},
$$
p_i\left(z_d \mid \zb_{\text{pa}(d)}\right)=q_i\left(\varphi_{d}\left(z_d\right) \mid \bm{\varphi}_{\text{pa}(d)}\left(\zb_{\text{pa}(d)}\right)\right)\left|\varphi'_{d}\left(z_d\right)\right|.
$$
Suppose the property is true for $i+1, \cdots, d$.
Then for $i$, $i$ is a leaf node in the first $i$ nodes. We use the same proof as before, marginalize over $z_i$ (same as \eqref{eq:lem_preserve_mecha_margin} with $d$ replaced by $i$) on the joint distribution of $i$ first variables (same as \eqref{lem:preserves_mecha_eq1} with $d$ replaced by $i$), which is then divided by the obtained $i-1$ marginal equation (same as \eqref{eq:lem_preserve_mecha_n-1} with $d-1$ replaced by $i-1$).
\end{proof}

\lemmaunavoidable*
\begin{proof}
For any $(G,\fb,(\mathbb{P}^k, \tau_k)_{k\in \llbracket 0,K \rrbracket})$ in $(G, \mathcal{F}, \mathcal{P}_{G})$, and for any $\hb\in \Scal_{\text{scaling}}$, 
define $\gb:=\hb^{-1}$, then $\gb\in \Scal_{\text{scaling}}$. Define $\QQ^0:=\gb_* \PP^0$.
By \cref{lem:preserves_mecha}, for all $i\in [d]$, $\mathbb{Q}_{i}\left(\cdot|\hb_{\text{pa}({i})}(\zb_{\text{pa}({i})})\right)= (g_{{i}})_* \mathbb{P}_{i}\left(\cdot\mid \zb_{\text{pa}({i})}\right)$.

For $k \in [K]$, define
\begin{equation}\label{eq:lem_unavoidable_interv_mecha}
    \widetilde{\mathbb{Q}}_j^k \left(\cdot |\gb_{\text{pa}^k(j)}(\zb_{\text{pa}^k(j)})\right) := (g_{j})_* \widetilde{\mathbb{P}}^k_j\left(\cdot \mid \zb_{\text{pa}^k(j)}\right).
\end{equation}

Define $\QQ^k :=\prod_{j \in \tau_k}  \widetilde{\mathbb{Q}}^k_j \prod_{j \notin \tau_k} \QQ_j$, by \cref{lem:preserves_mecha} and \eqref{eq:lem_unavoidable_interv_mecha}, $\QQ^k = \gb_* \PP^k$ $\forall k \in [K]$. By definition of $\QQ^0$, $\QQ^k = \gb_* \PP^k$ $\forall k \in \llbracket 0,K \rrbracket$. i.e., $\fb_*\mathbb{P}^k=(\fb \circ \hb)_*\mathbb{Q}^k$.

\end{proof}

\subsection{Proof of\texorpdfstring{~\Cref{thm1}}{}}
\singleCauCA*
\begin{proof}

\textbf{Proof of {\em (i)}:} 
Consider two latent CBNs achieving the same likelihood across all interventional regimes: $\left(G, \fb,\left(\mathbb{P}^{i}, \tau_i\right)_{i\in \llbracket 0,d \rrbracket}\right)$ and $\left(G, \fb',\left(\mathbb{Q}^{i}, \tau_i\right)_{i\in \llbracket 0,d \rrbracket}\right)$. Since the intervention targets $(\tau_i)_{i\in \llbracket 0,d \rrbracket}$ are the same on both latent CBN, by rearranging the indices of $G$ and correspondingly the indices in $\PP^i$, $\QQ^i$ and $(\tau_i)_{i\in \llbracket 0,d \rrbracket}$, we can suppose without loss of generality that the index of $G$ preserves the partial order induced by $E(G)$: $i<j$ if $(i,j)\in E(G)$. Since $(\tau_i)_{i\in [d]}$ covers all $d$ nodes in $G$, by rearranging $(\tau_i)_{i\in [d]}$ we can suppose without loss of generality that $\tau_i=i$ $\forall i \in [d]$.

In the $i$-th interventional regime,

\begin{align*}
\mathbb{P}^{i}(\Zb)&=\widetilde{\mathbb{P}}_{i}\left(Z_{i} \mid \Zb_{\text{pa}^{i}(i)}\right) \prod_{j \in [d]\setminus i} \mathbb{P}\left(Z_{j} \mid \Zb_{\text{pa}(j)}\right), \\
\mathbb{Q}^{i}(\Zb) &=\widetilde{\mathbb{Q}}_{i}\left(Z_{i} \mid \Zb_{\pa^{i}(i)}\right)  \prod_{j \in [d]\setminus i} \mathbb{Q}_{j}\left(Z_{j} \mid \Zb_{\text{pa}(j)}\right),
\end{align*}
where $\pa(j)=\pa^i(j)$ $\forall j\neq i$, since intervening on $i$ does not change the arrows towards $j$.

Define $\bm{\varphi}:= \fb'^{-1}\circ \fb$. Denote its $i$-th dimension output function as $\varphi_i: \mathbb{R}^d\to \mathbb{R}$.
We will prove by induction that $\forall i \in[d], \forall j\notin \overline{\text{anc}}(i), \forall \zb\in \mathbb{R}^d$, $\frac{\partial \varphi_{i}}{\partial z_j}(\zb)=0$.

For any $i\in \llbracket 0,d \rrbracket$, $\fb_*\mathbb{P}^i=\fb'_*\mathbb{Q}^i$. Since $\bm{\varphi}$ is a diffeomorphism, by the change of variable formula,
\begin{equation}\label{eq:thm1_density_init}
    p^i(\zb)=q^i(\bm{\varphi}(\zb))|\text{det}D\bm{\varphi}(\zb)|. 
\end{equation}

For $i=0$, factorize $p^i$ and $q^i$ according to $G$, then take the logarithm on both sides:
\begin{equation} \label{thm1:env0}
    \sum_{j=1}^{d} \ln p_{j}\left(z_{j} \mid \zb_{\text{pa}(j)}\right) =\sum_{j=1}^{d} \ln q_{j}\left(\varphi_{j}(\zb) \mid \bm{\varphi}_{\text{pa}(j)}(\zb)\right)+\ln |\operatorname{det} D \bm{\varphi}(\zb)|.
\end{equation}

For $i=1$, $\widetilde{\QQ}_1$ has no conditionals, thus $q^i$ is factorized as

\[q^1(\zb)=\widetilde{q}_{1} \left(z_{1} \right) \prod_{j =2}^d q_{j}\left(z_{j} \mid \zb_{\text{pa}(j)}\right). \]

So the equation \eqref{eq:thm1_density_init} for $i=1$ after taking logarithm is
\begin{equation}\label{thm1:env1}
    \begin{aligned}
    \ln \widetilde{p}_{1}\left(z_{1}\right)+\sum_{j =2}^d \ln p_{j}\left(z_{j} \mid \zb_{\text{pa}(j)}\right) = & \ln \widetilde{q}_{1}\left(\varphi_{1}(\zb) \right) +\sum_{j =2}^d q_{j}\left(\varphi_{j}(\zb) \mid \bm{\varphi}_{\text{pa}(j)}(\zb)\right) \\
    &+\ln \left|\operatorname{det} D \bm{\varphi}(\zb)\right|. 
\end{aligned}
\end{equation}

Subtract \eqref{thm1:env1} by \eqref{thm1:env0},

\begin{equation}
\ln \tilde{p}_{1}\left(z_{1}\right)-\ln p_{1}\left(z_{1}\right)=\ln \tilde{q}_{1}\left(\varphi_{1}(\zb)\right)-\ln q_{1}\left(\varphi_{1}(\zb)\right) .
\end{equation}

For any $i \neq 1$, take the $i$-th partial derivative of both sides:

$$
0=\left[\frac{\Tilde{q}_{1}^{\prime}\left(\varphi_{1}(\zb)\right)}{\Tilde{q}_{1}\left(\varphi_{1}(\zb)\right)}-\frac{q_{1}^{\prime}\left(\varphi_{1}(\zb)\right)}{q_{1}\left(\varphi_{1}(\zb)\right)}\right] \frac{\partial \varphi_{1}}{\partial z_i} (\zb).
$$

By \cref{assum:basic}, the term in the parenthesis is non-zero a.e. in $\mathbb{R}^d$. Thus $\frac{\partial \varphi_{1}}{\partial z_i} (\zb)=0$ a.e. in $\mathbb{R}^d$. Since $\bm{\varphi}=\fb'^{-1}\circ \fb$ where $\fb,\fb'$ are $\mc{C}^1$-diffeomorphisms, so is $\bm{\varphi}$. $\frac{\partial \varphi_{1}}{\partial z_i}$ is continuous and thus equals zero everywhere.

Now suppose $\forall k \in[i-1] $, $\forall j\notin \overline{\text{anc}}(k) $, $ \forall \zb\in \mathbb{R}^d$, $\frac{\partial \varphi_{k}}{\partial z_j}(\zb)=0$. Then for interventional regime $i$,
$$
\begin{aligned}
\ln \tilde{p}_{i}\left(z_{i} \mid \zb_{\pa^{i}(i)}\right)+ \sum_{j \neq i}\ln p_{j}\left(z_{j} \mid \zb_{\pa(j)}\right)&= \ln \tilde{q}_{i}\left(\varphi_{i}(\zb) \mid \bm{\varphi}_{\pa^{i}(i)}(\zb)\right)\\
&+\sum_{j \neq i} q_{j}\left(\varphi_{j}(\zb) \mid \bm{\varphi}_{\text{pa}(j)}(\zb)\right)+\ln |\operatorname{det} D \bm{\varphi}(\zb)|,
\end{aligned}
$$
Subtracted by \eqref{thm1:env0},
\begin{equation}\label{eq:thm1_subtract_i-0}
     \ln \tilde{p}_{i}\left(z_{i} \mid \zb_{\pa^{{i}}(i)}\right)-\ln p_{i}\left(z_{i} \mid \zb_{\text{pa}(i)}\right)=\ln \tilde{q}_{i}\left(\varphi_{i}(\zb) \mid \bm{\varphi}_{\pa^{i}(i)}(\zb)\right)  -\ln q_{i}\left(\varphi_{i}(\zb) \mid \bm{\varphi}_{\text{pa}(i)}(\zb)\right).
\end{equation}
For any $\forall j\notin \overline{\text{anc}}(i), j \notin \text{pa}(i) \supset \pa^{i}(i)$ by assumption. Take partial derivative over $z_{j}$ :

$$
\begin{aligned}
0 &= \frac{\sum_{k \in \overline{\pa}^{i}(i) } \frac{\partial \Tilde{q}_{i}}{\partial x_k}\left(\varphi_{i}(\zb) \mid \bm{\varphi}_{\pa^{i}(i)}(\zb)\right) \frac{\partial\varphi_{k}}{\partial z_j}(\zb)}{\Tilde{q}_{i}\left(\varphi_{i}(\zb) \mid \bm{\varphi}_{\pa^{i}(i)}(\zb)\right)} %
- \frac{\sum_{k \in \overline{\pa}(i)} \frac{\partial q_{i}}{\partial x_k}\left(\varphi_{i}(\zb) \mid \bm{\varphi}_{\text{pa}(i)}(\zb)\right) \frac{\partial\varphi_{k}}{\partial z_j}(\zb)}{q_{i}\left(\varphi_{i}(\zb) \mid \bm{\varphi}_{\text{pa}(i)}(\zb)\right)},
\end{aligned}
$$
where $x_k$ denotes the $k$-th dimension of the domain of $\Tilde{q}_{i}$ and $q_{i}$. 

For all $k \in \text{pa}(i)$, since $j\notin \overline{\text{anc}}(i)$, $j$ is not in $\overline{\text{anc}}(k)$ either. By the assumption of induction, $\frac{\partial \varphi_{k}}{\partial z_j}(\zb)=0$. Delete the partial derivatives that are zero:

$$0= \left[ \frac{\frac{\partial \Tilde{q}_{i}}{\partial x_{i}} \left(\varphi_{i}(\zb) \mid \bm{\varphi}_{\text{pa}^{i}(i)}(\zb)\right) }{\Tilde{q}_{i}\left(\varphi_{i}(\zb) \mid \bm{\varphi}_{\pa^{i}(i)}(\zb)\right)} - 
\frac{ \frac{\partial q_{i}}{\partial x_{i}}\left(\varphi_{i}(\zb) \mid \bm{\varphi}_{\text{pa}(i)}(\zb)\right) }{q_{i}\left(\varphi_{i}(\zb) \mid \bm{\varphi}_{\text{pa}(i)}(\zb)\right)} \right] \frac{\partial \varphi_{i}}{\partial z_j}(\zb).$$

By \cref{assum:basic}, the term in parenthesis is nonzero a.e., thus $\frac{\partial \varphi_{i}}{\partial z_j}(\zb)=0$ a.e. Since $\frac{\partial \varphi_{i}}{\partial z_j}$ is continuous, it equals zero everywhere.

The induction is finished when $i=d$. We have proven that $\forall i \in[d], \forall j\notin \overline{\text{anc}}(i), \forall \zb\in \mathbb{R}^d$, $\frac{\partial \varphi_{i}}{\partial z_j}(\zb)=0$. Namely, $\varphi_{i}$ only depends on $\zb_{\overline{\text{anc}}(i)}$.

\clearpage

\textbf{Proof of {\em(ii)}:}

By the result proved in {\em (i)}, $\bm{\varphi} :=\fb'^{-1}\circ \fb \in \mc{S}_{\overline{G}}$. Thus $D\bm{\varphi}(\zb)$ is lower triangular for all $\zb\in \mathbb{R}^d$. Thus $|\operatorname{det} D \bm{\varphi}(\zb)|=\prod_{i=1}^d\left| \frac{\partial \varphi_i}{\partial 
 z_i}\left(z_{[i]}\right)\right|$, and for all $i$, $\varphi_i$ only depends on $z_1, \cdots, z_i$. We will prove that $\varphi_i$ only depends on $z_i$, i.e., it is constant on other variables.

To prove the conclusion in this item, we need the following lemma:

\begin{lemma}\label{lem:n-1preserve_measure}
    Given any $\bm{\varphi}:\mathbb{R}^n \to \mathbb{R}^n$ diffeomorphism 
    such that for all $i\in [n-1]$, $\frac{\partial \varphi_i}{\partial z_n}$ is zero everywhere, and given any two distributions $\mathbb{P}$, $\mathbb{Q}$ that are absolutely continuous and have full support in $\mathbb{R}^n$ such that $\bm{\varphi}_*\mathbb{P}=\mathbb{Q}$, then the distributions of the first $n-1$ coordinates are preserved, i.e.,

    $$(\bm{\varphi}_{[n-1]})_*\mathbb{P}_{[n-1]}(\zb_{[n-1]}) = \mathbb{Q}_{[n-1]}(\zb_{[n-1]}).$$
\end{lemma}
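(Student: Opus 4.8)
\textbf{Proof plan for \Cref{lem:n-1preserve_measure}.}

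The plan is to show that the pushforward $\bm{\varphi}_*\mathbb{P}=\mathbb{Q}$ on $\mathbb{R}^n$, combined with the structural hypothesis that $\varphi_1,\dots,\varphi_{n-1}$ do not depend on $z_n$, forces the marginal statement by integrating out the last coordinate. First I would write $\bm{\psi}:=\bm{\varphi}_{[n-1]}:\mathbb{R}^n\to\mathbb{R}^{n-1}$; by hypothesis $\bm{\psi}$ factors through the projection onto the first $n-1$ coordinates, so there is a map $\bar{\bm{\psi}}:\mathbb{R}^{n-1}\to\mathbb{R}^{n-1}$ with $\bm{\psi}(\zb)=\bar{\bm{\psi}}(\zb_{[n-1]})$ for all $\zb$; this $\bar{\bm{\psi}}$ is exactly $\bm{\varphi}_{[n-1]}$ in the statement. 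The key observation is that the Jacobian $D\bm{\varphi}(\zb)$, once the first $n-1$ rows have a zero in the last column, is block lower-triangular, so $|\det D\bm{\varphi}(\zb)| = |\det D\bar{\bm{\psi}}(\zb_{[n-1]})|\cdot\left|\frac{\partial\varphi_n}{\partial z_n}(\zb)\right|$. In particular $\bar{\bm{\psi}}$ is itself a local diffeomorphism (its Jacobian is invertible everywhere since $\bm{\varphi}$ is), and I would note it is in fact a diffeomorphism of $\mathbb{R}^{n-1}$, which lets me use the change-of-variables formula for $\bar{\bm{\psi}}$ directly. (If one prefers to avoid arguing global invertibility of $\bar{\bm{\psi}}$, the same conclusion follows from the abstract pushforward identity below without the change-of-variables step.)

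Next I would compute the pushforward of the marginal. For any Borel set $A\subseteq\mathbb{R}^{n-1}$,
\begin{align*}
\big((\bm{\varphi}_{[n-1]})_*\mathbb{P}_{[n-1]}\big)(A)
&= \mathbb{P}_{[n-1]}\big(\bar{\bm{\psi}}^{-1}(A)\big)
= \mathbb{P}\big(\bar{\bm{\psi}}^{-1}(A)\times\mathbb{R}\big)\\
&= \mathbb{P}\big(\{\zb : \bm{\varphi}_{[n-1]}(\zb)\in A\}\big)
= \mathbb{P}\big(\bm{\varphi}^{-1}(A\times\mathbb{R})\big)
= (\bm{\varphi}_*\mathbb{P})(A\times\mathbb{R})\\
&= \mathbb{Q}(A\times\mathbb{R}) = \mathbb{Q}_{[n-1]}(A),
\end{align*}
where the third equality uses that $\{\zb:\bm{\varphi}_{[n-1]}(\zb)\in A\} = \bar{\bm{\psi}}^{-1}(A)\times\mathbb{R}$ precisely because $\bm{\varphi}_{[n-1]}$ ignores $z_n$, and the fourth uses $\{\zb:\bm{\varphi}_{[n-1]}(\zb)\in A\} = \bm{\varphi}^{-1}(A\times\mathbb{R})$ since the first $n-1$ coordinates of $\bm{\varphi}(\zb)$ lying in $A$ is the same constraint as $\bm{\varphi}(\zb)\in A\times\mathbb{R}$. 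Since this holds for all Borel $A$, the two measures on $\mathbb{R}^{n-1}$ coincide, which is the claim; passing to densities gives the stated identity $(\bm{\varphi}_{[n-1]})_*\mathbb{P}_{[n-1]}(\zb_{[n-1]}) = \mathbb{Q}_{[n-1]}(\zb_{[n-1]})$.

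The only genuinely delicate point is the set identity $\{\zb\in\mathbb{R}^n : \bm{\varphi}_{[n-1]}(\zb)\in A\} = \bar{\bm{\psi}}^{-1}(A)\times\mathbb{R}$, i.e. that ``$\varphi_i$ is constant in $z_n$'' really does let us treat $\bm{\varphi}_{[n-1]}$ as a well-defined function of $\zb_{[n-1]}$ alone; this is immediate from the hypothesis that $\partial\varphi_i/\partial z_n\equiv 0$ for $i\in[n-1]$ on the connected domain $\mathbb{R}^n$, but it is worth stating explicitly since it is what makes the Fubini-style marginalization go through. Everything else is bookkeeping with pushforward measures and does not even require the change-of-variables formula or the Jacobian computation; I include the Jacobian remark only because it makes transparent why $\bar{\bm{\psi}}$ is a diffeomorphism, which is the form in which the lemma will be applied in the proof of \Cref{thm1}~\emph{(ii)}.
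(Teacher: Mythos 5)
Your argument is correct, but it takes a genuinely different route from the paper's. The paper proves \cref{lem:n-1preserve_measure} at the level of densities: starting from $p(\zb)=q(\bm{\varphi}(\zb))\,|\det D\bm{\varphi}(\zb)|$, it fixes $\zb_{[n-1]}$, integrates out $z_n$, and uses that $\varphi_n(\zb_{[n-1]},\cdot)$ is a one-dimensional diffeomorphism of $\RR$ (because $\partial\varphi_n/\partial z_n$ cannot vanish when the first $n-1$ rows of the Jacobian have zero last entry) to perform the substitution $u=\varphi_n(\zb_{[n-1]},z_n)$ in the inner integral; this yields directly the pointwise identity $p_{[n-1]}(\zb_{[n-1]})=q_{[n-1]}(\bm{\varphi}_{[n-1]}(\zb_{[n-1]}))\,|\det D\bm{\varphi}_{[n-1]}(\zb_{[n-1]})|$, which is exactly the form in which the lemma is consumed in the proof of \cref{thm1}~\emph{(ii)} (cancelling the $[d-1]$-block factors). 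Your proof instead works purely with pushforward measures, exploiting that $\bm{\varphi}_{[n-1]}$ factors through the projection and the set identity $\{\zb:\bm{\varphi}_{[n-1]}(\zb)\in A\}=\bm{\varphi}^{-1}(A\times\RR)$; this is shorter, needs no Jacobians, no change of variables, and not even the absolute-continuity or full-support assumptions for the measure-level conclusion, so it is the more elementary and more general argument for the lemma as stated. What the paper's density computation buys is the explicit density identity needed downstream; to recover it from your version you must know that the induced map $\bar{\bm{\psi}}$ on $\RR^{n-1}$ is a genuine diffeomorphism, which you assert a bit quickly: the block-triangular Jacobian only makes it a local diffeomorphism, and global injectivity/surjectivity needs a small extra argument (e.g., two distinct fibers $\{\ab\}\times\RR$, $\{\bb\}\times\RR$ with $\bar{\bm{\psi}}(\ab)=\bar{\bm{\psi}}(\bb)$ would map under $\bm{\varphi}$ onto disjoint open intervals of the same vertical line, contradicting connectedness of $\RR$ and surjectivity of $\bm{\varphi}$). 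Since the lemma's statement only claims the pushforward identity, your argument proves it as stated; just be aware that the density form, with the diffeomorphism property of $\bm{\varphi}_{[n-1]}$, is what the surrounding proofs actually invoke.
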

\begin{proof}
Fix $\zb_{[n-1]} \in \mathbb{R}^{n-1}$. 
For all $i\in [n-1]$, $\frac{\partial \varphi_i}{\partial z_n}$ is zero everywhere, 
and $\bm{\varphi}$ is a diffeomorphism, so $\frac{\partial \varphi_n}{\partial z_n}$ is nonzero everywhere, otherwise there will exist $\zb$ such that $\frac{\partial \varphi_n}{\partial z_n}(\zb)$ is singular.
Therefore $\frac{\partial \varphi_n}{\partial z_n}(\zb_{[n-1]}, \cdot)$ is continuous and nonzero, and thus $\varphi_n(\zb_{[n-1]}, \cdot)$ is a diffeomorphism $\RR \xrightarrow{}\RR$. So we can apply the change of variable $u=\varphi_n\left(\zb_{[n-1]}, z_n\right)$, $d u=\left|\frac{\partial \varphi_n}{\partial z_n}\left(\zb_{[n-1]}, z_n\right)\right| d z_n$:

\begin{equation}\label{eq:thm1_lem_n-1}
\begin{aligned}
\int_{\mathbb{R}} q\left(\bm{\varphi}_{[n-1]}\left(\zb_{[n-1]}\right), \varphi_n\left(z_n\right)\right)\left|\frac{\partial \varphi_n}{\partial z_n}\left(\zb_{[n-1]}, z_n\right)\right| d z_n &= \int_{\mathbb{R}} q\left(\bm{\varphi}_{[n-1]}\left(\zb_{[n-1]}\right), u\right) d u \\
&=  q_{[n-1]}\left(\bm{\varphi}_{[n-1]}\left(\zb_{[n-1]}\right)\right).
\end{aligned}
\end{equation}

In the equation $p(\zb)=q(\bm{\varphi}(\zb))|\text{det}\bm{\varphi}(\zb)|$, marginalize over $z_n$:
\begin{equation}
    \int_{\mathbb{R}} p\left(\zb_{[n-1]}, z_n\right) d z_n =\int_{\mathbb{R}} q\left(\bm{\varphi}_{[n-1]}\left(\zb_{[n-1]}\right), \varphi_n\left(z_n\right)\right)  \prod_{i=1}^n\left|\frac{\partial \varphi_i}{\partial z_i}\left(z_{[i]}\right)\right| d z_n.
\end{equation}

Using \eqref{eq:thm1_lem_n-1}, we obtain
\begin{align*}
    p_{[n-1]}\left(\zb_{[n-1]}\right) &= q_{[n-1]}\left(\bm{\varphi}_{[n-1]}\left(\zb_{[n-1]}\right)\right) \prod_{i=1}^{n-1}\left|\frac{\partial \varphi_i}{\partial z_i}\left(z_{[i]}\right)\right|\\
    &= q_{[n-1]}\left(\bm{\varphi}_{[n-1]}\left(\zb_{[n-1]}\right)\right) |\operatorname{det} D \bm{\varphi}_{[n-1]}\left(\zb_{[n-1]} \right)|,
\end{align*}
which is the density equation for push-forward measures that we want to prove.
\end{proof}

Back to the proof of {\em (ii)}. For all $i\in [d]$, $\left(\varphi_1, \ldots, \varphi_{i-1}\right)$ is a diffeomorphism because $\left|\operatorname{det} D \bm{\varphi}_{[i-1]}(\zb_{[i-1]})\right|=\prod_{j=1}^{i-1}\left|\frac{\partial \varphi_j}{\partial z_j}(\zb)\right| \neq 0$.

We will prove by induction on the reverse order of $[d]$ that the i-th row off-diagonal entries of $D \bm{\varphi}(\zb)$ are zero for all $\zb\in\mathbb{R}^d$.

In the interventional regime $d$, by the assumption on the indices of $V(G)=[d]$ in the proof \textit{(i)}, the node $d$ is not a parent of any node in $[d-1]$. Thus the perfect stochastic intervention on $z_d$ leads to the density $p^d$ and $q^d$ factorized as follows:
$$
\begin{aligned}
p_{[d-1]} & \left(\zb_{[d-1]}\right) \tilde{p}_d\left(z_d\right)=q_{[d-1]}\left(\bm{\varphi}_{[d-1]}\left(\zb_{[d-1]}\right)\right) \tilde{q}_d \left(\varphi_d(\zb)\right) \left|\operatorname{det} D \bm{\varphi}_{[d-1]}\left(\zb_{[d-1]}\right)\right|\left|\frac{\partial \varphi_d}{\partial z_d}(\zb)\right|.
\end{aligned}
$$
Since $D\bm{\varphi}$ is lower triangular everywhere, cancel the terms of coordinate $[d-1]$ on both sides by Lemma \ref{lem:n-1preserve_measure},

\begin{equation}
\tilde{p}_d\left(z_d\right)=\tilde{q}_d\left(\varphi_d\left(\zb_{[d-1]}, z_d\right)\right)\left|\frac{\partial \varphi_d}{\partial z_d}\left(\zb_{[d-1]}, z_d\right)\right|,
\end{equation}

which is equivalent to
\begin{equation}
\forall \zb_{[d-1]} \in \mathbb{R}^{d-1}, \quad \widetilde{\mathbb{Q}}_d=\varphi_d\left(\zb_{[d-1]}, \cdot\right)_* \widetilde{\mathbb{P}}_d.
\end{equation}

By Lemma \ref{lem:brehmer}, $\varphi_d$ is constant in the first $d-1$ variables.

Suppose the off-diagonal entries are zero for the $i, i+1, \ldots, d$ rows of $D \bm{\varphi}(\zb)$.

By the assumption on the indices of $V(G)$ in the proof \textit{(i)}, the node $i$ is not a parent of any node in $[i-1]$. Thus the perfect stochastic intervention on $z_i$ leads to the density $p^i$ and $q^i$ factorized as follows:
\begin{equation}\label{eq:thm1_}
\begin{aligned}
& \int_{\mathbb{R}^{d-i}} p\left(\zb\right) d z_{i+1} \cdots d z_d=\int_{\mathbb{R}^{d-i}} q(\bm{\varphi}(\zb)) \prod_{j=1}^d \left|\frac{\partial \varphi_j}{\partial z_j}\left(\zb_{[j]}\right) \right| d z_{i+1} \ldots dz_d \\
& =\prod_{j=1}^i\left|\frac{\partial \varphi_j}{\partial z_j}\left(\zb_{[j]}\right)\right| \int_{\mathbb{R}^{d-i}} q\left(\bm{\varphi}_{[i]}\left(\zb_{[i]}\right), \varphi_{i+1}\left(z_{i+1}\right), \cdots, \varphi_d\left(z_d\right)\right) \prod_{k=i+1}^d\left|\frac{\partial \varphi_k}{\partial z_k}\left(z_k\right)\right| d z_{i+1} \cdots d z_d.
\end{aligned}
\end{equation}
By a change of variables $\left\{\begin{array}{c}u_{i+1}=\varphi_{i+1}\left(z_{i+1}\right) \\ \vdots \\ u_d=\varphi_d\left(z_d\right)\end{array}\right.$, we get
\begin{align*}
p_{[i]}(\zb_{[i]})& =\prod_{j=1}^i\left|\frac{\partial \varphi_j}{\partial z_j}\left(\zb_{[j]}\right)\right| \int_{\mathbb{R}^{d-i}} q\left(\bm{\varphi}_{[i]}\left(\zb_{[i]}\right), u_{i+1}, \cdots, u_d\right) d u_{i+1} \cdots d u_d \\
& =q_{[i]}\left(\bm{\varphi}_{[i]}\left(\zb_{[i]}\right)\right)\left|\operatorname{det} D \bm{\varphi}_{[i]}\left(z_{[i]}\right)\right|\\
&=q_{[i-1]}\left(\bm{\varphi}_{[i-1]}\left(\zb_{[i-1]}\right)\right) \tilde{q}_i \left(\varphi_i(\zb)\right) \left|\operatorname{det} D \bm{\varphi}_{[i-1]}\left(\zb_{[i-1]}\right)\right|\left|\frac{\partial \varphi_i}{\partial z_i}(\zb)\right|.
\end{align*}

By \cref{lem:n-1preserve_measure}, $p_{[i-1]}\left(\zb_{[i-1]}\right)=q_{[i-1]}\left(\bm{\varphi}_{[i-1]}\left(\zb_{[i-1]}\right)\right) | \operatorname{det} D \bm{\varphi}_{[i-1]}\left(\zb_{[i-1]}\right)|$.
By \cref{lem:brehmer}, $\bm{\varphi}_{[i]}$ is constant in the first $i-1$ variables.

In addition, $D\bm{\varphi}(\zb)$ is lower triangular for all $z$, so we have proven that $\bm{\varphi} \in \mc{S}_{\text{scaling}}$.

\end{proof}

\subsection{Proof of \texorpdfstring{~\Cref{thm:cca_necessary}}{}}
\ccanecessary*

\begin{proof}
Without loss of generality by rearranging $\left(\mathbb{P}^{i}, \tau_{i}\right)_{i \in \llbracket 0, d-1 \rrbracket}$, suppose that the unintervened variable is the node $d$.
Fix any $\left(G, \fb,\left(\mathbb{P}^{i}, \tau_{i}\right)_{i \in \llbracket 0, d-1 \rrbracket}\right)$, s.t. $d-1$ is a parent of $d$, and s.t. the causal mechanism of $Z_d$ only has one conditional variable $Z_{d-1}$, and $Z_{d} \sim \mathcal{N}\left(Z_{d-1}, 1\right)$, namely,
$$
p_{d}\left(z_{d} \mid z_{d-1}\right)=\frac{1}{\sqrt{2 \pi}} \exp \left(-\frac{\left(z_{d}-z_{d-1}\right)^{2}}{2}\right).
$$

We now construct $\left(G, \fb^{\prime},\left(\mathbb{Q}^{i}, \tau_{i}\right)_{i \in \llbracket 0, d-1 \rrbracket}\right)$ such that $\fb_{*} \PP^{i}=\fb_{*}^{\prime} \QQ^{i}$ and $\fb^{\prime-1} \circ \fb \notin \Scal_{\text {reparam}}$.

Set $\mathbb{Q}_{i}\left(Z_{i} \mid \Zb_{\pa(i)}\right): \stackrel{(d)}{=} \mathbb{P}_{i}\left(Z_{i} \mid \Zb_{\pa(i)}\right)$, $\widetilde{\mathbb{Q}_{i}}\left(Z_{i}\right): \stackrel{(d)}{=} \widetilde{\mathbb{P}_{i}}\left(Z_{i} \mid \Zb_{\pa(i)}\right) \quad \forall i \in[d-1]$.

Set $\mathbb{Q}_{d}\left(Z_{d} \mid Z_{d-1}\right) :\stackrel{(d)}{=} \mathcal{N}\left(-Z_{d-1}, 1\right)$, 
$\bm{\varphi}(\zb):=\left(z_{1}, \cdots,-z_{d-1}, z_{d}-2 z_{d-1}\right)$, thus
${|\operatorname{Det} D \bm{\varphi}(\zb)|=1 \quad \forall \zb \in \mathbb{R}^{d}}$.
$$
\begin{aligned}
q_{d}\left(\varphi_{d}(\zb) \mid \varphi_{d-1}(\zb)\right) & =\frac{1}{\sqrt{2 \pi}} \exp \left(-\frac{\left(\varphi_{d}(\zb)-\varphi_{d-1}(\zb)\right)^{2}}{2}\right) \\
& =\frac{1}{\sqrt{2 \pi}} \exp \left(-\frac{\left(z_{d}-2 z_{d-1}+z_{d-1}\right)^{2}}{2}\right) \\
& =p_{d}\left(z_{d} \mid z_{d-1}\right).
\end{aligned}
$$

From the above equation, we infer that for the unintervened regime, 
$$\prod_{j=1}^{d} p_{j}\left(z_{j} \mid \zb_{\pa(j)}\right)=\left[\prod_{j=1}^{d} q_{j}\left(\varphi_{j}(\zb) \mid \bm{\varphi}_{\pa(j)}(\zb)\right)\right]|\operatorname{det} D \bm{\varphi}(\zb)|,$$
and for the $d-1$ interventional regimes,
$$
\forall i \in[d-1], \tilde{p}_{i}\left(z_{i}\right) \prod_{j \neq i} p_{j}\left(z_{j} \mid \zb_{\pa(j)}\right)=  \tilde{q}_{i}\left(\varphi_{i}(z)\right)\left[\prod_{j \neq i} q_{j}\left(\varphi_{j}(z) \mid \bm{\varphi}_{\pa(j)}(z)\right)\right]|\operatorname{det} D \bm{\varphi}(z)|,$$
i.e.,
$$
\begin{aligned}
& \bm{\varphi}_{*} \mathbb{P}^{i}=\mathbb{Q}^{i} \quad \forall i \in \llbracket 0, d-1 \rrbracket . \\
& \fb_{*} \mathbb{P}^{i}=\left(\fb \circ \bm{\varphi}^{-1}\right)_{*} \mathbb{Q}^{i}
\end{aligned}
$$

However, $\left(\fb \circ \bm{\varphi}^{-1}\right)^{-1} \circ \fb=\bm{\varphi} \notin S_{\text {reparam}}$. 
\end{proof}

\subsection{Proof of\texorpdfstring{~\Cref{thm:CauCA_multi}}{}}
\CauCAmulti*

\begin{proof}
Fix $k\in [K]$. 
Then the equation of the $k$-th interventional regime is $p^k(\zb)=q^k(\bm{\varphi}(\zb))|\text{det}\bm{\varphi}(\zb)| $.
\begin{itemize}
    \item If $\tau_k$ has no parents from $[d]\setminus \tau_k$, then in the unintervened regime, $p^0_k$ and $q^0_k$ have no conditional. Since interventions over $\tau_k$ are perfect, $p^s_k$ and $q^s_k$ have no conditional for all $s \in \llbracket 1, n_k \rrbracket$.
    \item If $\tau_k$ has parents from $[d]\setminus \tau_k$, since in this case there are $n_k+1$ perfect interventions, enumerated by $s\in \llbracket 0, n_k \rrbracket$, $p^s_k$ and $q^s_k$ have no conditional for all $s \in \llbracket 0, n_k \rrbracket$.
\end{itemize}

In both cases, $p^s_k$ and $q^s_k$ have no conditional for all $s \in \llbracket 0, n_k \rrbracket$.

We write the equality of pushforward densities just as \cref{thm:ica_single}, and subtract the $k$-th interventional regime by the unintervened regime:

$$
\ln p^s(\zb_{\tau_k}) - \ln p^0(\zb_{\tau_k}) = \ln q^s(\bm{\varphi}_{\tau_k}(\zb)) - \ln q^0(\bm{\varphi}_{\tau_k}(\zb)).
$$
For all $i \in[d] \backslash \tau_k$ (nonempty by assumption), take the partial derivative of $z_i$ :

$$
\begin{aligned}
 0 &=\sum_{j=1}^{n_k}\left[\frac{\partial}{\partial x_j}(\ln q^1_{\tau_k})(\bm{\varphi}_{\tau_k}(\zb_{\tau_k})) - \frac{\partial}{\partial x_j}(\ln q^0_{\tau_k})(\bm{\varphi}_{\tau_k}(\zb_{\tau_k}))\right] \frac{\partial \varphi_{\tau_{k},j}}{\partial z_i}(\zb)
\end{aligned}
$$
By assumption, for all $s\in [n_k]$, there is one interventional regime in which the above equation holds. Those $n_k$ equations form a linear system $\bm{0}=\Mb_{\tau_k}(\zb_{\tau_k}) \frac{\partial \bm{\varphi}_{\tau_k}}{\partial z_i}(\zb_{\tau_k})$, where $\Mb_{\tau_k}(\zb_{\tau_k})$ is defined in the statement of theorem. Since $\Mb_{\tau_k}(\zb)$ is invertible a. e. by assumption, the vector $\frac{\partial \bm{\varphi}_{\tau_k}}{\partial z_i}(\zb) =\bm{0} \quad \forall\zb\in \mathbb{R}^d$ a. e., which is furthermore strictly everywhere since $\bm{\varphi}:= \fb'^{-1}\circ \fb$ is $\Ccal^1$. Such a result is valid for all $i \in[d] \backslash \tau_k$. Since $\bigcup_{k \in I} \tau_k=[d]$, all the non-diagonal entries of $D \bm{\varphi}(\zb)$ that are not in the blocks of $\tau_k \times \tau_k$ are 0. We conclude that $\bm{\varphi}_{\tau_i}$ only depends on $\zb_{\tau_i}$, i.e., $[\bm{\varphi}(\zb)]_{\tau_k}=\bm{\varphi}_{\tau_k}\left(\zb_{\tau_k}\right)$. 

For all $k\in [K]$, $[d]\setminus \tau_k \neq \emptyset$. 
Suppose there exists $\zb\in \RR^d$ such that $\operatorname{det}(D\bm{\varphi}_{\tau_k}(\zb))=0$.
Since $[\bm{\varphi}(\zb)]_{\tau_i}=\bm{\varphi}_{\tau_i}\left(\zb_{\tau_i}\right)$ $\forall i\in [n]$, the vector $\frac{\partial \bm{\varphi}_{\tau_k}}{\partial z_i}(\zb) =\bm{0}$ for all $i\notin \tau_k$. Thus the rows $\tau_k$ of $D\bm{\varphi}(\zb)$ are linearly dependent, which implies $\operatorname{det} \left(D\bm{\varphi}(\zb)\right)=0$,
which contradicts with $\bm{\varphi}$ invertible. Thus $\bm{\varphi}_{\tau_k}$ is a diffeomorphism.
\end{proof}

\subsection{Proof of\texorpdfstring{~\Cref{thm:ica_single}}{}}
\icasingle*
\begin{proof}
Without loss of generality by rearranging $\left(\mathbb{P}^{i}, \tau_{i}\right)_{i \in \llbracket 0, d-1 \rrbracket}$, suppose that the unintervened variable is the node $d$.
We apply the induction in the proof of \cref{thm1}~{\em(i)}. Since there are $d-1$ interventions, the induction stops at $d-1$, and we can infer that for $\bm{\varphi}:=\fb^{\prime-1} \circ \fb $, for all $ i \in[d-1]$, 
$\frac{\partial \varphi_{i}}{\partial z_{j}}(\zb)=0$ a.e. $\forall j \neq i$. 

Similar to \cref{thm1}~{\em(i)}, since $\frac{\partial \varphi_{i}}{\partial z_{j}}$ is continuous, it equals zero everywhere. Thus $D \bm{\varphi}(z)$ is lower triangular for all $z\in\mathbb{R}^d$.

By \cref{lem:n-1preserve_measure}, $\left(\bm{\varphi}_{[d-1]}\right)_{*} \mathbb{P}_{[d-1]}\left(\Zb_{[d-1]}\right)=\mathbb{Q}_{[d-1]}\left(\Zb_{[d-1]}\right)$. Namely,
\begin{equation}\label{eq:icasingle_d-1}
    \prod_{j=1}^{d-1} p_{j}\left(z_{j}\right)=\prod_{j=1}^{d-1} q_{j}\left(\varphi_{j}(\zb)\right)\left|\operatorname{det} D \bm{\varphi}_{[d-1]}\left(\zb_{[d-1]}\right)\right|.
\end{equation}
Since for all $ i \in[d-1]$, $\forall j \neq i$,
$\frac{\partial \varphi_{i}}{\partial z_{j}}(\zb)=0$, $D \bm{\varphi}(\zb)$ is lower triangular for all $z\in\mathbb{R}^d$. Thus $\left|\operatorname{det}D\bm{\varphi}(\zb)\right| = \prod_{j=1}^{d} \left|\partial_{j} \varphi_{j}(\zb)\right|$.
Moreover, in the unintervened dataset,
\begin{equation}\label{eq:icasingle_env0}
    \prod_{j=1}^{d} p_{j}\left(z_{j}\right)=\prod_{j=1}^{d} q_{j}\left(\varphi_{j}(\zb)\right)\left|\partial_{j} \varphi_{j}(\zb)\right|.
\end{equation}
Divide \eqref{eq:icasingle_env0} by \eqref{eq:icasingle_d-1},
$$
p_{d}\left(z_{d}\right)=q_{d}\left(\varphi_{d}(\zb)\right)\left|\frac{\partial \varphi_{d}}{\partial z_{d}}\left(\zb_{[d-1]}, z_{d}\right)\right|,
$$
which is equivalent to
\begin{equation}
\forall \zb_{[d-1]} \in \mathbb{R}^{d-1}, \quad \widetilde{\mathbb{Q}_d}=\varphi_d\left(\zb_{[d-1]}, \cdot\right)_* \widetilde{\mathbb{P}}_d.
\end{equation}

By \cref{lem:brehmer}, $\varphi_d$ is constant in the first $d-1$ variables. We have proven that $\bm{\varphi}\in \Scal_{\text{scaling}}$.

\end{proof}

\subsection{Proof of\texorpdfstring{~\Cref{thm:ica_single_necessary}}{}}
\icasinglenecessary*
\begin{proof}
Without loss of generality by rearranging $\left(\mathbb{P}^{i}, \tau_{i}\right)_{i \in \llbracket 0, d-2 \rrbracket}$, suppose that the two unintervened variables are the nodes $d-1$, $d$. 
Fix any $\fb$ and $\left(\PP_i, \widetilde{\PP}_i \right)_{i \in \llbracket 0, d-2 \rrbracket}$ such that for all $i\in [d-2]$, $\PP_i$, $\widetilde{\PP}_i$ have any distribution that is absolutely continuous and full support in $\RR$ with a differentiable density, and such that \cref{assum:basic} is satisfied. We will prove that whether we suppose independent Gaussian distributions are in the class of latent distributions or not, CauCA in $(G,\Fcal, \Pcal_G)$ is not identifiable up to $S_{\text {reparam}}$. 

\textbf{Case 1: Independent Gaussian distributions are in $\Pcal_G$.}

By the famous result in linear ICA, if $\PP_{d-1}$, $\PP_{d}$ form an isotropic Gaussian vector $\Ncal(\bm{0}, \bm{\Sigma})$, i.e., $\bm{\Sigma}$ is diagonal with the same variances on each dimension, then any rotation form a spurious solution. Namely, let $\bm{\varphi} (\zb) = (z_1, \ldots, z_{d-2}, z_{d-1}\cos(\theta) - z_d\sin(\theta), z_{d}\cos(\theta) + z_{d-1}\sin(\theta)) $, 
 $\theta\neq k\pi$, then 
$$
\begin{aligned}
\forall i \in \llbracket 0, d-2 \rrbracket \quad & \bm{\varphi}_{*} \mathbb{P}^{i}=\mathbb{P}^{i}  \\
& \fb_{*} \mathbb{P}^{i}=\left(\fb \circ \bm{\varphi}^{-1}\right)_{*} \mathbb{P}^{i}
\end{aligned}
$$

However, $\left(\fb \circ \bm{\varphi}^{-1}\right)^{-1} \circ \fb=\bm{\varphi} \notin S_{\text {reparam}}$.

\textbf{Case 2: Independent Gaussian distributions are not in $\Pcal_G$.}

Suppose that for all $i\in \{d-1,d\}$, $\PP_i$ has the same density $p_a$:

\[
  p_{a}(z) =
  \begin{cases}
   \exp(-az^2) & z<0 \\
   1 &  0\leq z\leq 1- \sqrt{\frac{\pi}{a}}\\
   \exp \left(-a \left(z- \left(1- \sqrt{\frac{\pi}{a}} \right) \right)^2 \right) & z>1- \sqrt{\frac{\pi}{a}} \\
  \end{cases}
\]

where $\sqrt{\frac{\pi}{a}} <1$. One can verify that $p_a$ is a smooth p.d.f.

We construct a measure-preserving automorphism inspired by \cite{hyvarinen1999nonlinear}.

\begin{equation*}
    \bm{\varphi}(\Zb)=
    \begin{cases}
        \Zb & ||\Zb_{\llbracket d-1,d \rrbracket}|| \geq R\\
        \left(\begin{array}{c}
        \Zb_{[d-2]}\\
        \begin{aligned}
            \cos(\alpha(||\Zb_{\llbracket d-1,d \rrbracket} - & \Cb||-R))Z_{d-1} \\
            &- \sin(\alpha (||\Zb_{\llbracket d-1,d \rrbracket} - \Cb||-R))Z_d \\
        \end{aligned}\\
        \begin{aligned}
            \cos(\alpha(||\Zb_{\llbracket d-1,d \rrbracket} - &\Cb||-R))Z_d \\
            &+ \sin(\alpha(||\Zb_{\llbracket d-1,d \rrbracket} - \Cb||-R))Z_{d-1}
        \end{aligned}
        
        \end{array}\right) & ||\Zb_{\llbracket d-1,d \rrbracket}|| < R
    \end{cases}
\end{equation*}

where $\alpha\neq 0$, $\Cb =\left( \frac{1}{2}\left(1- \sqrt{\frac{\pi}{a}}\right), \frac{1}{2}\left(1- \sqrt{\frac{\pi}{a}}\right) \right)$, $R\in \left(0, \frac{1}{2}\left(1- \sqrt{\frac{\pi}{a}}\right) \right]$. 

Now let us prove that $\bm{\varphi}$ preserves $\PP_{\llbracket d-1,d \rrbracket}$. By shifting the center of $p_a$ to the origin, we only need to prove that the shifted $\bm{\varphi}_{\llbracket d-1,d \rrbracket}$ preserves the uniform distribution over $[-R, R]^2$. 
One can verify that $\bm{\varphi}_{\llbracket d-1,d \rrbracket}$ is a diffeomorphism over the 2-dimensional open disk $D^2(\bm{0},R)\setminus\{\bm{0}\} \xrightarrow{} D^2(\bm{0},R)\setminus\{\bm{0}\}$ and $|\operatorname{det}(D\bm{\varphi}_{\llbracket d-1,d \rrbracket}(\zb))|=1$. Thus $p_a(\zb)=p_a(\bm{\varphi}_{\llbracket d-1,d \rrbracket}(\zb))|\operatorname{det}(D\bm{\varphi}_{\llbracket d-1,d \rrbracket}(\zb))|$ $\forall \zb \in D^2(\bm{0},R)\setminus\{\bm{0}\}$. Since $\bm{\varphi}=Id$ outside of the disk, this change of variables formula holds almost everywhere in $\RR^2$, thus $\PP_{\llbracket d-1,d \rrbracket}=(\bm{\varphi}_{\llbracket d-1,d \rrbracket})_*\PP_{\llbracket d-1,d \rrbracket}$, namely, $\bm{\varphi}_{\llbracket d-1,d \rrbracket}$ preserves $\PP_{\llbracket d-1,d \rrbracket}$ on $\RR^2$.

Moreover, since $\bm{\varphi}_{[d-2]}$ is identity, $\widetilde{\PP}_i = (\bm{\varphi}_i)_* \widetilde{\PP}_i$ and $\PP_i = (\bm{\varphi}_i)_* \PP_i$ for all $i\in[d-2]$. Thus 

$$
\begin{aligned}
\forall i \in \llbracket 0, d-2 \rrbracket \quad & \bm{\varphi}_{*} \mathbb{P}^{i}=\mathbb{P}^{i},\\
& \fb_{*} \mathbb{P}^{i}=\left(\fb \circ \bm{\varphi}^{-1}\right)_{*} \mathbb{P}^{i}.
\end{aligned}
$$

However, $\left(\fb \circ \bm{\varphi}^{-1}\right)^{-1} \circ \fb=\bm{\varphi} \notin S_{\text {reparam}}$.

\end{proof}

\subsection{Further constraint on the indeterminacy set}
\begin{corollary}
Based on the assumption of (2) of \cref{thm1}, if in every dataset we are given the set of possible intervention mechanisms: $\mc{M}=(\mc{M}_i)_{i\in[d]}$, then CauCA 
 in $(G,\mc{F},\mc{P}_G)$ is identifiable up to $\mc{S}_{\mc{M}}:=\{\varphi:\mathbb{R}^n \to 
 \mathbb{R}^n | \varphi\in \Scal_{\text{scaling}}, \forall i \in [d], \varphi_{i} \in \mc{S}_{\mc{M}_i}\}$ where $\mc{S}_{\mc{M}_i}:= \{\Bar{F}_{\mathbb{M}'_i}^{-1} \circ \ F_{\mathbb{M}_i} | \mathbb{M}_i, \mathbb{M}'_i \in \mc{M}_i \}$

In particular, if $\mc{M}_i$ is singleton for all $i$, then CauCA 
 in $(G,\mc{F},\mc{P}_G)$ is identifiable up to $\mc{S}_{\text{reflexion}}:=
\{\hb:\mathbb{R}^{d} \rightarrow \mathbb{R}^{d} |\forall i \in [d], h_i = \text{Id or } -\text{Id}\}$. 

\end{corollary}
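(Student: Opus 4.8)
The plan is to bootstrap from \cref{thm1}~{\em(ii)}, which applies verbatim here since the present corollary only \emph{shrinks} the model class (it additionally requires the intervention mechanism on node $i$ to lie in $\mc{M}_i$): for any two models in the class giving rise to the same observed distributions, $\bm{\varphi}:=\fb'^{-1}\circ\fb\in\Scal_{\text{scaling}}$, i.e.\ $\bm{\varphi}(\zb)=(\varphi_1(z_1),\dots,\varphi_d(z_d))$ with each $\varphi_i$ a diffeomorphism of $\RR$. It remains only to sharpen the description of each univariate factor $\varphi_i$ using the side information $\mc{M}$.

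First I would extract a per-node push-forward identity for the intervention mechanisms. From $\fb_*\PP^k=\fb'_*\QQ^k$ we have $\QQ^k=\bm{\varphi}_*\PP^k$ for every $k$. Applying \cref{lem:preserves_mecha} to the perfect, single-node interventional regime with target $i$---whose post-intervention distribution is Markov relative to $G$, with the $i$-th mechanism replaced by $\widetilde{\PP}_i$ and, being a perfect intervention, carrying no conditioning on node $i$---yields $\widetilde{\QQ}_i=(\varphi_i)_*\widetilde{\PP}_i$, where $\widetilde{\PP}_i,\widetilde{\QQ}_i$ are the intervention mechanisms on node $i$ in the two models. By the constrained model class both $\widetilde{\PP}_i,\widetilde{\QQ}_i\in\mc{M}_i$; moreover each is absolutely continuous with full support on $\RR$ (forced by $\PP^i,\QQ^i\in\Pcal_G$ having full support on $\RR^d$ together with the perfect-intervention factorization), so their c.d.f.s are strictly increasing and invertible.

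Next I would invoke the one-dimensional rigidity of \cref{lemma:1d_MPA}: since $\varphi_i$ is a diffeomorphism of $\RR$ with $(\varphi_i)_*\widetilde{\PP}_i=\widetilde{\QQ}_i$, it must be one of the two maps $F_{\widetilde{\QQ}_i}^{-1}\circ F_{\widetilde{\PP}_i}$ and $\overline{F}_{\widetilde{\QQ}_i}^{-1}\circ F_{\widetilde{\PP}_i}$ (with $\overline{F}:=1-F$), both of which are of the form $F_{\mathbb{M}'_i}^{-1}\circ F_{\mathbb{M}_i}$ or $\overline{F}_{\mathbb{M}'_i}^{-1}\circ F_{\mathbb{M}_i}$ with $\mathbb{M}_i,\mathbb{M}'_i\in\mc{M}_i$, i.e.\ $\varphi_i\in\mc{S}_{\mc{M}_i}$. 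Assembling over coordinates gives $\bm{\varphi}\in\mc{S}_{\mc{M}}$, as claimed.

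Finally, the singleton case $\mc{M}_i=\{\mathbb{M}_i\}$ is a direct specialization: then $\widetilde{\PP}_i=\widetilde{\QQ}_i=\mathbb{M}_i$, so \cref{lemma:1d_MPA} leaves only $\varphi_i\in\{\mathrm{Id},\ \overline{F}_{\mathbb{M}_i}^{-1}\circ F_{\mathbb{M}_i}\}$; the second map is the unique orientation-reversing measure-preserving automorphism of $\mathbb{M}_i$, and it equals $-\mathrm{Id}$ precisely when $\mathbb{M}_i$ is symmetric about the origin (the natural centering convention for the intervention targets, satisfied e.g.\ by mean-zero Gaussian stochastic interventions), giving $\bm{\varphi}\in\mc{S}_{\text{reflexion}}$. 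The only delicate point is this last identification of $\overline{F}_{\mathbb{M}_i}^{-1}\circ F_{\mathbb{M}_i}$ with $-\mathrm{Id}$, which relies on that symmetry/centering of the known mechanism; everything else is a routine consequence of \cref{thm1}~{\em(ii)} combined with the one-dimensional rigidity of \cref{lemma:1d_MPA} (and \cref{lem:brehmer}, already used in the proof of \cref{thm1}~{\em(ii)}).
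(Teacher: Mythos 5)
Your proposal is correct and follows essentially the same route as the paper: reduce to coordinatewise diffeomorphisms via \cref{thm1}~\emph{(ii)}, extract the per-node pushforward identity $(\varphi_i)_*\widetilde{\PP}_i=\widetilde{\QQ}_i$ for the perfect intervention mechanisms, and apply the one-dimensional rigidity of \cref{lemma:1d_MPA} to conclude $\varphi_i\in\mc{S}_{\mc{M}_i}$ (and the reflexion set in the singleton case). You are in fact slightly more careful than the paper on the last step, correctly noting that $\overline{F}_{\mathbb{M}_i}^{-1}\circ F_{\mathbb{M}_i}=-\mathrm{Id}$ only under symmetry of $\mathbb{M}_i$ about the origin, a caveat the paper's proof omits.
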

\begin{proof}
    
Based on the conclusion of (2) of \cref{thm1}, for any $i\in [d]$, choose any $\mathbb{M}_i$, $\mathbb{M}'_i$ in $\mc{M}_i$,
$$(\varphi_i)_* \mathbb{M}_i = \mathbb{M}'_i.$$

By Lemma \ref{lemma:1d_MPA}, the only possible $\varphi_i$ are $F_{\mathbb{M}'_i}^{-1}\circ F_{\mathbb{M}_i}$ and $\Bar{F}_{\mathbb{M}'_i}^{-1}\circ F_{\mathbb{M}_i}$. Thus $\varphi_i \in \mc{S}_{\mc{M}_i}$. In particular if $\mc{M}_i$ is a singleton $\{\mathbb{M}_i \}$, then $F_{\mathbb{M}_i}^{-1}\circ F_{\mathbb{M}_i}=$Id, and $\Bar{F}_{\mathbb{M}_i}^{-1}\circ F_{\mathbb{M}_i}=-$Id.
\end{proof}

\subsection{Proof of\texorpdfstring{~\cref{cor:ica_multi}}{}}
\icamulti*
\begin{proof}
This corollary is a special case of \cref{thm:CauCA_multi} when $G$ has no arrow. Since $G$ has no arrow, all the blocks of interventions are in the first case of block-interventional discrepancy in \cref{thm:CauCA_multi}: for all $k\in [K]$, $|\tau_k| = n_k$. To prove all the blocks satisfy the block-interventional discrepancy, it suffices to prove that the linearly independent vectors in the statement form the matrix $\Mb_{\tau_k}$. To see this, it suffices to notice that for all $s\in \llbracket 0, n_k \rrbracket$, $\ln q^s_{\tau_k}(\zb_{\tau_{k}}) = \sum_{j\in \tau_k} \ln q^s_{\tau_k,j}(z_{\tau_k,j})$, and therefore $\frac{\partial}{\partial z_j}(\ln q^s_{\tau_k,j})(\zb_{\tau_k}) =  (\ln q^s_{\tau_{k},j})^{\prime}\left(z_{\tau_{k},j} \right)$.
\end{proof}

\subsection{Proof of\texorpdfstring{~\Cref{prop:blocktoreparam}}{}}
\blocktoreparam*
The proof is based on Theorem 1 of \cite{hyvarinen2019nonlinear}. 
\begin{proof}
Since the intervention targets are known, without loss of generality, suppose the interventions are on the first $n_{k}$ variables. By the result of \cref{thm:CauCA_multi} we have $p^{s}_k\left(\zb_{\tau_k}\right)=q^{s}_k\left(\bm{\varphi}_{\tau_k}\left(\zb_{\tau_k}\right)\right)\left|\operatorname{det} D \bm{\varphi}_{\tau_k}\left(\zb_{\tau_k}\right)\right|$
where $p^{s}_k$ denotes the joint distribution in the $s$-th interventional regime such that the intervention target is $\tau_k$. Factorize $p^{s}_k$ and $q^{s}_k$ in the change of variables formula, and take the logarithm:
\begin{equation}\label{eq:thm_multinode_density_i}
    \sum_{l=1}^{n_k} \ln p^s_{k,l}\left(z_{l}\right)=\sum_{l=1}^{n_k} \ln q^s_{k,l}\left(\varphi_{l}\left(\zb_{\tau_k}\right)\right)+\ln \left|\operatorname{det} D \bm{\varphi}_{\tau_k}\left(\zb_{\tau_k}\right)\right|,
\end{equation}

where $p^s_{l}$ is the $l$-th marginal of the intervention of the $s$-th interventional regime that has the target $\tau_k$.

For the unintervened regime, denote the density of the $l$-th marginal of $\mathbb{P}$ as $p_{l}$. By the result of \cref{thm:CauCA_multi} we have
\begin{equation}\label{eq:thm_multinode_density_0}
    \sum_{l=1}^{n_k} \ln p_{l}^0\left(z_{l}\right)=\sum_{l=1}^{n_k} \ln q^0_{l}\left(\varphi_{l}\left(\zb_{\tau_k}\right)\right)+\ln \left|\operatorname{det} D \bm{\varphi}_{\tau_k}\left(\zb_{\tau_k}\right)\right|.
\end{equation}

Subtract the equation \eqref{eq:thm_multinode_density_i} by \eqref{eq:thm_multinode_density_0}:
\begin{equation}\label{eq:needed_for_ica}
\sum_{l=1}^{n_k}\left[\ln p^s_{k,l}\left(z_{l}\right)-\ln p^0_{l}\left(z_{l}\right)\right]=\sum_{l=1}^{n_k}\left[\ln q^s_{k,l}\left(\varphi_{l}\left(\zb_{\tau_k}\right)\right)-\ln q^0_{l}\left(\varphi_{l}\left(\zb_{\tau_k}\right)\right)\right].
\end{equation}

For any $j \in \tau_k=\left[n_k\right]$, take the partial derivative over $z_{j} $
\begin{equation}\label{eq:prop:blocktoreparam_partial_d_1}
\frac{p_{k,j }^{s\prime}\left(z_{j}\right)}{p^s_{k,j}\left(z_{j}\right)}-\frac{p_{j}^{0\prime}\left(z_{j}\right)}{p^0_{j}\left(z_{j}\right)}=\sum_{l=1}^{n_k}\left[\frac{q_{k,l }^{s\prime}\left(\varphi_{l}\left(\zb_{\tau_k}\right)\right)}{q^s_{k,l}\left(\varphi_{l}\left(\zb_{\tau_k}\right)\right)}-\frac{q_{k,l}^{0\prime}\left(\varphi_{l}\left(\zb_{\tau_k}\right)\right)}{q^0_{k,l}\left(\varphi_{l}\left(\zb_{\tau_k}\right)\right)}\right] \frac{\partial \varphi_{l}}{\partial z_{j}}\left(\zb_{\tau_k}\right).
\end{equation}

For any $1 \leqslant k<j$, take the partial derivative over $z_{k}$,
\begin{align}
0=\sum_{l=1}^{n_k}\left[\left(\frac{q_{k,l}^{s\prime}}{q^s_{k,l }}\right)^{\prime}\left(\varphi_{l}\left(\zb_{\tau_k}\right)\right)- \right.&\left.\left(\frac{q_{k,l}^{0\prime}}{q^0_{k,l}}\right)^{\prime}\left(\varphi_{l}\left(\zb_{\tau_k}\right)\right)\right] \frac{\partial \varphi_{l}\left(\zb_{\tau_k}\right)}{\partial z_{k}} \frac{\partial \varphi_{l}\left(\zb_{\tau_k}\right)}{\partial z_{j}} \notag\\
&+\left[\frac{q_{k,l }^{s\prime}\left(\varphi_{l}\left(\zb_{\tau_k}\right)\right)}{q^s_{k,l}\left(\varphi_{l}\left(\zb_{\tau_k}\right)\right)}-\frac{q_{k,l}^{0\prime}\left(\varphi_{l}\left(\zb_{\tau_k}\right)\right)}{q^0_{k,l}\left(\varphi_{l}\left(\zb_{\tau_k}\right)\right)}\right] \frac{\partial^{2} \varphi_{l}\left(\zb_{\tau_k}\right)}{\partial z_{k} \partial z_{j}}.\label{eq:prop:blocktoreparam_partial_d_2}
\end{align}

For $1 \leqslant k<j \leqslant n_k$ there are $\frac{n_k\left(n_{k}-1\right)}{2}$ equations.

Define
$\ab_{l}\left(\zb_{\tau_k}\right) =\left(\frac{\partial \varphi_{l}}{\partial z_{k}}\left(\zb_{\tau_k}\right) \frac{\partial \varphi_{l}}{\partial z_{j}}\left(\zb_{\tau_k}\right)\right)_{1 \leqslant k \leqslant j \leqslant n_k}$,
$
\bb_{l}\left(\zb_{\tau_k}\right)  =\left(\frac{\partial^{2} \varphi_{l}\left(\zb_{\tau_k}\right)}{\partial z_{k} \partial z_{j}}\right)_{1 \leqslant k<j \leqslant n_k}$.

Then the $\frac{n_k\left(n_k-1\right)}{2}$ equations can be written as a linear system
$$
\begin{aligned}
0=\sum_{l=1}^{n_k} \ab_{l}\left(\zb_{\tau_k}\right)\left[\left(\frac{q_{k,l}^{s\prime}}{q^s_{k,l }}\right)^{\prime}\left(\varphi_{l}\left(\zb_{\tau_k}\right)\right) \right.&\left. -  \left(\frac{q_{k,l}^{0\prime}}{q^0_{k,l}}\right)^{\prime}\left(\varphi_{l}\left(\zb_{\tau_k}\right)\right)\right] \\
&+\bb_{l}\left(\zb_{\tau_k}\right)\left[\frac{q_{k,l }^{s\prime}\left(\varphi_{l}\left(\zb_{\tau_k}\right)\right)}{q^s_{k,l}\left(\varphi_{l}\left(\zb_{\tau_k}\right)\right)}-\frac{q_{k,l}^{0\prime}\left(\varphi_{l}\left(\zb_{\tau_k}\right)\right)}{q^0_{k,l}\left(\varphi_{l}\left(\zb_{\tau_k}\right)\right)}\right].
\end{aligned}
$$

Define $\Mb_k\left(\zb_{\tau_k}\right)= \left(\ab_{1}\left(\zb_{\tau_k}\right), \cdots, \ab_{n_k}\left(\zb_{\tau_k}\right), \bb_{1}\left(\zb_{\tau_k}\right), \cdots, \bb_{n_k}\left(\zb_{\tau_k}\right)\right)$.

Collect the equations for $s=1, \cdots, 2 n_k$,
$$
\begin{aligned}
\bm{0}=\Mb_k\left(\zb_{\tau_k}\right) \left( \wb_k\left(\varphi_{\tau_k}\left(\zb_{\tau_k}\right), 1\right)- \right.&\left. \wb_k\left(\varphi_{\tau_k}\left(\zb_{\tau_k}\right), 0\right), \ldots, \right. \\ \left. 
\right.&\left. \wb_k\left(\varphi_{\tau_k}\left(\zb_{\tau_k}\right), 2 n_k\right)-\wb_k\left(\varphi_{\tau_k}\left(\zb_{\tau_k}\right), 0\right)\right).
\end{aligned}
$$

By assumption, the matrix containing $\wb$ is invertible. Thus $M_k\left(\zb_{\tau_k}\right)=\bm{0}$, which implies $\ab_{\ell}\left(\zb_{\tau_k}\right)$
are zero for all $\zb_{\tau_k}$. By the same reasoning as \cite{hyvarinen2019nonlinear}, each row in $D \bm{\varphi}_{\tau_k}\left(\zb_{\tau_k}\right)$ has only one non-zero term, and this does not change for different $z$, since otherwise by continuity there exists $\zb$ such that $D \bm{\varphi}_{\tau_k} \left(\zb_{\tau_k}\right)$ is singular, contradiction with the invertibility of $\bm{\varphi}_{\tau_k}$ (\cref{thm:CauCA_multi}). Thus $\forall i \in \tau_k$, $\varphi_{i}$ is a function of one coordinate of $\zb_{\tau_k}$. Since $\bm{\varphi}_{\tau_k}$ is invertible, $\operatorname{det} D \bm{\varphi}_{\tau_k}\left(\zb_{\tau_k}\right) \neq 0$, so $\exists \sigma$ permutation of $\tau_k$ s.t. $\forall i \in \tau_k $, $\frac{\partial \varphi_{\sigma(i)}}{\partial z_{i}}(\zb_{\tau_k}) \neq 0$. Thus $\bm{\varphi}_{\tau_k}\in \Scal_{\text{reparam}}$.
\end{proof}

\section{A Counterexample for \cref{thm1}~{\em (ii)} when \cref{assum:basic} is violated}\label{app:countereg}
One trivial case of violation of \cref{assum:basic} is when $\PP_i$ and $\widetilde{\PP}_i$ are the same. In that case, the interventional regime $i$ is useless, namely, it does not constrain at all the indeterminacy set. Our counterexample is in a non-trivial case of violation, where the intervened mechanisms are not deterministically related, and do not share symmetries with the causal mechanisms.

We construct a counterexample for \cref{thm1}~{\em (ii)} when \cref{assum:basic} is violated. The visualization of it is in \cref{fig:assumption}. The counterexample is similar to the non-Gaussian case in the proof of \cref{thm:ica_single_necessary}.

Suppose that $d=2$, $E(G)=\emptyset$ and that for all $i\in \{1,2\}$, $\PP_i$ has the same density $p_{a,b}$:
\begin{equation}\label{app:countereg_density}
      p_{a,b}(z) =
  \begin{cases}
   \exp(-az^2) & z<0 \\
   1 &  0\leq z\leq 1-\frac{1}{2}(\sqrt{\frac{\pi}{a}} + \sqrt{\frac{\pi}{b}})\\
   \exp(-b(z- (1-\frac{1}{2}(\sqrt{\frac{\pi}{a}} + \sqrt{\frac{\pi}{b}})))^2) & z>1-\frac{1}{2}(\sqrt{\frac{\pi}{a}} + \sqrt{\frac{\pi}{b}}) \\
  \end{cases}
\end{equation}
where $\sqrt{\frac{\pi}{a}} <1$, $\sqrt{\frac{\pi}{b}} <1$. One can verify that $p_a$, $p_b$ are smooth p.d.f.

Suppose that for all $i\in \{1,2\}$, the intervened mechanism $\widetilde{\PP}_i$ has the same density $p_{c,d}$, defined in the same way as \eqref{app:countereg_density}, such that $\sqrt{\frac{\pi}{c}} <1$, $\sqrt{\frac{\pi}{d}} <1$, and $c,d\notin \{a,b\}$.

Set $\lambda:=\min_{(x,y)\in\{(a,b),(c,d)\}} \left( 1-\frac{1}{2}\left(\sqrt{\frac{\pi}{x}}+ \sqrt{\frac{\pi}{y}} \right) \right)$, then over $(0,\lambda)^2$ all the densities are constant, violating \cref{assum:basic}.

We construct a measure-preserving automorphism inspired by \cite{hyvarinen1999nonlinear}. 
\begin{equation*}
    \bm{\varphi}(\zb)=
    \begin{cases}
        \zb & ||\zb|| \geq R\\
        \left(\begin{array}{c}
            \cos(\alpha(||\zb - \cb||-R))z_{1} 
            - \sin(\alpha (||\zb - \cb||-R))z_2 \\
            \cos(\alpha(||\zb - \cb||-R))z_2 + \sin (\alpha (||\zb - \cb||-R))z_{1}
        \end{array}\right) & ||\zb|| < R
    \end{cases}
\end{equation*}

where $\cb=(\frac{\lambda}{2}, \frac{\lambda}{2})$ denotes the center of rotation, $R\in (0, \frac{\lambda}{2}]$ denotes the radius of the disk, and $\alpha\neq 0$.

Now let us prove that $\bm{\varphi}$ preserves $\PP^i$ for all $i\in  \llbracket 0, 2 \rrbracket$. By shifting  $p_{a,b}$ by $-\cb$, we only need to prove that the shifted $\bm{\varphi}$ preserves the uniform distribution over $[-R, R]^2$. 
One can verify that $\bm{\varphi}$ is a diffeomorphism over the 2-dimensional open disk $D^2(\bm{0},R)\setminus\{\bm{0}\} \xrightarrow{} D^2(\bm{0},R)\setminus\{\bm{0}\}$ and $|\operatorname{det}(\bm{\varphi}(\zb))|=1$. Thus $p_a(\zb)=p_a(\bm{\varphi}(\zb))|\operatorname{det}(\bm{\varphi}(\zb))|$ $\forall \zb \in D^2(\bm{0},R)\setminus\{\bm{0}\}$. Since $\bm{\varphi}=Id$ outside of the disk, this change of variables formula holds almost everywhere in $\RR^2$, thus $\PP_i=\bm{\varphi}_*\PP_i$, $\widetilde{\PP}_i=\bm{\varphi}_*\widetilde{\PP}_i$, which implies that
$
\bm{\varphi}_{*} \mathbb{P}^{i}=\mathbb{P}^{i} $ $\forall i \in \llbracket 0, 2 \rrbracket 
$.

Thus for all $\fb\in \Fcal$, $\fb_{*} \mathbb{P}^{i}=\left(\fb \circ \bm{\varphi}^{-1}\right)_{*} \mathbb{P}^{i}
$. However, $\left(\fb \circ \bm{\varphi}^{-1}\right)^{-1} \circ \fb=\bm{\varphi} $, which is not in $S_{\text {reparam}}$ or $S_{\text {scaling}}$.

\begin{remark} 
    The above example can be easily generalized to any $p_{a,b}$ such that the constants on the plateau are different between $\PP_i$ and $\widetilde{\PP}_i$ and the domains of the plateau intersect on a nonzero measure set. For $d>2$, the above example can be generalized by constructing the same $\PP_i$ and $\widetilde{\PP}^i$ for $i=1,2$, and for $i>2$ we fix any $\PP_i$ and $\widetilde{\PP}_i$ verifying \cref{assum:basic}. Then one spurious solution $\bm{\varphi}$ is as follows: let $\bm{\varphi}_{\llbracket 1, 2 \rrbracket}$ be the same measure-preserving automorphism as in the previous counterexample, and $\varphi_j= Id$ for $j>2$.
\end{remark}
\begin{remark}
In CauCA, we suppose the distributions are Markov to a given graph $G$, but not necessarily faithful to $G$. This implies that independent components are in $\Pcal_G$ no matter which graph $G$ is supposed given. Therefore, as long as \cref{assum:basic} is not assumed, this counterexample applies to any CauCA model $(G, \Fcal, \Pcal)$ with nonlinear $\Fcal$ and nonparametric $\Pcal$.
\end{remark}

\section{Identifiability by structure-preserving stochastic interventions}\label{app:structure-preserving}

In this section, we extend the result of \cref{thm1}~{\em(i)} to the case when we have access to a higher number of imperfect interventions. Here we focus on one special case of imperfect interventions, \textit{structure-preserving interventions}, i.e., the interventions that do not change the parent set.

\begin{proposition}\label{prop:app_soft}
For CauCA in $(G, \mathcal{F}, \mathcal{P}_{G})$ assume the assumptions in \cref{thm1}~{(i)} hold.
Fix any $i\in [d]$ such that $\pa(i)\neq \anc(i)$, $\pa(i)\neq \emptyset$, and define $n_{i}:=|\overline{\pa}(i)|$.
If there are $n_i(n_i +1)$ structure-preserving interventions on node $i$ such that the variability assumption $V^i$ holds, then CauCA in $(G, \mathcal{F}, \mathcal{P}_{G})$ is identifiable up to $$\Scal_{\overline{G}_i}=\left\{ \hb \in \Ccal^1(\RR^d): \hb(\zb)= \left( h_j(\zb_{\overline{\text{anc}}(j)}) \right)_{j \in [d]} \mid h_j(\zb_{\overline{ \text{anc}}(j)}) = h_j(\zb_{\overline{\pa}(i)}) \, \forall j \in \overline{\pa}(i)\right\}\,.$$

Namely, for all $\bm{\varphi} \in \Scal_{\overline{G}_i}$, 
for all the nodes $j\in \overline{\pa}(i)$, the reconstructed $Z_j$ can at most be a mixture of variables corresponding to the nodes in the closure of parents of $i$, instead of the closure of ancestors of $j$.

The variability assumption $V^i$ means 
$$
\left(\begin{array}{cc}
\Ab^{1}_i(\zb) & \Bb^{1}_i(\zb) \\
\vdots & \vdots \\
\Ab^{{n_i(n_i +1)}}_i(\zb) & \Bb^{{n_i(n_i +1)}}_i(\zb)
\end{array}\right)\in \mathbb{R}^{n_{i}\left(n_{i}+1\right) \times n_{i}\left(n_{i}+1\right)}
$$
is invertible, where the symbols are defined as follows:

$$\Ab_{i}^{t}(\zb)=\left(\begin{array}{c} \frac{\partial \left(g_{i}^{s_1, t}-h_{i}^{s_1}\right)}{\partial x_{r_1}}\left(\varphi_{i}(\zb) \mid \bm{\varphi}_{\pa(i)}(\zb)\right) \\ \vdots \\ \frac{\partial \left(g_{i}^{s_k, t}-h_{i}^{s_k}\right)}{\partial x_{r_m}}\left(\varphi_{i}(\zb) \mid \bm{\varphi}_{\pa(i)}(\zb)\right) \\ \vdots \\ \frac{\partial \left(g_{i}^{s_{n_i}, t}-h_{i}^{s_{n_i}}\right)}{\partial x_{r_{n_i}}}\left(\varphi_{i}(\zb) \mid \bm{\varphi}_{\pa(i)}(\zb)\right)\end{array}\right)^\top \in \mathbb{R}^{1 \times n_{i}^{2}} ,$$

$$\Bb^t_{i}(\zb)=\left(\begin{array}{c}\left(g_{i}^{s_1, t}-h_{i}^{s_1}\right)\left(\varphi_{i}(\zb) \mid \bm{\varphi}_{\pa(i)}(\zb)\right) \\
\vdots \\
\left(g_{i}^{s_{n_i}, t}-h_{i}^{s_{n_i}}\right)\left(\varphi_{i}(\zb) \mid \bm{\varphi}_{\pa(i)}(\zb)\right)
\end{array}\right)^\top \in \mathbb{R}^{1 \times n_{i}}$$

where $r_k$, $s_k$ are the $k$-th variable in $\overline{\pa}(i)$, and 
$$g_{i}^{k, t}\left(z_{i} \mid \zb_{\pa(i)}\right):=\frac{\frac{\partial \widetilde{q}^t_{i}}{\partial z_k}\left(z_{i} \mid \zb_{\pa(i)}\right)}{\widetilde{q}^t_{i}\left(z_{i} \mid \zb_{\pa(i)}\right)}, \quad h_{i}^{k}\left(z_{i} \mid \zb_{\pa(i)}\right):=\frac{\frac{\partial q_{i}}{\partial z_k}\left(z_{i} \mid \zb_{\pa(i)}\right)}{q_{i}\left(z_{i} \mid \zb_{\pa(i)}\right)}$$
where $\widetilde{q}^t_{i}$ denotes the intervened mechanism in $t$-th interventional regime that has the interventional target $i$, $q_i$ denotes the causal mechanism on $Z_i$.

\end{proposition}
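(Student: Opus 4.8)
The plan is to bootstrap from \cref{thm1}~{\em(i)} and then run a localized second-order argument of the kind used in \cref{prop:blocktoreparam} (after \citep[Thm.~1]{hyvarinen2019nonlinear}), now confined to the $n_i$-dimensional block indexed by $\overline{\pa}(i)$. Concretely: take two competing latent CBNs in $(G,\Fcal,\Pcal_G)$ with $\fb_*\PP^k=\fb'_*\QQ^k$ for all $k$, and set $\bm{\varphi}:=\fb'^{-1}\circ\fb$. By \cref{thm1}~{\em(i)}, $\bm{\varphi}\in\Scal_{\overline{G}}$, so $\varphi_j$ depends only on $\zb_{\overline{\anc}(j)}$ for every $j$; in particular $\varphi_m$ depends only on $\zb_{\overline{\anc}(i)}$ for each $m\in\overline{\pa}(i)$ (since $\overline{\anc}(m)\subseteq\overline{\anc}(i)$), while for $j\notin\overline{\pa}(i)$ nothing further is claimed. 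Hence it suffices to prove $\partial\varphi_m/\partial z_\ell\equiv0$ for every $m\in\overline{\pa}(i)$ and every $\ell\in\overline{\anc}(i)\setminus\overline{\pa}(i)$; this index set is nonempty precisely because $\pa(i)\neq\anc(i)$, and then $\bm{\varphi}\in\Scal_{\overline{G}_i}$ follows.

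Fix such an $\ell$. Because the $n_i(n_i{+}1)$ interventions on $i$ are \emph{structure-preserving}, the post-intervention graph equals $G$ in each of these regimes, so factorizing both sides of the change-of-variables identity $p^t(\zb)=q^t(\bm{\varphi}(\zb))\,|\det D\bm{\varphi}(\zb)|$ according to $G$, taking logarithms, and subtracting the unintervened equation, every term except the mechanism at node $i$ cancels, leaving
\[
\ln\widetilde p_i^{\,t}\!\left(z_i\mid\zb_{\pa(i)}\right)-\ln p_i\!\left(z_i\mid\zb_{\pa(i)}\right)
=\ln\widetilde q_i^{\,t}\!\left(\varphi_i(\zb)\mid\bm{\varphi}_{\pa(i)}(\zb)\right)-\ln q_i\!\left(\varphi_i(\zb)\mid\bm{\varphi}_{\pa(i)}(\zb)\right).
\]
The left-hand side depends only on $\zb_{\overline{\pa}(i)}$, hence is annihilated by $\partial/\partial z_\ell$ and by $\partial^2/\partial z_\ell^2$. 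Differentiating the right-hand side once in $z_\ell$ (chain rule, with $g_i^{m,t}-h_i^{m}$ playing the role of the $m$-th partial of the log-ratio), then a second time in $z_\ell$ (product and chain rule), yields for each $t$
\[
\sum_{m,m'\in\overline{\pa}(i)}\!\frac{\partial\!\left(g_i^{m,t}-h_i^{m}\right)}{\partial x_{m'}}\!\left(\varphi_i(\zb)\!\mid\!\bm{\varphi}_{\pa(i)}(\zb)\right)\frac{\partial\varphi_m}{\partial z_\ell}\frac{\partial\varphi_{m'}}{\partial z_\ell}
+\sum_{m\in\overline{\pa}(i)}\!\left(g_i^{m,t}-h_i^{m}\right)\!\left(\varphi_i(\zb)\!\mid\!\bm{\varphi}_{\pa(i)}(\zb)\right)\frac{\partial^2\varphi_m}{\partial z_\ell^2}=0 ,
\]
where the double differentiation uses the relevant $\Ccal^2$ regularity of $\bm{\varphi}$ and of the mechanisms (already implicit in the matrix defining $V^i$).

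Reading these $n_i(n_i{+}1)$ scalar identities as a homogeneous linear system in the $n_i^2+n_i$ ``unknowns'' $\big((\tfrac{\partial\varphi_m}{\partial z_\ell}\tfrac{\partial\varphi_{m'}}{\partial z_\ell})_{m,m'\in\overline{\pa}(i)},\ (\tfrac{\partial^2\varphi_m}{\partial z_\ell^2})_{m\in\overline{\pa}(i)}\big)$, the coefficient matrix is exactly the matrix in assumption $V^i$ evaluated at $\bm{\varphi}_{\overline{\pa}(i)}(\zb)$: its left block $A_i^{t}$ collects the second-derivative coefficients and its right block $B_i^{t}$ the first-derivative ones. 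Since $V^i$ forces this matrix to be invertible a.e., the unknown vector vanishes a.e.; picking the diagonal entries $m=m'$ gives $\big(\tfrac{\partial\varphi_m}{\partial z_\ell}\big)^2=0$ a.e. for every $m\in\overline{\pa}(i)$, and continuity of $D\bm{\varphi}$ (as $\bm{\varphi}$ is a $\Ccal^1$-diffeomorphism) upgrades this to $\tfrac{\partial\varphi_m}{\partial z_\ell}\equiv0$. Letting $\ell$ range over $\overline{\anc}(i)\setminus\overline{\pa}(i)$, and recalling from \cref{thm1}~{\em(i)} that $\partial\varphi_m/\partial z_\ell=0$ already for $\ell\notin\overline{\anc}(i)$, we conclude each $\varphi_m$ ($m\in\overline{\pa}(i)$) depends only on $\zb_{\overline{\pa}(i)}$, i.e. $\bm{\varphi}\in\Scal_{\overline{G}_i}$. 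The main obstacle here is not conceptual but bookkeeping: choosing an enumeration of $\overline{\pa}(i)$ compatible with the partial order of $G$ so that the factorizations and cancellations are legitimate, and matching the derived linear system entry-by-entry with the blocks $A_i^t$, $B_i^t$ of the matrix in $V^i$; the $\Ccal^2$ regularity required for the double differentiation is a minor technical caveat.
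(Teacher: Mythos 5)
Your proof is correct and follows essentially the same route as the paper's: bootstrap $\bm{\varphi}\in\Scal_{\overline{G}}$ from \cref{thm1}~\emph{(i)}, subtract the unintervened regime from each structure-preserving regime on node $i$, differentiate the resulting identity twice to obtain a homogeneous linear system whose coefficient matrix is exactly the matrix in assumption $V^i$, and invoke its invertibility. The only (harmless) difference is that you take both derivatives in the same direction $z_\ell$ with $\ell\in\overline{\anc}(i)\setminus\overline{\pa}(i)$, so the vanishing of the diagonal unknowns $(\partial \varphi_m/\partial z_\ell)^2$ yields the conclusion directly, whereas the paper differentiates first in $z_j$ ($j\in\anc(i)\setminus\pa(i)$) and then in $z_l$ ($l\in\pa(i)$) and must additionally argue $\partial_l\varphi_l\neq 0$; both versions implicitly require the same $\Ccal^2$ regularity you flag.
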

\begin{proof}
Based on the assumption of \cref{thm1}\textit{(i)}, reuse the proof of \cref{thm1}\textit{(i)} from the equation \eqref{eq:thm1_subtract_i-0}:
$$\ln \widetilde{p}^t_{i}\left(z_{i} \mid \zb_{\pa^{{i}}(i)}\right)-\ln p_{i}\left(z_{i} \mid \zb_{\pa(i)}\right)=\ln \widetilde{q}^t_{i}\left(\varphi_{i}(\zb) \mid \bm{\varphi}_{\pa^{{i}}(i)}(\zb)\right)-\ln q_{i}\left(\varphi_{i}(\zb) \mid \bm{\varphi}_{\pa(i)}(\zb)\right)$$
where $\widetilde{p}^t_{i}$, $\widetilde{q}^t_{i}$ denote the intervened mechanism in $t$-th interventional regime that has the interventional target $i$. 
Notice that by the assumption of structure-preserving interventions, $\pa^i(i)=\pa(i)$.

\cref{thm1}\textit{(i)} has already concluded that $\partial_{j} \varphi_{i}$ is constant $0$ for all $j\notin \anc(i)$. Now we are interested in $\partial_{j} \varphi_{i}$ $\forall j\in \anc(i)$. Take the partial derivative over $z_j$ with $j \in\anc(i)\setminus \pa(i)$ (non-empty by assumption):

\begin{align}\label{eq:app_soft_first_partial}
& 0=\frac{\sum_{k \in \overline{\pa}(i)} \frac{\partial \widetilde{q}^t_{i}}{\partial x_k}\left(\varphi_{i}(\zb) \mid \bm{\varphi}_{\pa(i)}(\zb)\right) \frac{\partial \varphi_{k}}{\partial z_j}(\zb)}{\widetilde{q}^t_{i}\left(\varphi_{i}(\zb) \mid \bm{\varphi}_{\pa(i)}(\zb)\right)} -\frac{\sum_{k \in \overline{\pa}(i)} \frac{\partial q_{i}}{\partial x_k}\left(\varphi_{i}(\zb) \mid \bm{\varphi}_{\pa(i)}(\zb)\right) \frac{\partial \varphi_{k}}{\partial z_j}(\zb)}{q_{i}\left(\varphi_{i}(\zb) \mid \bm{\varphi}_{\pa(i)}(\zb)\right)}
\end{align}

Recall that we define $$g_{i}^{k, t}\left(z_{i} \mid \zb_{\pa(i)}\right):=\frac{\frac{\partial \widetilde{q}^t_{i}}{\partial z_k}\left(z_{i} \mid \zb_{\pa(i)}\right)}{\widetilde{q}^t_{i}\left(z_{i} \mid \zb_{\pa(i)}\right)}, \quad h_{i}^{k}\left(z_{i} \mid \zb_{\pa(i)}\right):=\frac{\frac{\partial q_{i}}{\partial z_k}\left(z_{i} \mid \zb_{\pa(i)}\right)}{q_{i}\left(z_{i} \mid \zb_{\pa(i)}\right)}$$
So \eqref{eq:app_soft_first_partial} is rewritten as
$$
0=\sum_{k \in \overline{\pa}(i)}\left[g_{i}^{k,t}\left(\varphi_{i}(\zb) \mid \bm{\varphi}_{\pa(i)}(\zb)\right)-h_{i}^{k}\left(\varphi_{i}(\zb) \mid \bm{\varphi}_{\pa(i)}(\zb)\right)\right] \partial_{j} \varphi_{k}(\zb)
$$
Choose any $l \in \pa(i)$ (non-empty by assumption). 
Take the partial derivative of $z_l$ on two sides:
$$
\begin{aligned}
0=\sum_{k \in \overline{\pa}(i)} & {\left[\sum_{m \in \overline{\pa}(i)} \partial_{m}\left(g_{i}^{k,t}-h_{i}^{k}\right)\left(\varphi_{i}(\zb) \mid \bm{\varphi}_{\pa(i)}(\zb)\right) \partial_{j} \varphi_{k}(\zb) \partial_{l} \varphi_{m}(\zb)\right] } \\
+ & \left(g_{i}^{k,t}-h_{i}^{k}\right)\left(\varphi_{i}(\zb) \mid \bm{\varphi}_{\pa(i)}(\zb)\right) \partial_{l} \partial_{j} \varphi_{k}(\zb)
\end{aligned}
$$

which can be rewritten as
\begin{equation}\label{eq:app_soft_linear_eq}
    0=\Ab^{t}_i(\zb) \ab_{j, l}^{i}(\zb)+\Bb^{t}_i(\zb) \bb_{j, l}^{i}(\zb)
\end{equation}

where $\ab_{j, l}^{i}(\zb)=\left(\begin{array}{c}\partial_{j} \varphi_{p_1}(\zb) \partial_{l} \varphi_{p_1}(\zb) \\ \vdots \\ \partial_{j} \varphi_{p_k}(\zb) \partial_{l} \varphi_{p_m}(\zb) \\ \vdots \\ \partial_{j} \varphi_{p_{n_{i}}}(\zb) \partial_{l} \varphi_{p_{n_{i}}}(\zb)\end{array}\right) \in \mathbb{R}^{n_{i}^{2}}$, \,
$\bb_{j, l}^{i}(\zb)=\left(\begin{array}{c}\partial_{l} \partial_{j} \varphi_{p_1}(\zb) \\
\vdots \\
\partial_{l} \partial_{j} \varphi_{p_{n_{i}}}(\zb)\end{array}\right) \in \mathbb{R}^{n_{i}}$,

Collect $n_{i}\left(n_{i}+1\right)$ equations, every one corresponding to one interventional regime, in the form of \eqref{eq:app_soft_linear_eq} for all $t\in [n_{i}\left(n_{i}+1\right)]$:
\begin{equation}\label{eq:app_soft_linear_system}
    \bm{0}=\Mb_i(\zb) \left(\begin{array}{cc}
    \ab_{j, l}^{i}(\zb) \\
    \bb_{j, l}^{i}(\zb)
    \end{array}\right)
\end{equation}
where
\begin{equation}
\Mb_i(\zb):=\left(\begin{array}{cc}
\Ab^{1}_i(\zb) & \Bb^{1}_i(\zb) \\
\vdots & \\
\Ab^{n_i(n_i +1)}_i(\zb) & \Bb^{n_i(n_i +1)}_i(\zb)
\end{array}\right)\in \mathbb{R}^{n_{i}\left(n_{i+1}\right) \times n_{i}\left(n_{i+1}\right)}
\end{equation}
By assumption of variability $V^i$, $\Mb_i(\zb)$
is invertible for all $\zb\in \RR^d$. Thus \eqref{eq:app_soft_linear_system} has a unique solution, which is $\ab_{j,l}^i=\mathbf{0}$, $\bb_{j,l}^i=\mathbf{0}$.

$\ab_{j, l}^{i}(\zb)=\mathbf{0}$ implies $\forall k, m \in \overline{\pa}(i), \partial_{j} \varphi_{k}(\zb) \partial_{l} \varphi_{m}(\zb)=0$.
Since $l \in pa(i)$, $\partial_{l} \varphi_{l}(\zb)\neq 0$ is in $\ab_{j, l}^{i}(\zb)$, so $\forall k \in \overline{\pa}(i), \partial_{j} \varphi_{k}(\zb) \partial_{l} \varphi_{l}(\zb)=0$, which implies $\partial_{j} \varphi_{k}(\zb)=0$. We have proven that for all $j\in \anc(i)\setminus \pa(i)$, for all $k\in \overline{\pa}(i)$, for all $\zb\in \RR^d$, $\partial_{j} \varphi_{k}(\zb)=0$. Namely, $\varphi_{k}$ only depends on $\overline{\pa}(i)$. Combining with the result in \cref{thm1}(i), we obtain the conclusion.
\end{proof}

\section{Known vs. unknown intervention targets}\label{app:known_unknown}

In the main paper, for simplicity, we only provided the minimal set of notation required for describing the problem of CauCA with {\em known intervention targets}. However, we believe that a more general version of CauCA should also be considered for cases where the targets are unknown. In fact,  in the following, we will distinguish many problem settings, ranging from {\em totally known targets} to {\em totally unknown targets}. %
Each setting may be more or less suited to model a collection of datasets,
depending on the amount and kind of prior knowledge available.

In the following, we provide a general framework in which CauCA with unknown intervention targets can be rigorously formulated. Note that $G$ denotes a DAG such that $V(G)=[d]$, indexed by natural numbers, which correspond to the indices of targets $\tau_i$ of interventions. $\mathsf{G}$ denotes instead a DAG equipped only with a {\em partial} order induced by the arrows, namely, $V(\mathsf{G})$ is a set not necessarily indexed by natural numbers. For example, consider $V(\mathsf{G})=\{$``cloudy'', ``sprinkle'', ``raining'', ``wet grass''$\}$. In this case, the probability distribution $\PP$ that is Markov relative to this graph is not defined because of the lack of indices: $\PP_1$ might denote the marginal of ``cloudy'', ``sprinkle'', ``raining'' or ``wet grass''. However, $\PP$ is Markov relative to an {\em indexed} DAG of $\mathsf{G}$, denoted $G\models \mathsf{G}$, which denotes that there exists a bijection $\sigma$ s.t. $(u,v)\in E(G)$ iff $(\sigma(u),\sigma(v))\in E(\mathsf{G})$.

\begin{definition}
    Given two DAGs $G\models \mathsf{G}$ and $G'\models \mathsf{G}$, an {\em isomorphism} from $G$ to $G'$ is a bijection $\sigma$ of $V(G)=[d]$ such that $(i,j)\in E(G)$ if and only if $(\sigma(i), \sigma(j))\in E(G')$. An {\em automorphism} of $G$ is an isomorphism $G\to G$.
\end{definition}

\begin{definition}[Identifiabilities of Causal component analysis, general setting]\label{def:CauCAgeneral}
Given $\mathsf{G}$ a partially ordered DAG, $\mc{F}$ a class of diffeomorphisms, $\mathcal{P}_{\mathsf{G}}$ a set of distributions such that for every $\mathbb{P}\in \mathcal{P}_{\mathsf{G}}$ there exists $G\models\mathsf{G}$ such that $\mathbb{P} \in \mc{P}_G$,
we define $(\mathsf{G}, \mathcal{F}, \mathcal{P}_{\mathsf{G}})$ as a class of latent CBN $(G,\fb,(\mathbb{P}^i, \tau_i)_{i\in \llbracket 0,K\rrbracket})$ such that $G\models\mathsf{G}$, $f\in\mc{F}$ and $\mathbb{P}\in \mc{P}_G$.

(i) We define CauCA with {\em known intervention targets} in $(G, \mathcal{F}, \mathcal{P}_{G})$ as a class of latent CBN models such that all latent CBN models have the same $G$ and $(\tau_i)_{i\in \llbracket 0,K \rrbracket}$.

(ii) We define CauCA with {\em known intervention targets up to graph automorphisms} in $(G, \mathcal{F}, \mathcal{P}_{G})$ as a class of latent CBN models such that for any two
latent CBN models $(G,\fb,(\mathbb{P}^i, \tau_i)_{i\in \llbracket 0,K \rrbracket})$ and $(G,\fb',(\mathbb{P}^i, \tau'_i)_{i\in \llbracket 0,K \rrbracket})$ in the class, there exists $\sigma$ an automorphism of $G$ such that $\tau'_i=\sigma(\tau_i)$ for all $i\in [K]$.

(iii) We define CauCA with {\em matched intervention targets} in $(\mathsf{G}, \mathcal{F}, \mathcal{P}_{\mathsf{G}})$ as a class of latent CBN models such that for any two latent CBN models $(G,\fb,(\mathbb{P}^i, \tau_i)_{i\in \llbracket 0,K \rrbracket})$ and $(G',\fb',(\mathbb{P}^i, \tau'_i)_{i\in \llbracket 0,K \rrbracket})$ in the class with $G,G'\models\mathsf{G}$, there exists a permutation $\pi \in \mathfrak{S}_d: V(G)\to V(G')$ such that $\tau'_i=\pi(\tau_i)$ for all $i\in [K]$.

(iv) We define CauCA with {\em unknown intervention targets} in $(\mathsf{G}, \mathcal{F}, \mathcal{P}_{\mathsf{G}})$ as a class of latent CBN models 
that contains all $(G,\fb,(\mathbb{P}^i, \tau_i)_{i\in \llbracket 0,K \rrbracket})$ such that $G\models\mathsf{G}$, $f\in\mc{F}$ and $\mathbb{P}\in \mc{P}_G$.

We say that the CauCA in $(\mathsf{G}, \mathcal{F}, \mathcal{P}_{\mathsf{G}})$\footnote{Or $(G, \mathcal{F}, \mathcal{P}_{G})$ for CauCA with known intervention targets.} is {\em identifiable} up to $\mathcal{S}$ if for any $(G,\fb,(\mathbb{P}^i, \tau_i)_{i\in \llbracket 0,K \rrbracket})$ and $(G',\fb',(\mathbb{Q}^i, \tau'_i)_{i\in \llbracket 0,K \rrbracket})$ in $(\mathsf{G}, \mathcal{F}, \mathcal{P}_{\mathsf{G}})$, the relation 
$\fb_*\mathbb{P}^i=\fb'_*\mathbb{Q}^i$
$\forall i \in \llbracket 0,K \rrbracket$
implies that there is $\hb \in \mathcal{S}$ such that $\hb=\fb^{\prime-1} \circ \fb$ on the support of $\mathbb{P}$.
\end{definition}

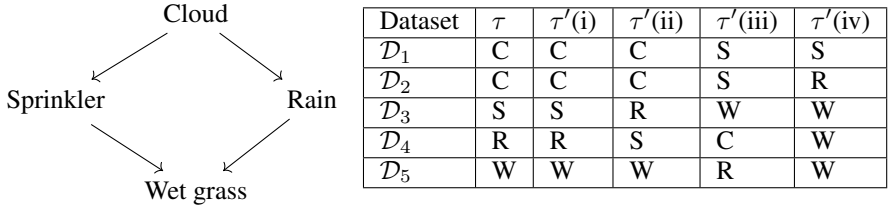
\begin{figure}[!ht]
    \begin{subfigure}{0.35\textwidth}
        \begin{tikzcd}[ampersand replacement=\&,column sep=tiny]
            \& {\text{Cloud}} \\
            {\text{Sprinkler}} \&\& {\text{Rain}} \\
            \& {\text{Wet grass}}
            \arrow[from=1-2, to=2-1]
            \arrow[from=2-1, to=3-2]
            \arrow[from=1-2, to=2-3]
            \arrow[from=2-3, to=3-2]
        \end{tikzcd}
    \end{subfigure}%
    \begin{subfigure}{0.5\textwidth}
        \centering
        \begin{tabular}{|l|l|l|l|l|l|}
            \hline
            Dataset  & $\tau$ & $\tau'$(i) & $\tau'$(ii) & $\tau'$(iii) & $\tau'$(iv) \\ \hline
            $\Dcal_1$  & C & C & C & S & S \\ \hline
            $\Dcal_2$ & C & C & C & S & R \\ \hline
            $\Dcal_3$ & S & S & R & W & W \\ \hline
            $\Dcal_4$  & R & R & S & C & W \\ \hline
            $\Dcal_5$ & W & W & W & R & W \\ \hline
        \end{tabular}
    \end{subfigure}
    \caption{Representative cases for the CauCA settings described in~\cref{def:CauCAgeneral}~{\em (i)-(iv)}. Each row corresponds to a dataset where one perfect intervention is performed on one of the targets: Cloud (C), Sprinkler (S), Rain (R), and Wet grass (W). Each column corresponds to an admissible choice of intervention targets
    within each of the settings: the discrepancies between the intervention targets $\tau'$(i)-$\tau'$(iv) and the ground truth targets $\tau$ are meant to illustrate different degrees of ignorance about $\tau$ across the settings in~\cref{def:CauCAgeneral}~{\em (i)-(iv)}.
    In the {\em known intervention targets} setting, which we focused on in the main paper, 
    $\tau'$(i) is the only possible choice: i.e.,
    the targets must be perfectly aligned with the ground truth $\tau$. For the settings with 
    {\em known intervention targets up to graph automorphisms}
    and with {\em matched intervention targets},
    $\tau'$(ii) and $\tau'$(iii) represent admissible choices: identifiability results can be proved for both settings. We also show that in the setting of {\em totally unknown targets}, where $\tau'$(iv) is an admissible choice, CauCA is not identifiable (see~\cref{remark:completely_unknown}).
    }
\end{figure}

\begin{remark}

In this general framework, the dataset $\Dcal_i:= \left( \{ \xb^{(j)}\}_{j=1}^{N_i} \right)$, $T_i$ denotes the nodes in $V(\mathsf{G})$ (``cloudy'', ``sprinkler'' etc) instead of nodes $\tau_i \subset [d]$ in $V(G)$ \eqref{eq:dataset}. 

If all $d$ variables are included in the known targets, then there exists a unique bijection ${\sigma: V(G)\to V(\mathsf{G})}$. This implies that for any latent CBN $(G,\fb,(\mathbb{P}^i, \tau_i)_{i\in \llbracket 0,K \rrbracket})$ in $(G, \Fcal, \Pcal_G)$, $\tau_i = \sigma^{-1}(T_i)$ i.e. the targets $\tau_i$ are uniquely defined by $T_i$ in each interventional regime. In this case, without loss of generality, we can suppose that $\forall i \in V(G)$, 
if $i\in \text{pa}(j)$, then $i<j$.
This can be achieved 
by rearranging the nodes in the graph and, correspondingly, the coordinates of $(\mathbb{P}^k, \tau_k) \, \forall k \in \llbracket 0, K \rrbracket $.
When the intervention targets are totally unknown, $(\mathsf{G}, \mathcal{F}, \mathcal{P}_{\mathsf{G}})$ only gives us the information of an unordered graph $\mathsf{G}$, and in every interventional regime the candidate latent CBN models that achieve the same likelihood might intervene on totally different variables. In this case, we cannot rearrange the nodes without loss of generality. 
\end{remark}

Our main paper has shown identifiability results about \cref{def:CauCAgeneral}~{\em(i)}. In the following, we generalize \cref{thm1} to CauCA with known intervention targets up to graph automorphisms. 
We also generalize \cref{thm:ica_single} to matched intervention targets in ICA setting. We also prove in \cref{cor:unknown_targets_nonident} that CauCA with unknown intervention targets is not identifiable. An open question is whether CauCA with a nontrivial graph is identifiable with matched intervention targets.

 \begin{assumption}[%
Interventional discrepancy, general version%
 ] \label{assum:basic_aut}
Given $k\in[K]$, we say that a stochastic intervention $\Tilde{p}_{\tau_k}$ satisfies general interventional discrepancy if for all $i\in[d]$,
    \begin{equation}\label{eq:Thm1_basic_assump_aut}
        \frac{\partial (\ln p_{i})}{\partial z_{i}}\left(z_{i} \mid \zb_{\text{pa}\left(i\right)}\right) \neq \frac{\partial (\ln \widetilde{p}_{\tau_k})}{\partial z_{\tau_k}} \left(z_{\tau_k} \mid \zb_{\pa^{k}(\tau_k)}\right)
        \quad \text{almost everywhere (a.e.).}%
    \end{equation}

 \end{assumption}

\begin{theorem}\label{thm1+automorphism}
For CauCA with known intervention targets up to graph automorphisms in $(G, \mathcal{F}, \mathcal{P}_{G})$, %
\setlist[itemize]{leftmargin=20.0pt}
\begin{itemize}[topsep=0pt,itemsep=0.0pt,nolistsep]
    \item[(i)] \looseness-1 Suppose for each node in $[d]$, there is one (perfect or structure-preserving) stochastic intervention such that \cref{assum:basic_aut} is satisfied. Then CauCA in $(G, \mathcal{F}, \mathcal{P}_{G})$ is identifiable up to
\end{itemize}    
    $$
    \begin{aligned}
\mc{S}_{\text{Aut}\Bar{G}}=\left\{\Pb_{\sigma}\circ \hb:  \mathbb{R}^{d} \rightarrow \mathbb{R}^{d} | \sigma\in \text{Aut}_G, \Pb_{\sigma} \text{ permutation matrix of } \sigma, \right. \\
\left. \hb(\zb)= \left( h_i(\zb_{\overline{\text{anc}}(i)}) \right)_{i \in [d]}, \hb \text{ is } \Ccal^1\text{-diffeomorphism}\right\}.
\end{aligned}
    $$
\begin{itemize}[topsep=0pt,itemsep=0.0pt,nolistsep]
    \item[(ii)] \looseness-1  Suppose for each node $i$ in $[d]$, there is one perfect stochastic intervention such that \cref{assum:basic_aut} is satisfied, then CauCA in $(G, \mathcal{F}, \mathcal{P}_{G})$ is identifiable up to
\end{itemize}
$$
\begin{aligned}
S_{G\text {-scaling}}: &= \left\{\Pb_{\sigma}\circ \hb |\sigma\in\text{Aut}_G, \Pb_{\sigma} \text{ permutation matrix of } \sigma, \hb: \mathbb{R}^{d} \rightarrow \mathbb{R}^{d}, \right. \\
&\left.   \hb(\zb)=(h_1(z_1), \ldots, h_d(z_d)) \text{ for some } h_i \in \mc{C}^1(\mathbb{R}, \mathbb{R}) \text{ with } |h'_i(\zb)| > 0 \quad \forall \zb\in\mathbb{R}^d \right\}.
\end{aligned}
$$

\end{theorem}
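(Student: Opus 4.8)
\noindent\textit{Proof strategy.} The plan is to deduce both parts from \cref{thm1} by conjugating the second model with the graph automorphism that relates the two target sequences. Fix two latent CBNs $(G,\fb,(\PP^k,\tau_k)_{k\in\llbracket 0,K\rrbracket})$ and $(G,\fb',(\QQ^k,\tau_k')_{k\in\llbracket 0,K\rrbracket})$ in the class with $\fb_*\PP^k=\fb'_*\QQ^k$ for all $k\in\llbracket 0,K\rrbracket$. By \cref{def:CauCAgeneral}~{\em(ii)} there is a single $\sigma\in\text{Aut}_G$ such that $\tau_k'=\sigma(\tau_k)$ for all $k\in[K]$. Let $\Pb_\sigma$ be the permutation matrix of $\sigma$ and set $\fb'':=\fb'\circ\Pb_\sigma$ and $\QQ''^{k}:=(\Pb_\sigma^{-1})_*\QQ^k$; then $\fb''_*\QQ''^{k}=\fb'_*\QQ^k=\fb_*\PP^k$ for all $k$, while $\fb''^{-1}\circ\fb=\Pb_\sigma^{-1}\circ(\fb'^{-1}\circ\fb)$.

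The crux is to check that $(G,\fb'',(\QQ''^{k},\tau_k)_k)$ is again a latent CBN with graph $G$ and with \emph{the same} intervention targets $(\tau_k)_k$ as the ground truth, so that \cref{thm1} can be invoked on this pair. Composition with a coordinate permutation keeps $\fb''$ a $\Ccal^1$-diffeomorphism. Since $\sigma$ is an automorphism of $G$, $\pa(\sigma(i))=\sigma(\pa(i))$ for every $i$, so permuting the coordinates of a distribution Markov relative to $G$ by $\Pb_\sigma^{-1}$ yields another distribution Markov relative to $G$, whose causal mechanism at node $i$ is a relabeling of the mechanism at $\sigma(i)$; absolute continuity, full support and differentiability of the densities are preserved, hence $\QQ''^{k}\in\Pcal_G$. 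The same relabeling carries the regime-$k$ intervention of the second model, originally on $\tau_k'=\sigma(\tau_k)$, to an intervention on $\sigma^{-1}(\tau_k')=\tau_k$, and it preserves perfectness (needed for~{\em(ii)}) and structure-preservingness (which, being in particular perfect or imperfect, is covered by \cref{thm1}~{\em(i)}), because for these interventions the post-intervention parent sets of all nodes are mapped correctly by $\sigma$. Finally, \cref{assum:basic_aut} for the ground truth, applied with $i=\tau_k$, gives \cref{assum:basic} at every regime---exactly the hypothesis \cref{thm1} places on the model being inverted.

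With both models now in $(G,\Fcal,\Pcal_G)$ with common targets $(\tau_k)_k$, equal pushforwards, and \cref{assum:basic} in force, \cref{thm1}~{\em(i)} gives $\fb''^{-1}\circ\fb\in\mc{S}_{\overline{G}}$, and under the further hypothesis of perfect interventions \cref{thm1}~{\em(ii)} gives $\fb''^{-1}\circ\fb\in\Scal_{\text{scaling}}$. Composing back, $\fb'^{-1}\circ\fb=\Pb_\sigma\circ(\fb''^{-1}\circ\fb)$ lies in $\Pb_\sigma\circ\mc{S}_{\overline{G}}\subseteq\mc{S}_{\text{Aut}\Bar{G}}$ for part~{\em(i)}, and in $\Pb_\sigma\circ\Scal_{\text{scaling}}\subseteq S_{G\text{-scaling}}$ for part~{\em(ii)}, since $\sigma\in\text{Aut}_G$; these are precisely the claimed indeterminacy sets.

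I expect the main obstacle to be the second step: making fully rigorous that conjugation by a graph automorphism preserves the entire latent-CBN structure---graph, Markov factorization, causal and intervened mechanisms, intervention targets, and \cref{assum:basic}---so that \cref{thm1} applies verbatim. The most delicate point is the interaction of $\sigma$ with the post-intervention graphs of perfect interventions, where the deletion of edges into the target must be shown compatible with $\sigma$ in the only sense actually used, namely that the intervened marginal is unconditional and the remaining mechanisms still factor according to $G$ after relabeling. I also note that the strengthening from \cref{assum:basic} to \cref{assum:basic_aut}---comparing the intervention to \emph{every} causal mechanism, not only to the one it replaces---is more than this relabeling route strictly needs; it is the natural hypothesis if one instead wishes to \emph{recover} the automorphism $\sigma$ directly from the observed distributions rather than take it from \cref{def:CauCAgeneral}, ruling out an interventional regime being conflated with an intervention on a different node when $\text{Aut}_G$ is nontrivial.
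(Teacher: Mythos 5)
Your proposal is correct, and it takes a genuinely more modular route than the paper for part \emph{(i)}. The paper does not invoke \cref{thm1} as a black box there: it re-runs the entire induction of \cref{thm1}~\emph{(i)} with the automorphism threaded through, tracking $\partial\varphi_{\sigma(i)}/\partial z_j$ and using $\pa(\sigma(i))=\sigma(\pa(i))$ and $G_{\sigma(i)}=\sigma(G_i)$ at each step, and only then concludes $\Pb_\sigma^{-1}\bm{\varphi}\in\mc{S}_{\overline{G}}$; part \emph{(ii)} is then obtained by factoring out $\Pb_\sigma$ and applying \cref{thm1}~\emph{(ii)}. Your conjugation argument ($\fb''=\fb'\circ\Pb_\sigma$, $\QQ''^k=(\Pb_\sigma^{-1})_*\QQ^k$) packages the same computation as a reduction: the map \cref{thm1} is applied to is exactly $\Pb_\sigma^{-1}\circ\bm{\varphi}$, whose $i$-th component is $\varphi_{\sigma(i)}$, so unfolding your reduction reproduces the paper's induction. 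What your route buys is that the only new content to verify is closure of the model class under relabeling by a graph automorphism (Markovianity w.r.t.\ $G$, mapping of targets $\sigma(\tau_k)\mapsto\tau_k$, preservation of perfect/structure-preserving interventions and of the discrepancy condition), which you correctly identify as the crux and sketch adequately; the paper's route avoids that bookkeeping at the cost of repeating the full argument. Notably, the paper itself uses your conjugation trick for the ICA analogue (\cref{thm:ica_single_automorphism}), where it applies \cref{thm:ica_single} to $\fb'\circ\Pb_\sigma$ with re-indexed regimes.

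One point of care: the discrepancy hypothesis in the proof of \cref{thm1} is invoked for the mechanisms of the \emph{candidate} model (the $q$'s), not the ground truth, so your phrase ``the hypothesis \cref{thm1} places on the model being inverted'' only closes the loop under the reading---implicit in the paper's own proofs---that the assumption is a property of every latent CBN in the class; with that reading, the relabelled candidate inherits \cref{assum:basic} from \cref{assum:basic_aut} (take $i=\tau'_k$) and \cref{thm1} applies verbatim. Your side remark that the relabeling route only uses the ``diagonal'' comparison of \cref{assum:basic_aut} is consistent with the paper's proof, which likewise only compares the intervened mechanism at $\sigma(i)$ with the causal mechanism at the same node $\sigma(i)$.
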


\begin{proof}
\textbf{Proof of \textit{(i)}}: The proof is based on the proof of \cref{thm1}\textit{(i)}. 
Consider two latent models achieving the same likelihood across all interventional regimes: $\left(G, \fb,\left(\mathbb{P}^{i}, \tau_i)_{i\in \llbracket 0,d \rrbracket}\right)\right)$ and $\left(G, \fb',\left(\mathbb{Q}^{i}, \tau'_i)_{i\in \llbracket 0,d \rrbracket}\right)\right)$. By the definition of latent CBN models with known targets up to graph automorphisms, there exists $\sigma$ automorphism of $G$ s.t. the targets  $(\tau^{\prime}_i)_{i\in[d]}=(\sigma(1), \cdots, \sigma(d))$. Since $(\tau_i)_{i\in [d]}$ covers all $d$ nodes in $G$, by rearranging $(\tau_i)_{i\in [d]}$ we can suppose without loss of generality that $\tau_i=i$ $\forall i \in [d]$. Namely, in the $i$-th interventional regime,

\[
\mathbb{P}^{i}=\widetilde{\mathbb{P}}_{i}\left(Z_{j} \mid \Zb_{\text{pa}^{i}(j)}\right) \prod_{j \neq i} \mathbb{P}_{\left(Z_{j} \mid \Zb_{\text{pa}(j)}\right)}, \quad \mathbb{Q}^{i}=\widetilde{\mathbb{Q}}_{\sigma(i)}\left(Z_{\sigma(i)} \mid \Zb_{\pa^{{\sigma(i)}}(\sigma(i))}\right)  \prod_{j \neq i}  \mathbb{Q}_{\left(Z_{\sigma(j)} \mid \Zb_{\text{pa}(\sigma(j))}\right)}.
\]
Define $\bm{\varphi}:= \fb'^{-1}\circ \fb$. Denote its $i$-th dimension output function as $\varphi_i: \mathbb{R}^d\to \mathbb{R}$.
We will prove by induction that $\forall i \in[d], \forall j\notin \overline{\text{anc}}(i), \forall \zb\in \mathbb{R}^d$, $\frac{\partial \varphi_{\sigma(i)}}{\partial z_j}(\zb)=0$.

For any $i\in \llbracket 0,d \rrbracket$, $\fb_*\mathbb{P}^i=\fb'_*\mathbb{Q}^i$. Since $\bm{\varphi}$ is a diffeomorphism, by the change of variables formula,
\begin{equation}\label{eq:thm1_density_init_aut}
    p^i(\zb)=q^i(\bm{\varphi}(\zb))|\text{det}\bm{\varphi}(\zb)|.
\end{equation}

For $i=0$, factorize $p_i$ and $q^i$  according to $G$, then take the logarithm on both sides:
\begin{equation} \label{thm1:env0_aut}
    \sum_{j=1}^{d} \ln p_{j}\left(z_{j} \mid \zb_{\text{pa}(j)}\right) =\sum_{j=1}^{d} \ln q_{j}\left(\varphi_{j}(\zb) \mid \bm{\varphi}_{\left(\text{pa}(j)\right)}(\zb)\right)+\ln |\operatorname{det} D \bm{\varphi}(\zb)|.
\end{equation}

For $i=1$, $\widetilde{\QQ}_1$ has no conditionals, and so does $Z_{\sigma(1)}$. Thus $q^i$ is factorized as 

\[q^1(\zb)=\widetilde{q}_{\sigma(1)} \left(z_{\sigma(1)} \right) \prod_{j \neq \sigma(1)} q_{j}\left(z_{j} \mid \zb_{\text{pa}(j)}\right). \]

So the equation \eqref{eq:thm1_density_init_aut} for $i=1$ after taking logarithm is

\begin{equation}\label{thm1:env1_aut}
\begin{aligned}
    \ln \widetilde{p}_{1}\left(z_{1}\right)+\sum_{j \neq 1} \ln p_{j}\left(z_{j} \mid \zb_{\text{pa}(j)}\right) &=\ln \widetilde{q}_{\sigma(1)}\left(\varphi_{\sigma(1)}(\zb) \mid \bm{\varphi}_{\pa^{{\sigma(1)}}(\sigma(1))} (\zb)\right) \\
    &+\sum_{j \neq \sigma(1)} q_{j}\left(\varphi_{j}(\zb) \mid \bm{\varphi}_{\text{pa}(j)}(\zb)\right)+\ln \left|\operatorname{det} D \bm{\varphi}(\zb)\right|.
    \end{aligned}
\end{equation}

Subtract \eqref{thm1:env1_aut} by \eqref{thm1:env0_aut},

\begin{equation}
\ln \tilde{p}_{1}\left(z_{1}\right)-\ln p_{1}\left(z_{1}\right)=\ln \tilde{q}_{\sigma(1)}\left(\varphi_{\sigma(1)}(\zb)\right)-\ln q_{\sigma(1)}\left(\varphi_{\sigma(1)}(\zb)\right).
\end{equation}

For any $i \neq 1$, take the $i$-th partial derivative of both sides:

$$
0=\left[\frac{\widetilde{q}_{\sigma(1)}^{\prime}\left(\varphi_{\sigma(1)}(\zb)\right)}{\widetilde{q}_{\sigma(1)}\left(\varphi_{\sigma(1)}(\zb)\right)}-\frac{q_{\sigma(1)}^{\prime}\left(\varphi_{\sigma(1)}(\zb)\right)}{q_{\sigma(1)}\left(\varphi_{\sigma(1)}(\zb)\right)}\right] \frac{\partial \varphi_{\sigma(1)}}{\partial z_i} (\zb).
$$

By \cref{assum:basic_aut}, the term in the parenthesis is non-zero a.e. in $\mathbb{R}^d$. Thus $\frac{\partial \varphi_{\sigma(1)}}{\partial z_i} (\zb)=0$ a.e. in $\mathbb{R}^d$. Since $\bm{\varphi}=\fb'^{-1}\circ \fb$ where $\fb,\fb'$ are $\mc{C}^1$-diffeomorphisms, so is $\bm{\varphi}$. $\frac{\partial \varphi_{\sigma(1)}}{\partial z_i}$ is continuous and thus equals zero everywhere.

Now suppose  $\forall k \in[i-1] $, $ \forall j\notin \overline{\text{anc}}(k)$, $\frac{\partial \varphi_{\sigma(k)}}{\partial z_{j}}(\zb)=0 $, $\forall\zb\in \mathbb{R}^{d}$.  Then for interventional regime $i$,

$$
\begin{aligned}
\ln \tilde{p}_{i}\left(z_{i} \mid \zb_{\pa^{i}(i)}\right)+ \sum_{j \neq i}\ln p_{j}\left(z_{j} \mid \zb_{\pa^{{j}}(j)}\right)&= \ln \tilde{q}_{\sigma(i)}\left(\varphi_{\sigma(i)}(\zb) \mid \bm{\varphi}_{\pa^{{\sigma(i)}}(\sigma(i))}(\zb)\right)\\
&+\sum_{j \neq i} q_{\sigma(j)}\left(\varphi_{\sigma(j)}(\zb) \mid \bm{\varphi}_{\text{pa}(\sigma(i))}(\zb)\right)+\ln |\operatorname{det} D \bm{\varphi}(\zb)|,
\end{aligned}
$$
subtracted by \eqref{thm1:env0_aut},
$$
\begin{aligned}
 \ln \tilde{p}_{i}\left(z_{i} \mid \zb_{\pa^{{i}(i)}}\right)-\ln p_{i}\left(z_{i} \mid \zb_{\text{pa}(i)}\right)&=\ln \tilde{q}_{\sigma(i)}\left(\varphi_{\sigma(i)}(\zb) \mid \bm{\varphi}_{\pa^{{\sigma(i)}}(\sigma(i))}(\zb)\right) \\
& -\ln q_{\sigma(i)}\left(\varphi_{\sigma(i)}(\zb) \mid \bm{\varphi}_{\text{pa}(\sigma(i))}(\zb)\right).
\end{aligned}
$$

For any $j\notin \overline{\text{anc}}(i), j \notin p a(i) \supset \pa^{i}(i)$ by assumption. Take partial derivative over $z_{j}$:

\begin{equation}
\begin{aligned}\label{eq:thm1_partial_derivative_aut}
0 &= \displaystyle\frac{\sum_{k \in \overline{\text{pa}}^{{\sigma(i)}}(\sigma(i))\ } \frac{\partial \widetilde{q}_{\sigma(i)}}{\partial \xb_k}\left(\varphi_{\sigma(i)}(\zb) \mid \bm{\varphi}_{\pa^{{\sigma(i)}}(\sigma(i))}(\zb)\right) \frac{\partial\varphi_{k}}{\partial z_j}(\zb)}{\widetilde{q}_{\sigma(i)}\left(\varphi_{\sigma(i)}(\zb) \mid \bm{\varphi}_{\pa^{{\sigma(i)}}(\sigma(i))}(\zb)\right)}\\
&- \frac{\sum_{k \in \overline{\pa}(\sigma(i)) } \frac{\partial q_{\sigma(i)}}{\partial \xb_k}\left(\varphi_{\sigma(i)}(\zb) \mid \bm{\varphi}_{\text{pa}(\sigma(i))}(\zb)\right) \frac{\partial\varphi_{k}}{\partial z_j}(\zb)}{q_{\sigma(i)}\left(\varphi_{\sigma(i)}(\zb) \mid \bm{\varphi}_{\text{pa}(\sigma(i))}(\zb)\right)}.
\end{aligned}
\end{equation}

Now prove that $G_{\sigma(i)} = \sigma(G_i)$:

$G_{\sigma(i)}$ is obtained by either a perfect stochastic or a structure-preserving stochastic intervention. For a structure-preserving stochastic intervention, $G=G_{\sigma(i)} = \sigma(G_i)$. For a perfect stochastic intervention, since $\tau'_i=\sigma(\tau_i)$, in $G_{\sigma(i)}$ only the arrows towards $\sigma(\tau_i)$ are deleted, which correspond to deleting arrows towards $\tau_i$ in $G_i$. 

Thus $G_{\sigma(i)} = \sigma(G_i)$. Thus $\pa^{{\sigma(i)}}(\sigma(i))=\text{pa}^{\sigma(i)}(\sigma(i))=\sigma(\text{pa}^{i}(i))$, the last equality by the definition of automorphism $\sigma$, $\sigma(\text{pa}(i))=\text{pa}(\sigma(i))$. 

The assumption of induction says $\forall k \in[i-1] \quad \forall j\notin \overline{\text{anc}}(k) \quad \frac{\partial \varphi_{\sigma(k)}}{\partial z_{j}}(\zb)=0 \quad \forall\zb\in \mathbb{R}^{d}$. 
For all $k \in \text{pa}(i) \supset \pa^{i}(i)$, since $j\notin \overline{\text{anc}}(i)$, $j\notin \overline{\text{anc}}(k)$ as well. By induction, $\frac{\partial \varphi_{\sigma(k)}}{\partial z_j}(\zb)=0$. 
So the second sum of the right-hand side of \eqref{eq:thm1_partial_derivative_aut} can be canceled except for $k=\sigma(i)$.
Also, $\text{pa}(\sigma(i))=\sigma(\text{pa}(i))$, $\pa^{{\sigma(i)}}(\sigma(i))=\sigma(\text{pa}^{i}(i))$. 
By the same assumption in the current induction, for all $l\in \pa^{{\sigma(i)}}(\sigma(i))=\sigma(\text{pa}^{i}(i))$, $\frac{\partial \varphi_{l}}{\partial z_j}(\zb)=0$.
So the first sum of the right-hand side of \eqref{eq:thm1_partial_derivative_aut} can be canceled except for $k=\sigma(i)$.

The equation rewrites after deleting the partial derivatives that are zero:

$$0= \left[ \frac{\frac{\partial \widetilde{q}_{\sigma(i)}}{\partial \varphi_{\sigma(i)}} \left(\varphi_{\sigma(i)}(\zb) \mid \bm{\varphi}_{\text{pa}^{\sigma(i)}(\sigma(i))}(\zb)\right) }{\widetilde{q}_{\sigma(i)}\left(\varphi_{\sigma(i)}(\zb) \mid \bm{\varphi}_{\pa^{{\sigma(i)}}(\sigma(i))}(\zb)\right)} - 
\frac{ \frac{\partial q_{\sigma(i)}}{\partial \varphi_{\sigma(i)}}\left(\varphi_{\sigma(i)}(\zb) \mid \bm{\varphi}_{\text{pa}(\sigma(i))}(\zb)\right) }{q_{\sigma(i)}\left(\varphi_{\sigma(i)}(\zb) \mid \bm{\varphi}_{\text{pa}(\sigma(i))}(\zb)\right)} \right] \frac{\partial \varphi_{\sigma(i)}}{\partial z_j}(\zb).$$

By \cref{assum:basic_aut}, the term in parenthesis is nonzero a.e., thus $\frac{\partial \varphi_{\sigma(i)}}{\partial z_j}(\zb)=0$ a.e. Since $\frac{\partial \varphi_{\sigma(i)}}{\partial z_j}$ is continuous, it equals zero everywhere.

The induction is finished when $i=d$.
We have proven that $\forall i \in[d], \forall j\notin \overline{\text{anc}}(i), \forall \zb\in \mathbb{R}^d$, $\frac{\partial \varphi_{\sigma(i)}}{\partial z_j}(\zb)=0$,
i.e. $\left(\Pb_{\sigma}^{-1} D \bm{\varphi}(\zb)\right)_{i j}=(D \bm{\varphi}(\zb))_{\sigma(i) j}=\frac{\partial \varphi_{\sigma(i)}}{\partial z_j}(\zb)=0$

Thus $\Pb_{\sigma}^{-1} \bm{\varphi} \in \Scal_{\Bar{G}}$. $\bm{\varphi} \in \Scal_{\text{Aut}\Bar{G}}$.

\textbf{Proof of \textit{(ii)}}:

By the result of \textit{(i)}, $\bm{\psi}:=\fb'^{-1}\circ \fb \in \Scal_{\text{Aut}\Bar{G}}$, thus there exists a permutation matrix $\Pb_{\sigma}$ s.t. $\Pb_{\sigma}^{-1} \bm{\psi} \in \Scal_{\Bar{G}}$ Denote $\bm{\varphi}=\Pb_{\sigma}^{-1}\bm{\psi}$. Apply \cref{thm1}(ii), we can prove that $\bm{\varphi} \in \mc{S}_{\text{scaling}}$. Thus $\bm{\psi}=\Pb_{\sigma}\bm{\varphi} \in \mc{S}_{G\text{-scaling}}$. 
\end{proof}

\begin{restatable}{proposition}{ica_single_automorphism}
\label{thm:ica_single_automorphism}
Suppose that $G$ is the empty graph and that there are $d-1$ variables intervened on, with one single target per dataset, satisfying
    \cref{assum:basic_aut}.
Then ICA with matched intervention targets in $(G, \mc{F},\mc{P}_G)$ with single-node interventions (\cref{def:CauCAgeneral}) is identifiable up to

${\Scal_{\text{reparam}}:= \biggl\{\gb:\mathbb{R}^{d}\to \mathbb{R}^{d}| \gb=\Pb \circ \hb \text{ where } P \text{ is a permutation matrix and } \hb \in \Scal_{\text{scaling}} \biggr\}}$.
\end{restatable}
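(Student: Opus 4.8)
The plan is to adapt the proof of \Cref{thm:ica_single} exactly as \Cref{thm1+automorphism} adapts \Cref{thm1}: the $d-1$ single‑node interventions pin down $d-1$ of the components of $\bm{\varphi}:=\fb'^{-1}\circ\fb$ up to the unknown matching permutation $\pi$, and the observational regime together with the invertibility of $\bm{\varphi}$ handles the last component. Take two latent CBNs $(G,\fb,(\PP^i,\tau_i)_{i\in\llbracket 0,d-1\rrbracket})$ and $(G,\fb',(\QQ^i,\tau'_i)_{i\in\llbracket 0,d-1\rrbracket})$ with $G$ the empty graph, $\fb_*\PP^i=\fb'_*\QQ^i$ for all $i$, and (by the matched‑targets definition, \cref{def:CauCAgeneral}~{\em(iii)}) a permutation $\pi\in\mathfrak{S}_d$ with $\tau'_i=\pi(\tau_i)$. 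Reordering the nodes of $G$ we may assume $\tau_i=i$ for $i\in[d-1]$ with node $d$ unintervened; then $\tau'_i=\pi(i)$ and $m:=\pi(d)$ is the unique unintervened node of the second model. Since every $\PP^i,\QQ^i$ factorizes into marginals, the change‑of‑variables formula gives, for each regime $i$, $\sum_j\ln p^i_j(z_j)=\sum_j\ln q^i_j(\varphi_j(\zb))+\ln|\det D\bm{\varphi}(\zb)|$.

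Subtracting the $i=0$ equation from the $i$‑th one ($i\in[d-1]$) cancels the unchanged marginals and the Jacobian term, leaving $\ln\widetilde p_i(z_i)-\ln p_i(z_i)=\ln\widetilde q_{\pi(i)}(\varphi_{\pi(i)}(\zb))-\ln q_{\pi(i)}(\varphi_{\pi(i)}(\zb))$. Differentiating in $z_l$ for any $l\neq i$ annihilates the left side, so $0=\big[(\ln\widetilde q_{\pi(i)})'-(\ln q_{\pi(i)})'\big](\varphi_{\pi(i)}(\zb))\cdot\frac{\partial\varphi_{\pi(i)}}{\partial z_l}(\zb)$; by \cref{assum:basic_aut} applied to the second CBN the bracket is nonzero a.e., and since $\bm{\varphi}$ is $\Ccal^1$ the continuous function $\frac{\partial\varphi_{\pi(i)}}{\partial z_l}$ vanishes identically. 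Hence $\varphi_{\pi(i)}$ depends only on $z_i$ for every $i\in[d-1]$. Invertibility of $D\bm{\varphi}(\zb)$ then forces $\frac{\partial\varphi_{\pi(i)}}{\partial z_i}\neq 0$ everywhere, and expanding the determinant along column $d$ (the only column whose sole nonzero entry lies outside rows $\pi(1),\dots,\pi(d-1)$) shows $\frac{\partial\varphi_m}{\partial z_d}\neq 0$ everywhere.

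It remains to show $\varphi_m$ is constant in $z_1,\dots,z_{d-1}$. Relabel so that $z_d$ is the last input coordinate and $\varphi_m$ the last output coordinate; the hypotheses of \Cref{lem:n-1preserve_measure} hold (the remaining $d-1$ output coordinates do not depend on $z_d$), so the marginal of $\PP^0$ over $(z_1,\dots,z_{d-1})$ is pushed by $(\varphi_{\pi(1)},\dots,\varphi_{\pi(d-1)})$ to the corresponding marginal of $\QQ^0$. Dividing the observational density identity by this marginal identity gives $p_d(z_d)=q_m(\varphi_m(\zb))\,|\tfrac{\partial\varphi_m}{\partial z_d}(\zb)|$, i.e.\ $\QQ_m=\varphi_m(\zb_{[d-1]},\cdot)_*\PP_d$ for every fixed $\zb_{[d-1]}$, and \Cref{lem:brehmer} yields that $\varphi_m$ does not depend on $\zb_{[d-1]}$. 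Thus each output coordinate $\varphi_{\pi(i)}$ is a $\Ccal^1$ function $h_i$ of the single input $z_i$ with $h_i'\neq 0$; each $h_i$ is then a diffeomorphism of $\RR$ once one notes its image must be all of $\RR$ by surjectivity of $\bm{\varphi}$. Therefore $\bm{\varphi}=\Pb_\pi\circ\hb$ with $\Pb_\pi$ the permutation matrix of $\pi$ and $\hb=(h_1,\dots,h_d)\in\Scal_{\text{scaling}}$, i.e.\ $\bm{\varphi}\in\Scal_{\text{reparam}}$.

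The main obstacle I anticipate is purely bookkeeping: tracking the permutation $\pi$ through the marginalization lemma and the determinant expansion, since the relevant ``first $d-1$ coordinates'' are the source indices $1,\dots,d-1$ but the target indices $\pi(1),\dots,\pi(d-1)$, and these two index sets must be reordered consistently before \Cref{lem:n-1preserve_measure} and \Cref{lem:brehmer} can be invoked verbatim. None of the analytic steps are harder than in the proofs of \Cref{thm:ica_single} and \Cref{thm1+automorphism}; the only genuinely new ingredient is that the matched‑targets assumption supplies the single permutation $\pi$ that makes the ``subtract the observational regime'' step align the right mechanisms in the two models.
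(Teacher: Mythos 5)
Your proof is correct and follows essentially the same route as the paper: the paper simply notes that for the empty graph $\mathrm{Aut}_G=\mathfrak{S}_d$, so matched targets coincide with targets known up to automorphism, and then reduces to \cref{thm:ica_single} by applying it to $\left(G, \fb,(\mathbb{P}^{i}, \tau_i)\right)$ and $\left(G, \fb'\circ \Pb_{\sigma},(\mathbb{Q}^{\sigma^{-1}(i)}, \tau_i)\right)$, which packages exactly the permutation bookkeeping you carry out by hand. Your inlined re-derivation (subtracting the observational regime, invoking the interventional discrepancy, then \cref{lem:n-1preserve_measure} and \cref{lem:brehmer} for the unintervened coordinate) is precisely the content of that theorem's proof and is sound.
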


\begin{proof}
    For an empty graph $G$, $\text{Aut}_G = \mathfrak{S}_d$. Thus ICA with matched intervention targets is the same as known intervention targets up to graph automorphisms. 
    Consider two latent models achieving the same likelihood across all interventional regimes: $\left(G, \fb,\left(\mathbb{P}^{i}, \tau_i)_{i\in \llbracket 0,d \rrbracket}\right)\right)$ and $\left(G, \fb',\left(\mathbb{Q}^{i}, \tau'_i)_{i\in \llbracket 0,d \rrbracket}\right)\right)$. By the definition of latent CBN models with known targets up to graph automorphisms, there exists $\sigma$ automorphism of $G$ s.t. the targets  $(\tau^{\prime}_i)_{i\in[d]}=(\sigma(\tau_1), \cdots, \sigma(\tau_d))$.
    Apply \cref{thm:ica_single} to $\left(G, \fb,\left(\mathbb{P}^{i}, \tau_i\right)_{i\in \llbracket 0,d-1 \rrbracket}\right)$ and $\left(G, \fb'\circ \Pb_{\sigma},\left( \mathbb{Q}^{\sigma^{-1}(i)}, \tau_i\right)_{i\in \llbracket 0,d-1 \rrbracket} \right)$, then $\bm{\varphi}:= (\fb'\circ \Pb_{\sigma})^{-1} \circ \fb \in \Scal_{\text{scaling}}$. Then for $\left(G, \fb,\left(\mathbb{P}^{i}, \tau_i\right)_{i\in \llbracket 0,d-1 \rrbracket}\right)$ and $\left(G, \fb',\left(\mathbb{Q}^{i}, \tau'_i\right)_{i\in \llbracket 0,d-1 \rrbracket}\right)$, $\fb^{\prime -1} \circ \fb = \Pb_{\sigma} \circ \bm{\varphi} \in \Scal_{\text{reparam}}$.
\end{proof}

\begin{remark}
    The setting of CauCA with matched intervention targets is similar to one scenario studied in causal representation learning. See, e.g.,~\citet[Thm. 3.4]{von2023nonparametric}: %
    the constraint on intervention targets expressed in their Asm. (A2') can be rephrased within our framework as the requirement that for any two latent CBN models, there exists a permutation $\pi: V(G)\to V(G')$ such that the intervention targets $\tau^{'}_{i,1}=\tau^{'}_{i,2} = \pi(\tau_{i,1})=\pi(\tau_{i,2})$, where $\tau_{i,1}$ denotes the unknown target of the $(i,1)$-indexed interventional regime.
    Subsequent work by~\citet{varici2023general} also studied the setting with coupled environments.%
    \footnote{\citet{varici2023general} additionally introduced a setting with {\em uncoupled environments} \cite[Def. 2]{varici2023general}, relaxing the requirement of matched interventions. Note that the setting with uncoupled environments assumes strictly more information than~\cref{def:CauCAgeneral}~{\em (iv)}, since it requires that for each latent CBN model $(G,\fb,(\mathbb{P}^i, \tau_i)_{i\in \llbracket 0,2d \rrbracket})$, for each node $j\in V(G)$, there exist $k,l\in \llbracket 1,2d \rrbracket$ such that $\tau_k=\tau_l=\{j\}$.%
    }
\end{remark}

\begin{restatable}{corollary}{unknown_targets_nonident}\label{cor:unknown_targets_nonident}[Corollary of \cref{thm:ica_single_necessary}] Given a DAG $\mathsf{G}$, CauCA with unknown intervention targets is not identifiable in $(\mathsf{G}, \Fcal, \Pcal_{\mathsf{G}})$ up to $\Scal_{\text{reparam}}$.
\end{restatable}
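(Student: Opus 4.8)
The plan is to produce an explicit counterexample by transporting the measure‑preserving construction from the proof of \cref{thm:ica_single_necessary} to the graph $\mathsf{G}$. Write $d := |V(\mathsf{G})|$ and assume $d \ge 2$ (for $d = 1$ the statement is vacuous, since $\Scal_{\text{reparam}}$ is then the set of all diffeomorphisms of $\RR$ and one‑dimensional nonlinear ICA is trivially identifiable up to it). Fix any indexed DAG $G \models \mathsf{G}$; after relabelling the nodes, single out two of them which I rename $d-1$ and $d$. First I would build a ground‑truth latent CBN in which these two coordinates are never touched by an intervention: pick any $\fb \in \Fcal$ and any $(\PP^k, \tau_k)_{k \in \llbracket 0, K \rrbracket}$ with $\tau_k \subseteq [d-2]$ for all $k$, with every $\PP^k$ having mutually independent, full‑support differentiable marginals, and with the $(d-1)$‑th and $d$‑th marginals of each $\PP^k$ equal to the standard Gaussian. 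By the remark in \cref{app:countereg} that independent‑component distributions are Markov relative to every DAG, each such $\PP^k$ lies in $\Pcal_G \subseteq \Pcal_{\mathsf{G}}$, so $(G, \fb, (\PP^k, \tau_k)_k) \in (\mathsf{G}, \Fcal, \Pcal_{\mathsf{G}})$.

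Next I would define $\bm{\varphi}\colon \RR^d \to \RR^d$ to be the rotation by a fixed angle $\theta \notin \pi\mathbb{Z}$ acting on the $(z_{d-1}, z_d)$‑plane and the identity on the other coordinates, as in Case~1 of the proof of \cref{thm:ica_single_necessary}. Since for each $k$ the $(d-1,d)$‑block of $\PP^k$ is an isotropic Gaussian, independent of the remaining coordinates, both $\bm{\varphi}$ and $\bm{\varphi}^{-1}$ are measure‑preserving for $\PP^k$, i.e.\ $\bm{\varphi}_*\PP^k = \PP^k$. Set $\fb' := \fb \circ \bm{\varphi}^{-1}$ and $\QQ^k := \PP^k$, keeping the same targets. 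Then $\fb' \in \Fcal$, each $\QQ^k \in \Pcal_{\mathsf{G}}$, and $(G, \fb', (\QQ^k, \tau_k)_k)$ is again in $(\mathsf{G}, \Fcal, \Pcal_{\mathsf{G}})$; crucially, in the unknown‑targets setting of \cref{def:CauCAgeneral}~{\em (iv)}, membership of the class imposes no matching condition between the intervention targets of two distinct latent CBNs, so nothing else needs checking. Finally $\fb'_*\QQ^k = (\fb \circ \bm{\varphi}^{-1})_*\PP^k = \fb_*(\bm{\varphi}^{-1}_*\PP^k) = \fb_*\PP^k$ for all $k$, so the two models induce the same observed distributions, while $\fb'^{-1}\circ \fb = \bm{\varphi}$ on all of $\RR^d = \operatorname{supp}\PP^0$; a nontrivial planar rotation cannot be written as $\Pb \circ \hb$ with $\hb \in \Scal_{\text{scaling}}$, hence $\bm{\varphi} \notin \Scal_{\text{reparam}}$, and identifiability up to $\Scal_{\text{reparam}}$ fails.

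The main thing to be careful about is the interface with the class definition rather than any hard analysis: one must choose the ground truth so that two coordinates remain unperturbed by all $K$ interventions (this is why I take $\tau_k \subseteq [d-2]$, echoing the ``$d-2$ single‑node interventions'' hypothesis of \cref{thm:ica_single_necessary}, and it works for arbitrary $K$ since targets may repeat), and one must check that the rotated distributions $\QQ^k$ are still Markov relative to some $G' \models \mathsf{G}$ — which is automatic here because rotating two independent Gaussians yields two independent Gaussians, so $\QQ^k = \PP^k$ stays Markov relative to $G$ itself. If one wishes to avoid Gaussianity, the plateau‑density construction of Case~2 of \cref{thm:ica_single_necessary} can be used verbatim, replacing the global rotation by a distance‑dependent rotation supported on a small disk contained in the common plateau of $\PP^k$ and $\widetilde{\PP}$; it is measure‑preserving almost everywhere and the remainder of the argument is identical.
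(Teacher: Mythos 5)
Your proposal is correct and is essentially the paper's own argument: the paper likewise exhibits a pair of latent CBNs in which two coordinates are never intervened on across all $K$ regimes (invoking \cref{thm:ica_single_necessary} for the first $d-2$ regimes and repeating a target for the remaining ones), with exactly the rotation/measure-preserving-automorphism indeterminacy that you construct directly. The only caveat—shared with Case~1 of the paper's proof of \cref{thm:ica_single_necessary}—is that the rotation angle must avoid multiples of $\tfrac{\pi}{2}$ rather than just $\pi$, since a quarter turn is a coordinate permutation composed with a sign flip and therefore does lie in $\Scal_{\text{reparam}}$.
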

\begin{proof}
Fix any $G\models \mathsf{G}$ and $\fb$. Without loss of generality, suppose $K > d-2$. By \cref{thm:ica_single_necessary}, there exist $\left(G, \fb^{\prime},\left(\mathbb{Q}^{i}, \tau_{i}\right)_{i \in \llbracket 0, d-2 \rrbracket}\right)$ such that for all $i \in \llbracket 0, d-2 \rrbracket$, $\fb_{*} \PP^{i}=\fb_{*}^{\prime} \QQ^{i}$ and $\bm{\varphi} :=\fb^{\prime-1} \circ \fb \notin \Scal_{\text {reparam}}$. Notice that for all $i \in \llbracket 0, d-2 \rrbracket$, $\mathbb{P}^{i}$ and $\mathbb{Q}^{i}$ are independent components and therefore Markov relative to $G$. Since the targets are unknown, we can construct a spurious solution as follows: for all $i\in \llbracket d-1, K \rrbracket$, set $\tau_i=\{1 \}$, choose $\PP^i$ to be any distribution that is composed of independent components, and set $\QQ^i:= \bm{\varphi}_* \PP^i$. Then for all $i \in \llbracket 0, K \rrbracket$, $\fb_{*} \PP^{i}=\fb_{*}^{\prime} \QQ^{i}$ and $\bm{\varphi} :=\fb^{\prime-1} \circ \fb \notin \Scal_{\text {reparam}}$.
\end{proof}

\begin{remark}
    \label{remark:completely_unknown}
    Although the proof above is almost trivial, it clarifies that a minimal requirement for identifiability up to permutation and scaling,
    both in ICA and in CRL,
    is that each latent variable must be 
    intervened on at least once.
\end{remark}

\paragraph{Relaxing the assumption of a known causal graph.}
Since we only require the distributions $\mathbb{P}^k$ in~\cref{def:identifiability_cauca} to be Markov to $G$ (resp., $\mathbb{Q}^k$ to $G'$), our theory suggests that it should be possible to identify latent variables up to the ambiguities characterised in our results even when the learned model assumes a {\em denser} DAG than the true one: in particular, a fully connected graph would always be a valid choice, %
thus partially relaxing the assumption of a known graph. A thorough characterisation of the performance of CauCA in this misspecified setting is left for future work.

\section{Relationship between CauCA and nonlinear ICA%
}\label{app:relationship_to_ICA}
\paragraph{ICA is a special case of CauCA.} 
One way to show that ICA is a special case of CauCA is the following. In CauCA, we suppose the distributions are {\em Markov} to a given graph $G$, but not necessarily {\em faithful} to $G$. This implies that any CauCA model $(G, \Fcal, \Pcal_G)$
may allow independent components: this holds even for models with a non-empty graph (in this case, the distribution would be unfaithful to $G$)%
. 
Since the distributions with independent components are a small subset within the set of distributions which are Markov to a non-trivial graph $G$, it follows that CauCA is a strictly more general, and harder, problem than ICA, and no trivial reduction of CauCA to ICA is possible.

\paragraph{Comparison to the results of~\citet{hyvarinen2019nonlinear}.} For the special case of CauCA where there are no edges in the causal graph (i.e., ICA), \citet[Thm.1]{hyvarinen2019nonlinear} proved an identifiability result based on (in our terminology) $2d$ interventional datasets with unknown intervention targets, plus one unintervened dataset. 
While our work takes inspiration from~\cite{hyvarinen2019nonlinear}, our theoretical analysis presents several differences.

Our identifiability results can be compared with \cite[Thm.1]{hyvarinen2019nonlinear} in two cases:

\textbf{(i) Trivial graph:}
\begin{itemize}[itemsep=0em,topsep=0em,leftmargin=0.75em]
    \item \textbf{Previous work:} A core step in the proof of Thm. 1 of \cite{hyvarinen2019nonlinear} is eq. (20,21), which is essentially the same as \eqref{eq:prop:blocktoreparam_partial_d_1}, \eqref{eq:prop:blocktoreparam_partial_d_2} in our work. We will refer to the proof of our~\cref{prop:blocktoreparam} in the following (since the proof of this proposition specifically is similar to the one of Thm.1 of \cite{hyvarinen2019nonlinear}). The proof proceeds then as follows: we take twice the partial derivatives of~\eqref{eq:needed_for_ica}---i.e., we calculate the Hessian matrix of the two sides. We then obtain a system of $2d$ equations, and identifiability corresponds to the uniqueness of the solution of the system $\bm{0}=\Ab \xb$
, where $\Ab$ is in $\RR^{2d\times 2d}$. So $\Ab$ needs to be invertible---i.e., the “variability” assumption in~\cite[Thm.1]{hyvarinen2019nonlinear}.
\item \textbf{Our work:} Our \cref{thm:ica_single} and \cref{cor:ica_multi} show that, if the intervention targets are known, we can provide an identifiability proof which only requires the Jacobian of the above equation to be invertible (instead of Hessian, as in the previous results). The functions in $\Fcal$ can thus be $\Ccal^1$, instead of $\Ccal^2$, and the linear system will be of $d$ equations instead of $2d$. In short, exploiting knowledge on the intervention targets allows us to provide a different proof, even for the special case of ICA.
\end{itemize}

In~\cref{table:ICA_eg}, we summarize our theoretical results for nonlinear ICA and compare them to~\cite[Thm.1]{hyvarinen2019nonlinear}.

\begin{table}[htbp]
\centering
\caption{
\textbf{Overview of ICA identifiability results.} For the trivial DAG over $Z_1, \,  Z_2,\,  Z_3$ (i.e., no edges), we summarise our identifiability results (\cref{sec:ica}) for representations learnt by maximizing the likelihoods $\PP^k_{\theta}(X)$ based on multiple interventional regimes%
. $\pi[\cdot]$ denotes a permutation.
}
\vspace{0.25em}\resizebox{\textwidth}{!}{
\begin{tabular}{m{6.5cm} m{5.2cm} m{1.9cm}}
\toprule
\textbf{Requirement of interventions} & \textbf{Learned representation} $\hat{\mathbf{z}}=\hat{\mathbf{f}}^{-1}(\mathbf{x})$ & \textbf{Reference} \\
\midrule
$1$ intervention on any two nodes respectively with \cref{assum:basic} & $\left[h_1(z_1), h_2(z_2), h_3(z_3)\right]$ & \cref{thm:ica_single}\\
\addlinespace
$1$ intervention on $z_1$ and $2$ fat-hand interventions on $(z_2, z_3)$  with \cref{assum:block-ID}& 
$\left[h_1(z_1), h_2(z_2, z_3), h_3(z_2,z_3)\right]$ & \cref{cor:ica_multi}\\
\addlinespace
$1$ intervention on $z_1$ and $4$ fat-hand interventions on $(z_2, z_3)$ with ``variability'' assumption& 
$\Bigl[h_1(z_1), \pi[h_2(z_2), h_3(z_3)]\Bigr]$ & \cref{prop:blocktoreparam}\\
\addlinespace
$1$ intervention per node on any two nodes respectively with unknown order (``matched intervention
target'', see~\cref{def:CauCAgeneral}) with \cref{assum:basic_aut}& 
$ \pi \left[h_1(z_1),h_2(z_2), h_3(z_3)\right]$ & \cref{thm:ica_single_automorphism}\\
\addlinespace
$6$ fat-hand interventions on $(z_1, z_2, z_3)$ with ``variability'' assumption& 
$ \pi \left[h_1(z_1),h_2(z_2), h_3(z_3)\right]$ & \cite{hyvarinen2019nonlinear}[Thm. 1]\\
\bottomrule
\end{tabular}
}
\label{table:ICA_eg}
\end{table}

\textbf{(ii) Nontrivial graph: }

As we mentioned in the previous paragraph, CauCA is a strictly more general problem than ICA. Our proof of~\cref{thm1} is therefore even farther from all ICA results. A peculiarity of ICA is that, after taking the Hessian, the left-hand side of~\eqref{eq:prop:blocktoreparam_partial_d_2} vanishes: this is exploited in the proof of~\cite[Thm.1]{hyvarinen2019nonlinear}. 
All the proof techniques of nonlinear ICA papers we are aware of rely on this vanishing left-hand side over all coordinates of $z$.
Unfortunately, with a non-trivial graph, it is impossible to get the same
 after taking the partial derivatives of every coordinate of $z$: in fact, for each $i\in [d]$, the left-hand side of the $i$-th equation depends on the ancestors of $z_i$. Our proofs for the case with a nontrivial graph (\cref{thm1} and \cref{thm:CauCA_multi}) therefore follow different strategies, based on first derivatives alone.

\section{Additional theoretical results used in the design of experiments}\label{app:pooled_objective}
\subsection{Multi-objective and pooled objective}
Our identifiability theory implies that the ground-truth latent CBN could be learned by maximizing likelihood across all interventional regimes, i.e.,
$$\theta^*= \bigcap_{k=0}^d \arg\max_{\theta}  \frac{1}{N_k}\sum_{\substack{n=1}}^{N_k} \log p^k_{\bm{\theta}}(\mathbf{x}^{(n, k)}),$$
where $\log p^k_{\bm{\theta}}(\mathbf{x}^{(n, k)})$ is defined in \eqref{eq:likelihood_k}. This is a multi-objective optimization problem. 

In general, multi-objective optimization is hard; however, we show that, because of our assumptions, we can equivalently optimize a pooled objective. In fact, suppose $$ f_k(\bm{\theta})= \log p^k_{\bm{\theta}}(\mathbf{x}), \quad f(\bm{\theta})=\sum_{k=0}^d f_k(\bm{\theta}).$$

Define $\bm{\Theta}_k := \arg\max_{\bm{\theta}} f_k(\bm{\theta})$. 
By our problem setting we know $\bigcap_{k=0}^d \bm{\Theta}_k \neq \emptyset$.

On the one hand, for all $\hat{\bm{\theta}}\in \bigcap_{k=0}^d \bm{\bm{\Theta}}_i$, $\hat{\bm{\theta}} \in \arg\max_{\bm{\theta}} f(\bm{\theta})$.
On the other hand, we will prove by contradiction that $\forall \hat{\bm{\theta}} \in \arg\max_{\bm{\theta}} f(\bm{\theta})$, $\hat{\bm{\theta}} \in \bigcap_{k=0}^d \bm{\Theta}_k$.
Suppose there exists $i \in \llbracket 0,d \rrbracket$ such that $\hat{\bm{\theta}} \notin \Theta_i$. Then for all $\bm{\theta}^* \in \bigcap_{k=0}^d \bm{\Theta}_k$, $f_i(\hat{\bm{\theta}}) < f_i(\bm{\theta}^*)$. Thus $f(\hat{\bm{\theta}}) = \sum_{k=0}^d f_k(\hat{\bm{\theta}}) < \sum_{k=0}^d f_k(\bm{\theta}^*) = f(\bm{\theta}^*)$. This yields a contradiction with $\hat{\bm{\theta}} \in \arg\max_{\bm{\theta}} f(\bm{\theta})$.

We thus conclude that $\bigcap_{k=0}^d \arg\max_{\bm{\theta}} f_k(\bm{\theta}) =  \arg\max_{\bm{\theta}} f(\bm{\theta})$.

\subsection{Expressivity in multi-intervention learning}

To learn the latent CBN, we need to learn both the latent distributions (both the unintervened causal mechanisms and the intervened ones) and mixing functions. For the latent distributions, a natural question is whether any of them could be fixed without loss of generality.
This is for example possible in the context of nonlinear ICA, where due to the indeterminacy up to element-wise nonlinear scaling of the latent variables, we can arbitrarily fix their univariate distributions w.l.o.g.
The following proposition elucidates this matter for CauCA.

\begin{restatable}{proposition}{expressivity}
\label{prop:expressivity}

For a ground truth latent CBN, $\left(G, \fb,\left(\mathbb{P}^{k}, \tau_k \right)_{k \in \llbracket 0,d \rrbracket}\right)$ where $\left(\mathbb{P}^{k}, \tau_k\right)_{k \in \llbracket 1,d \rrbracket}$ are obtained by perfect stochastic intervention on every variable respectively, fix at most one element for each $k$ in $\{\QQ_k| \pa(k) =\emptyset \} \cup \{\widetilde{\QQ}_k| k \in [d] \}$, there exists $\left(G, \fb',\left( \mathbb{Q}^{k}, \tau'_k \right)_{k \in \llbracket 0,d \rrbracket}\right)$ s.t. $\fb_{*} \mathbb{P}^{k}=\fb_{*}^{\prime} \mathbb{Q}^{k}, \fb^{\prime-1} \circ \fb \in \Scal_{\text{scaling}}$.
\end{restatable}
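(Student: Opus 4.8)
The plan is to reduce the statement to \cref{lem:unavoidable} together with \cref{lemma:1d_MPA}. The point is that any spurious solution $(G,\fb',(\QQ^k,\tau_k)_k)$ with $\fb'^{-1}\circ\fb\in\Scal_{\text{scaling}}$ is, by \cref{lem:unavoidable}, obtained from the ground truth by an element-wise reparametrisation $\hb=(h_1,\dots,h_d)$, and the latent distributions of the spurious solution are then completely determined by $\hb$ via \cref{lem:preserves_mecha}: writing $\gb=\hb^{-1}$, the intervened mechanism at $Z_k$ becomes $(g_k)_*\widetilde{\PP}_k$, while the observational mechanism at $Z_k$ becomes $(g_k)_*\PP_k(\cdot\mid\zb_{\pa(k)})$ transported by $\hb_{\pa(k)}$. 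By \cref{lemma:1d_MPA} a one-dimensional diffeomorphism $g_k$ is pinned down (up to one reflection) by prescribing a \emph{single} univariate pushforward at node $k$; this is exactly why we may fix at most one of $\{\QQ_k\mid\pa(k)=\emptyset\}$ and $\{\widetilde{\QQ}_k\mid k\in[d]\}$ at each node, and why only \emph{root} nodes' observational mechanisms are admissible --- for a non-root $Z_k$ its observational mechanism also depends on $h_j$, $j\in\pa(k)$, and so cannot be imposed through $g_k$ alone.

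Concretely, I would first build $\gb=(g_1,\dots,g_d)$ node by node. For each $k\in[d]$: if the element fixed at $k$ is $\widetilde{\QQ}_k$, use \cref{lemma:1d_MPA} on the full-support, differentiable-density univariate distributions $\widetilde{\PP}_k,\widetilde{\QQ}_k$ to pick a diffeomorphism $g_k\colon\RR\to\RR$ with $(g_k)_*\widetilde{\PP}_k=\widetilde{\QQ}_k$; if the element fixed at $k$ is $\QQ_k$ --- which forces $\pa(k)=\emptyset$, so $\PP_k,\QQ_k$ are unconditional --- use \cref{lemma:1d_MPA} to pick $g_k$ with $(g_k)_*\PP_k=\QQ_k$; and if nothing is fixed at $k$, set $g_k=\mathrm{Id}_{\RR}$. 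Since at most one element is fixed per node, $g_k$ is unambiguous. Then put $\hb:=\gb^{-1}=(g_1^{-1},\dots,g_d^{-1})\in\Scal_{\text{scaling}}$ and invoke \cref{lem:unavoidable}: there is a latent CBN $(G,\fb\circ\hb,(\QQ^k,\tau_k)_k)$ in $(G,\Fcal,\Pcal_G)$ with $\fb_*\PP^k=(\fb\circ\hb)_*\QQ^k$ for all $k$. Setting $\fb':=\fb\circ\hb$ (which lies in $\Fcal$ by \cref{lem:unavoidable}) gives $\fb'^{-1}\circ\fb=\hb^{-1}=\gb\in\Scal_{\text{scaling}}$.

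It then remains to verify that the $\QQ^k$ produced by \cref{lem:unavoidable} actually carry the prescribed mechanisms, which is immediate from the construction inside that lemma's proof: the output CBN satisfies $\widetilde{\QQ}^k_j\big(\cdot\mid\gb_{\pa^k(j)}(\cdot)\big)=(g_j)_*\widetilde{\PP}^k_j(\cdot\mid\cdot)$ and $\QQ_i\big(\cdot\mid\hb_{\pa(i)}(\cdot)\big)=(g_i)_*\PP_i(\cdot\mid\cdot)$. Specialising the first identity to $j=k$ in the $k$-th regime --- a perfect intervention, so the conditioning is vacuous --- gives $\widetilde{\QQ}_k=(g_k)_*\widetilde{\PP}_k$, and specialising the second to a root node $i$ gives $\QQ_i=(g_i)_*\PP_i$; by our choice of $g_k$ each of these equals the element we fixed at that node whenever one was fixed.

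I expect the only genuinely delicate point to be the correct reading of the ``at most one element per $k$'' condition --- in particular the argument that precisely the root-node observational mechanisms, and not the others, may be prescribed. Once that is settled, everything else is a direct combination of \cref{lem:unavoidable} and \cref{lemma:1d_MPA} together with the observation that the per-node choices of $g_k$ are mutually independent and hence never conflict.
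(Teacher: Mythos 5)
Your proposal is correct and takes essentially the same route as the paper: the paper likewise uses \cref{lemma:1d_MPA} to pick, node by node, a one-dimensional diffeomorphism transporting $\widetilde{\PP}_k$ (or $\PP_k$ at a root) onto the fixed distribution, and then transports all remaining mechanisms exactly as in the proof of \cref{lem:unavoidable} (via \cref{lem:preserves_mecha}) to conclude $\QQ^k=\Tb_*\PP^k$ and $\fb'^{-1}\circ\fb=\Tb\in\Scal_{\text{scaling}}$. The only cosmetic difference is that you invoke \cref{lem:unavoidable} and inspect its construction, whereas the paper redoes that change-of-variables computation inline; the mathematical content is identical.
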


In practice, this means that in order to learn $\fb'$ and $(\QQ^k)_{k \in [d]}$ with $d$ perfect stochastic interventions, even if we fix all the intervention mechanisms $(\widetilde{\mathbb{Q}}_{k})_{k \in [d]}$, we can still learn a true latent model up to scaling functions. Equivalently, we could also fix instead $\{\QQ_k| \pa(k) =\emptyset \} \cup \{\widetilde{\QQ}_k | \pa(k) \neq \emptyset\}$.

\begin{proof}
For any $k \in [d]$, $\mathbb{P}^{k} =\widetilde{\mathbb{P}}_{k} \prod_{j\neq k} \mathbb{P}_{j}$. Without loss of generality by exchanging $\QQ_k$ with $\widetilde{\QQ}_k$, suppose we are given $\widetilde{\mathbb{Q}_{k}}$, by Lemma \ref{lemma:1d_MPA}, there are two possible diffeomorphisms for $T_{k}$ st. $\left(T_{k}\right)_{*} \widetilde{\mathbb{P}_{k}}=\widetilde{\mathbb{Q}_{k}}$. Choose one of them arbitrarily. Set $\Tb:=\left(T_{k}\right)_{k \in [d]}$.

Define $\mathbb{Q}^{0}:=\Tb_{*} \mathbb{P}^{0}$. By Lemma \ref{lem:preserves_mecha},
$\forall i \in [d]$, $\forall z_i \in \mathbb{R}$, $\forall \zb_{\text{pa}(i)} \in \mathbb{R}^{\#pa(i)}$, 
$$p_i\left(z_i \mid \zb_{\text{pa}(i)}\right)=q_i\left(T_{i}\left(z_i\right) \mid \bm{T}_{\text{pa}(i)}\left(\zb_{\text{pa}(i)}\right)\right)\left|T'_{i}\left(z_i\right)\right|.$$
Multiply the causal mechanisms of all $i\in [d]\setminus \{k\}$ and the intervened mechanism of $k$, we get 
$$\widetilde{p}_k(z_k) \prod_{i\neq k} p_i\left(z_i \mid \zb_{\text{pa}(i)}\right)= \widetilde{q}_k \left(T_k\left(z_k\right)\right) \left|T'_k(z_k)\right| \prod_{i\neq k} q_i\left( T_{i}\left(z_i\right) \mid \bm{T}_{\text{pa}(i)}\left(\zb_{\text{pa}(i)}\right)\right)\left|T'_{i}\left(z_i\right)\right|,$$
which is equivalent to $\widetilde{p}^k(\zb)=\widetilde{q}^k(\bm{T}(\zb))\left|\operatorname{det}\Tb(\zb)\right|$. This is the change of variable formula of the diffeomorphism $\Tb$ in $\RR^d$, and implies $\QQ^k= \Tb_* \PP^k$.

Define $\fb^{\prime}:=\Tb \circ \fb^{-1}$, then $\fb^{\prime-1} \circ \fb=\Tb \in \Scal_{\text {scaling}}$.
\end{proof}
\textbf{Remark} If $G$ is non trivial, given $\left(G, \fb, \mathbb{P}^{0}\right)$, in general there does not exist $\mathbb{Q}^{0} \in P_{G}$, $ \fb^{\prime} \in \mathcal{F}$ s.t. $\fb_{*} \mathbb{P}^{0}=\fb_{*}^{\prime} \mathbb{Q}^{0} $, $ \fb^{\prime-1} \circ \fb \in \Scal_{\text {scaling}}$.

To see why, consider the graph $z_{1} \rightarrow z_{2}$. Fix $\mathbb{Q}_{1}$, there are only two possible diffeomorphisms for $T_{1}$ s.t. $\mathbb{Q}_{1}=\left(T_{1}\right)_{*} \mathbb{P}_{1}$ by \cref{lemma:1d_MPA}.

If $\mathbb{Q}_{2}\left(Z_{2} \mid T_1(z_{1})\right)$ is fixed, to find $T_{2}$ s.t.

$$
\left(T_{2}\right)_{*} \mathbb{P}_{2}\left(Z_{2} \mid z_{1}\right)=\mathbb{Q}_{2}\left(Z_{2} \mid T_1(z_{1})\right) \quad \forall z_1 \in \RR,
$$

by Lemma \ref{lemma:1d_MPA} , the only possible $T_{2}$ are $G\left(\cdot \mid T_{1}\left(z_{1}\right)\right)^{-1} \circ F\left(\cdot \mid z_{1}\right)$ and $\bar{G}\left(\cdot \mid T_{1}\left(z_{1}\right)\right)^{-1} \circ F\left(\cdot \mid z_{1}\right)$. In general, these two functions depend on $z_{1}$. For example if $\mathbb{P}_1\left(Z_{1}\right) \indep \mathbb{P}_2\left(Z_{2}\right)$,
$\mathbb{Q}_2\left(Z_{2} \mid z_{1}\right)\sim \mathcal{N}\left(z_{1}, 1\right)$ then $T_{2}$ depend on $z_{1}$. Namely, There is no $\Tb \in \Scal_{\text {scaling}} $ s.t. $\mathbb{Q}^{0}=\Tb_{*} \mathbb{P}^{0}$.

\subsection{Normalizing flows for nonparametric CBN}\label{app:nonparamNF}
In \cref{app:rebuttal_experiments}, we learn a nonlinear mixing function and nonparametric causal mechanisms $(p^{\phi}_i(z_{i}\mid \zb_{\pa(i)}))_{i\in [d]}$. We model $p^{\phi}$ by a normalizing flow $\hb^{\phi}: \RR^d \to \RR^d$ from the space of $\Zb$ to the space of $\Ub$, while fixing $\PP_{\ub}$. The goal is to learn $\hb^{\phi-1}$, the reduced form of an SCM which pushes forward the exogenous variables $\Ub$
to every causal mechanism $(p^{\phi}_i(z_{i}\mid \zb_{\pa(i)}))_{i\in [d]}$. ($\hb^{\phi}$ is denoted as $\hb^{\text{CBN}}_{\phi}$ in \cref{app:rebuttal_experiments}). In the following, we prove that such $\hb^{\phi}$ exists in a special case.

\begin{proposition}
    For any CBN $\{\PP^{\phi}_i(Z_i | \Zb_{\pa(i)})\}_{i\in[d]}$ Markov and faithful to a fully connected DAG $G$, with $\PP^{\phi}$ absolutely continuous and smooth in $\RR^d$, and for any independent component joint distribution $\PP_{\ub}$ absolutely continuous and smooth in $\RR^d$, there exists a normalizing flow $\hb^{\phi}: \RR^d\to \RR^d$ such that
    \begin{itemize}
        \item there exist a permutation matrix $\Pb$ and an autoregressive normalizing flow $\Tilde{\hb}$ such that $\hb^{\phi}= \Pb \circ \Tilde{\hb}$ (autoregressive in the sense that $D\Tilde{\hb}(\zb)$ is lower triangular for all $\zb\in \RR^d$);
        \item $\hb^{\phi}_* \PP^{\phi} = \PP_{\ub}$;
        \item for all $i\in [d]$,
        $p^{\phi}_i(z_{i}|\zb_{\pa(i)}) =\left|\frac{\partial h^{\phi}_{i}}{\partial z_{i}}(\zb)\right|p_{\ub_i} \left(h^{\phi}_i(\zb)\right)$.
    \end{itemize}
\end{proposition}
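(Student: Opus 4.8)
The plan is to construct $\hb^\phi$ explicitly as the triangular Knothe--Rosenblatt transport between $\PP^\phi$ and $\PP_\ub$, exploiting that a complete DAG admits a unique topological order. Since $G$ is fully connected and acyclic, its vertex set is totally ordered by reachability (it is a transitive tournament), so there is a unique topological sort; let $\Pb$ be the permutation matrix recording it. After relabelling the coordinates accordingly, we may assume $\pa(i)=[i-1]$ for every $i\in[d]$, i.e.\ $p^\phi(\zb)=\prod_{i=1}^d p^\phi_i\bigl(z_i\mid \zb_{[i-1]}\bigr)$. I will build an autoregressive flow $\Tilde\hb=(h_1,\dots,h_d)$ in these coordinates and set $\hb^\phi=\Pb\circ\Tilde\hb$, which gives the claimed decomposition at once.

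For the building blocks, write $F_i(\cdot\mid\zb_{[i-1]})$ for the conditional c.d.f.\ of $Z_i$ given $\Zb_{[i-1]}=\zb_{[i-1]}$ under $\PP^\phi$, and $F_{\ub_i}$ for the c.d.f.\ of the $i$-th (independent) coordinate of $\PP_\ub$. Assuming full support (as for $\Pcal_G$ throughout the paper), each $F_i(\cdot\mid\zb_{[i-1]})$ and each $F_{\ub_i}$ is a smooth strictly increasing bijection $\RR\to(0,1)$, so I may define
\begin{equation*}
h_i(\zb_{[i]}) := F_{\ub_i}^{-1}\!\bigl(F_i(z_i\mid\zb_{[i-1]})\bigr),\qquad i\in[d].
\end{equation*}
By smoothness of $p^\phi$ and positivity, $F_i$ is jointly smooth in $(\zb_{[i-1]},z_i)$ (integration preserves smoothness, and the conditional density is a ratio of smooth positive functions), hence $h_i$ is smooth and, for each fixed $\zb_{[i-1]}$, a diffeomorphism of $\RR$ in $z_i$ with positive derivative; combined with the classification of one-dimensional measure-preserving maps in \cref{lemma:1d_MPA}, this is exactly what makes $h_i(\zb_{[i-1]},\cdot)$ push $\PP^\phi_i(\cdot\mid\zb_{[i-1]})$ forward to $\PP_{\ub_i}$. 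Consequently $\Tilde\hb$ has everywhere lower-triangular Jacobian with strictly positive diagonal, and a coordinate-by-coordinate induction (each $h_i(\zb_{[i-1]},\cdot)$ is a bijection of $\RR$) shows $\Tilde\hb$ is a bijection of $\RR^d$, hence a $\Ccal^\infty$-diffeomorphism. Property (3) then follows by differentiating the display in $z_i$ and using $(F_{\ub_i}^{-1})'=1/\,p_{\ub_i}(F_{\ub_i}^{-1}(\cdot))$, which yields $\partial h_i/\partial z_i = p^\phi_i(z_i\mid\zb_{[i-1]})/\,p_{\ub_i}(h_i(\zb_{[i]}))>0$.

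For property (2) I would argue by induction on $i$ that $(h_1,\dots,h_i)_*\PP^\phi_{[i]} = \PP_{\ub_1}\otimes\cdots\otimes\PP_{\ub_i}$: the base case $i=1$ is the probability integral transform, and in the inductive step I compute the pushforward density through the triangular map, substitute $p^\phi_j(z_j\mid\zb_{[j-1]})=|\partial h_j/\partial z_j|\,p_{\ub_j}(h_j(\zb_{[j]}))$ from property (3), and cancel the Jacobian factors, leaving exactly $\prod_{j\le i}p_{\ub_j}(u_j)$; since $\PP_\ub$ is a product distribution, this is the desired independent target. Taking $i=d$ gives $\Tilde\hb_*\PP^\phi=\PP_\ub$, hence $\hb^\phi_*\PP^\phi=\PP_\ub$.

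The routine ingredients (probability integral transform, triangular change of variables) are standard; the two points needing care are (i) checking that the conditional c.d.f.\ $F_i(\cdot\mid\zb_{[i-1]})$ inherits \emph{joint} smoothness and strict monotonicity from the joint density, which is where the full-support assumption is essential — without it the conditional c.d.f.\ may fail to be invertible or smooth and the construction collapses — and (ii) the bookkeeping of the permutation $\Pb$, for which one must verify that a complete DAG has a unique topological order so that the autoregressive factorisation, and the statement of (2)--(3) in the original node indexing, is unambiguous. I expect (i) to be the main technical obstacle.
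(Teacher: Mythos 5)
Your proof is correct, and the map you build is in substance the same object as the paper's: the paper defines $\hb^{\phi}:=\Gb^{-1}\circ\Fb$ as a composition of two Darmois transformations (one for $\PP^{\phi}$, one for $\PP_{\ub}$, both pushing to $\mathrm{Unif}(0,1)^d$), and since $\PP_{\ub}$ is a product distribution this composition is exactly your explicit Knothe--Rosenblatt map $h_i(\zb_{[i]})=F_{\ub_i}^{-1}\bigl(F_i(z_i\mid\zb_{[i-1]})\bigr)$. Where you diverge is in how the three claimed properties are established: the paper gets the pushforward property $\hb^{\phi}_*\PP^{\phi}=\PP_{\ub}$ for free from the Darmois construction and then recovers the conditional-density identity (property (3)) by an induction that repeatedly invokes the marginalization lemma (\cref{lem:n-1preserve_measure}) to peel off one coordinate at a time; you instead obtain (3) immediately by differentiating your explicit formula, $\partial h_i/\partial z_i=p^{\phi}_i(z_i\mid\zb_{[i-1]})/p_{\ub_i}(h_i(\zb_{[i]}))$, and then derive the pushforward property (2) by multiplying these relations and cancelling Jacobian factors. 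Your route buys a shorter, more self-contained argument for (3) that does not rely on the auxiliary lemma, at the cost of having to verify joint smoothness and invertibility of the conditional c.d.f.s by hand (which, as you note, rests on the full-support assumption the paper imposes on $\Pcal_G$ and which the Darmois construction also implicitly needs); the paper's route keeps the construction abstract and pushes the analytic work into the already-proved marginalization lemma. Your handling of the permutation via the unique topological order of a fully connected DAG matches the paper's ``without loss of generality'' relabelling, and your use of faithfulness to identify $\pa(i)$ with $[i-1]$ is the same as the paper's.
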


\begin{proof}
Without loss of generality by applying a permutation matrix $\Pb$, we can suppose that the index of $G$ preserves the partial order induced by $E(G)$: $i<j$ if $(i,j)\in E(G)$. In the following, we can thus replace $\Tilde{\hb}$ by $\hb^{\phi}$. Since $\PP^{\phi}$ is faithful to a fully connected graph, $\Zb_{\pa(i)}= \Zb_{<i}$.

By \cite{papamakarios2021normalizing} [Sec 2.2], there exists a Darmois transformation $\Fb:\RR^d \to (0,1)^d$ s.t. $\Fb_* \PP^{\phi} = \text{Unif}(0,1)^d$, and a Darmois transformation $\Gb:\RR^d \to (0,1)^d$ s.t. $\Gb_* \PP_{\ub} = \text{Unif}(0,1)^d$. Define $\hb^{\phi}:= \Gb^{-1} \circ \Fb$, then $\hb^{\phi}_* \PP^{\phi} = \PP_{\ub}$.

Notice that the Darmois transformations $\Fb$ and $\Gb$ are autoregressive normalizing flows, i.e. $D\Fb(\zb)$ and $D\Gb(\zb)$ are lower triangular for all $\zb\in \RR^d$. Since the autoregressive normalizing flows are closed under composition and inverse, $D(\hb^{\phi})(\zb)$ is also lower triangular for all $\zb\in\RR^d$. By induction using \cref{lem:n-1preserve_measure}, it is direct to prove that for all $i\in [d]$, $\hb^{\phi}_{[i]}$ is a diffeomorphism from the first $i$ variables of $Z$ to the first $i$ variables of $\Ub$.

By the definition of $\hb^{\phi}$, $\PP_{\ub} = \hb^{\phi}_* \PP^{\phi}$. By the change of variable formula, $p^{\phi}(\zb) =\left|\det D\hb^{\phi}(\zb)\right|p_{\ub}\left(\hb^{\phi}(\zb)\right)$. 

We now prove by induction that for all $i\in [d]$,
$p^{\phi}_i(z_{i}|\zb_{\pa(i)}) =\left|\frac{\partial h^{\phi}_{i}}{\partial z_{i}}(\zb)\right|p_{\ub_i}\left(h^{\phi}_i(\zb)\right)$.
For $i=1$, by the assumption of indices in $G$, $z_1$ has no parent. Since $\hb^{\phi}_{1}$ is a diffeomorphism, ${{p^{\phi}_1\left(z_{1}\right)=\left|\frac{\partial h_{1}^{\phi}}{\partial z_{1}}(z_1)\right|p_{\ub_1}\left(h_1^{\phi}(z_1)\right)}}$.

Suppose the property is true for $i$: 
\begin{equation}\label{eq:app_nonparamNF_1}
p^{\phi}_i(z_{i}|\zb_{\pa(i)}) =\left|\frac{\partial h^{\phi}_{i}}{\partial z_{i}}(\zb)\right|p_{\ub_i}\left(h^{\phi}_i(\zb)\right)
\end{equation}
Then for $i+1$, since $(\hb^{\phi})_{[i+1]}$ is a diffeomorphism with lower triangular Jacobian, we have 
\begin{equation}\label{eq:app_nonparamNF_2}
\prod_{j=1}^{i+1} p_j^{\phi}(z_{j}|\zb_{\pa(j)}) = \left|\det D\hb^{\phi}_{[i+1]}(\zb)\right|p_{_{\ub_{[i+1]}}}\left(\hb^{\phi}_{[i+1]}(\zb)\right) = \prod_{j=1}^{i+1}\left|\frac{\partial h^{\phi}_{j}}{\partial z_{j}}(\zb)\right|p_{\ub_j}\left(h^{\phi}_j(\zb)\right)
\end{equation}
where the last equality is because $D\hb^{\phi}$ is lower triangular and $\PP_{\ub}$ is independent components. By dividing \cref{eq:app_nonparamNF_2} by \cref{eq:app_nonparamNF_1}, we obtain the result.
\end{proof}

In the method above, we assume that the CBN is Markov and faithful to a fully connected graph. Devising a more general method for sparse graphs is left for future work.

We note that the concurrent work by~\cite{javaloy2023causal} may also be used to learn nonparametric latent CBNs. One main difference is that in our normalizing flow, every coordinate of $\hb^{\phi-1}$ models separately a causal mechanism, which is not the case in \cite{javaloy2023causal}. In our objective function (\ref{eq:likelihood_k}), we need to model and learn every causal mechanism individually.

\section{Details Experiments}\label{app:experiments}

\subsection{Synthetic Data Generation}
\label{app:synth_data}
\textbf{Directed Acyclic Graph (DAG).} In order to generate data, we begin by sampling a random DAG $G\sim \mathbb{Q}_G$, where $\mathbb{Q}_G$ is a distribution over DAGs. 
The edge density of the DAG is set to $0.5$ in topological order, meaning that the edges in the DAG are constrained to follow the variable index order: an edge $Z_i \to Z_j$ can only exist if $j>i$. 
To construct the DAG, we individually sample each potential edge $Z_i \to Z_j$ with $j>i$ with a probability of $0.5$, while all other edges are assigned a probability of $0$. 
For experiments conducted in the CauCA setting with non-trivial graphs (i.e., not empty), we reject and redraw any sampled DAGs that do not contain any edges. 

\textbf{Causal Bayesian network (CBN).}
To sample data from a CBN, we start by drawing the parameters of a linear Gaussian Structural Causal Model (SCM) with additive noise:
\begin{equation}
\label{eq:scm_lin_gaus}
Z_i := \sum_{j \in \pa(i)} \alpha_{i, j} Z_j + \varepsilon_i,
\end{equation}
where the linear parameters are drawn from a uniform distribution $\alpha_{i, j} \sim \mathrm{Uniform}(-a, a)$ and the noise variable is Gaussian $\varepsilon_i \sim \mathcal{N}(0, 1)$. 
The {\em signal-to-noise ratio} for the SCM, denoted as $a/\mathrm{std}(\varepsilon_i) = a$, describes the strength of the dependence of the causal variables relative to the exogenous noise. For most experiments, the signal-to-noise ratio is set to 1. In \cref{fig:experiments} (g), we explore values ranging from 1 to 10. To specify the latent CBNs, we can define the conditional distributions entailed by the SCMs defined as in~\cref{eq:scm_lin_gaus}, see also~\cref{sec:mod_arch} and~\cref{eq:latent_cbn_distr}.

\textbf{Generating interventional datasets.} For a given CBN, we generate $d+1$ related datasets: one unintervened (observational) dataset and $d$ interventional datasets, where the CBN was modified by a perfect stochastic intervention on one variable (i.e., one dataset for each variable in the CBN).
W.l.o.g.\ we assume that the $k^\mathrm{th}$ variable was intervened on in dataset $k$; $k=0$ denotes the observational dataset.
The intervention is applied in the SCM by removing the influence of the parent variables and changing the exogenous noise by shifting its mean up or down.
Hence, for dataset $k$ we have
\begin{equation}
    Z_k := \tilde \varepsilon_k,
    \quad \text{with} \quad
    \tilde \varepsilon_k
    \sim
    \mathcal N (\mu, 1)
    ,
\end{equation}
where the mean of the noise $\mu$ is 
 uniformly sampled from $\{\pm 2\}$ and
fixed within each dataset.
Each dataset comprises a total of $200,000$ data points, resulting in $(d+1) \times 200,000$ data points in total for each CBN.

\textbf{Mixing function.}
The mixing function takes the form of a multilayer perceptron $\fb = \sigma \circ \Ab_M \circ \ldots \circ \sigma \circ \Ab_1$, where $\Ab_m \in \mathbb{R}^{d \times d}$ for $m \in \llbracket 1, M\rrbracket$ denote invertible linear maps, and $\sigma$ is an element-wise invertible nonlinear function.
The elements of the linear maps are sampled independently $(\Ab_m)_{i,j}\sim \mathrm{Uniform}(0, 1)$ for $i, j \in \llbracket 1, d\rrbracket$.
A sampled matrix $\Ab_m$ is rejected and re-drawn if $|\det \Ab_m| < 0.1$ to rule out linear maps that are (close to) singular.
The invertible element-wise nonlinearity is a leaky-tanh activation function:
\begin{equation}
    \sigma(x) = \tanh(x) + 0.1 x,
\end{equation}
as used in~\citep{gresele2020relative}.

\subsection{Model architecture}
\label{sec:mod_arch}
\textbf{Normalizing flows.}
We use normalizing flows \citep{papamakarios2021normalizing} to learn an  encoder $\gb_{\bm{\theta}}:\RR^d \rightarrow \RR^d$. 
Normalizing flows model observations $\xb$ as the result of an invertible, differentiable transformation $\gb_{\bm{\theta}}$ on some base variables $\zb$,
\begin{equation}
    \xb = \gb_{\bm{\theta}} (\zb).
\end{equation}
We apply a series of $L=12$ such transformations $\gb^l_{\bm{\theta^l}}:\RR^d \rightarrow \RR^d$, which we refer to as \emph{flow layers}, such that the resulting transformation is given by $\gb_{\bm{\theta}} = \gb^L_{\bm{\theta^L}} \circ \ldots \circ \gb^1_{\bm{\theta^1}}$.
We use Neural Spline Flows~\citep{durkan2019neural} for the invertible transformation, with a 3-layer feedforward neural network with hidden dimension $128$ and a permutation in each flow layer.

\textbf{Base distribution.}
We extend the typically used simple base distributions to encode information about the CBN.
We have one distribution per dataset $(\hat p^k_{\bm{\theta^k_\mathrm{CBN}}})_{k \in \llbracket 0, d \rrbracket}$ over the learned base noise variables $\zb$.
The conditional density of latent variable $i$ in dataset $k$ is given by
\begin{equation}
    \label{eq:latent_cbn_distr}
    \hat p^k_{\bm{\theta^k_\mathrm{CBN}}}(z_i \mid \zb_{\pa(i)}) = \mathcal N \left( \sum_{j\in \pa(i)} \hat{\alpha}_{i,j} z_j, \hat{\sigma}_i \right),
\end{equation}
when $i\ne k$, i.e., when variable $i$ is not intervened on.
For $i=k$, we have 
\begin{equation}
    \hat p^k_{\bm{\theta^k_\mathrm{CBN}}}(z_i) = \mathcal N (\hat{\mu}_i^k, \hat{\sigma}^k_i).
\end{equation}
In summary, the base distribution parameters are the parameters of the linear relationships between parents and children in the CBN $(\hat{\alpha}_{i,j})_{i, j \in \llbracket 1, d \rrbracket}$, the standard deviations for each variable in the observational setting $(\hat{\sigma}_i)_{i \in \llbracket 1, d \rrbracket}$, and mean and standard deviation for the intervened variable in each dataset $(\hat{\mu}_i^k, \hat{\sigma}^k_i)_{i, k \in \llbracket 1, d \rrbracket}$.

\textbf{Linear baseline model.}
For the linear baseline models shown in \cref{fig:experiments} (a, b, e, f) we replace the nonlinear transformations $(\gb^l_{\bm{\theta^l}})_{l \in \llbracket 1, L \rrbracket}$ by a single linear transformation.
The base distribution stays the same.

\textbf{Graph-misspecified model.}
In order to test the impact of providing knowledge about the causal structure, we compare the CauCA model to one that assumes the latents are independent.
This is achieved by setting $\hat{\alpha}_{i,j} = 0$ $\forall i, j \in \llbracket 1, d \rrbracket$ in the base distribution~\eqref{eq:latent_cbn_distr}. 

\subsection{Training and model selection}

\textbf{Training parameters.}
We use the ADAM optimizer~\citep{kingma2014adam} with cosine annealing learning rate scheduling, starting with a learning rate of $5\times 10^{-3}$ and ending with $1\times 10^{-7}$.
We train the model for $50$--$200$ epochs with a batch size of $4096$.
The number of epochs was tuned manually for each type of experiment to ensure reliable convergence of the validation log probability.

\textbf{Pooled objective.}
The learning objective described in \cref{sec:experiments} is using the pooled rather than the multi-objective formulation.
In \cref{app:pooled_objective}, we prove that for our problem the two are equivalent.

\textbf{Fixing CBN parameters.}
As explained in \cref{prop:expressivity}, we can fix some of the CBN parameters w.l.o.g.\ 
In our case, we fix the noise parameters for intervened mechanisms of non-root variables and observational mechanisms of root variables.

\textbf{Model selection.}
For each drawn latent CBN, we train three models with different initializations and select the model with the highest validation log probability at the end of training.

\textbf{Compute.}
Each training run takes $2$--$8$ hours on NVIDIA RTX-6000 gpus. 
For the experiments shown in the main paper, we performed $450$ training runs ($30$ per violin plot / point in \cref{fig:experiments} (g)) which sums up to around $2250$ compute hours (assuming an average run time of $5$ hours).

\section{Nonparametric Experiments}\label{app:rebuttal_experiments}
The main experiments in \cref{sec:experiments} were made under the restriction that the ground-truth CBN and the learned latent CBN were assumed to be linear Gaussian.
In the experiments shown here, we relax those restrictions.
In the following, we describe the differences with respect to the main experiments.
All other settings and parameters are the same as described in \cref{sec:experiments} and \cref{app:experiments}.

\subsection{Synthetic Data Generation}

\textbf{Causal Bayesian network (CBN).}
To sample data from a CBN, we draw the parameters of a nonlinear non-additive-noise Structural Causal Model (SCM):
\begin{equation}
Z_i := h^\mathrm{loc}(Z_\pa(i)) + h^\mathrm{scale}(Z_\pa(i))\,  \varepsilon_i,
\end{equation}
where the location and scale functions $h^\mathrm{loc}$, $h^\mathrm{scale} : \RR^{\#\pa(i)} \to \RR$ are parameterized by random 3-layer neural networks (similar to the random mixing function) and the noise variable is Gaussian $\varepsilon_i \sim \mathcal{N}(0, 1)$.

\subsection{Model architecture}
\textbf{Base distribution.}
We extend the typically used simple base distributions to encode information about the CBN.
We train a second normalizing flow to learn a mapping $\hb^\mathrm{CBN}_{\bm{\phi}}:\RR^d \rightarrow \RR^d$ from the latent causal variables $\zb$ to exogenous noise variables $\ub$, which we explain in \cref{app:nonparamNF}.
We encode knowledge about the causal graph in the base distribution by passing the latent variables in causal order to the invertible transformation $\hb^\mathrm{CBN}$, whose Jacobian is lower triangular.
This ensures the correct causal relationships among the elements of $\zb$.
During training, we fix the distribution of $\ub$ and the intervened upon latent variables, i.e.\ $z_k$ for dataset $k$, to be standard normal.
We use a similar architecture based on Neural Spline Flows, with one difference: we omit the permutation layer, which would violate the topological order of the variables.

\subsection{Results}
\begin{figure}
    \centering
      \includegraphics[width=0.5\textwidth,
      ]{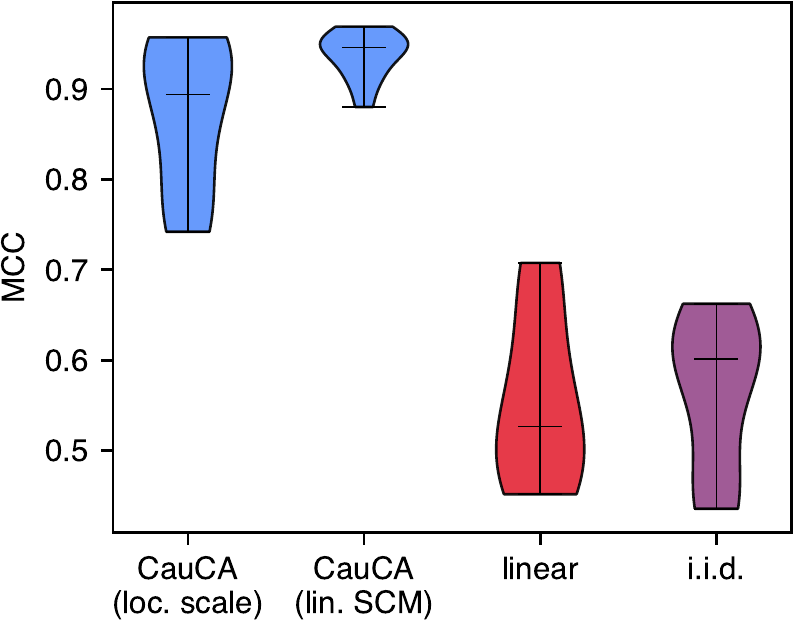}
    \caption{\small \looseness-1 \textbf{Experimental results with nonparametric model.}
    The figure shows the mean correlation coefficients (MCC) between true and learned latents for Causal Component Analysis (CauCA) experiments.
    The first two violin plots show the fully nonparametric model when the ground truth latent CBN is generated by a location-scale model and for the case with a linear SCM generating the ground truth CBN.
    Misspecified models assuming a linear encoder function class and a naive (single-environment) unidentifiable normalizing flow with an independent Gaussian base distribution (labelled \textit{i.i.d.}) are compared.
    The misspecified models are trained on a location-scale CBN.
    All violin plots show the distribution of outcomes for 10 pairs of CBNs and mixing functions.
    }
    \label{fig:nonparametric_experiments}
    \vspace{-.75em}
\end{figure}

In line with the experiments presented in \cref{sec:experiments}, we compare the nonparametric model ({\em blue}) to misspecified baselines: one with a correctly specified nonparametric latent model but employing a linear encoder ({\em red}), and a model with a nonlinear encoder but assuming a causal graph with no arrows ({\em orange}).
The results shown in \cref{fig:nonparametric_experiments} show that the nonparametric model accurately identifies the latent variables, benefiting from both the nonlinear encoder and the explicit modelling of causal dependence.
When we compare the nonparametric model trained on the location-scale data generating process ({\em left blue violin}) to one trained on the linear Gaussian process ({\em right blue violin}), we observe that in the location-scale case it seems to be more challenging for the model to uncover the true latent factors.

\end{document}